  \DeclarePairedDelimiter\floor{\lfloor}{\rfloor}
  \DeclareMathOperator{\cov}{Cov}
  \DeclareMathOperator{\var}{Var}
  \DeclareMathOperator{\tr}{tr}
  \newcommand{\abs}[1]{\left\lvert{#1}\right\rvert}
  \newcommand\R{\mathbb{R}}
  \newcommand\N{\mathbb{N}}
  \newcommand\Rn{\R^n}
  \newcommand\given{\,{|}\,}
  \newcommand\set[1]{\{#1\}}
  \newcommand{\col}[1]{\mathbf{#1}}
  \newcommand{\nset}[1]{\llbracket #1\rrbracket}
  \DeclareMathOperator{\E}{\mathbb E}
  \newtheorem{theorem}{Theorem}
  \newtheorem{proposition}[theorem]{Proposition}
  \newtheorem{definition}[theorem]{Definition}
  \theoremstyle{definition}
  \newtheorem{estimator}{Estimator}
  \newtheorem{note}[theorem]{Note}
  \newtheorem*{note*}{Note}
  \numberwithin{theorem}{section} 
  \numberwithin{exercise}{section} 
  \numberwithin{problem}{section} 
  \numberwithin{solution}{exercise} 
\newcommand{\iwobjct}[1]{\ensuremath{\mathcal L_{#1}}} 
\newcommand{\iwestim}[1]{\ensuremath{\hat{\mathcal L}_{#1}}}
\newcommand{\approxiw}[1]{\ensuremath{\hat{\mathcal L}_{#1}^{\mathcal A}}}
\newcommand{\approxiwsecond}[1]{\ensuremath{\hat{\mathcal L}_{#1}^{\mathcal A,2}}}
\renewcommand{\L}{\mathcal{L}}
\newcommand{\G}{\mathcal{G}}
\renewcommand{\S}{\mathcal{S}}
\newtheoremstyle{dotless}{}{}{\upshape}{}{\bfseries}{}{ }{}
\theoremstyle{dotless}
\newenvironment{running-example}
  {\pushQED{\qed}\examplex}
  {\popQED\endexamplex}
\definecolor{Set-2-1}{RGB}{102,194,165}
\definecolor{Set-2-2}{RGB}{252,141,98}
\definecolor{Set-2-3}{RGB}{141,160,203}
\definecolor{Set-5-0}{RGB}{237,28,36}
\definecolor{Set-5-1}{RGB}{146,39,143}
\definecolor{Set-5-2}{RGB}{0,166,81}
\definecolor{Set-5-3}{RGB}{0,174,239}
\definecolor{Set-6-0}{RGB}{73,13,0}
\definecolor{Set-6-1}{RGB}{138,3,79}
\definecolor{Set-6-2}{RGB}{0,89,0}
\definecolor{Set-6-3}{RGB}{0,0,120}
\definecolor{standard-IW}{HTML}{4053d3}
\definecolor{complete-U}{HTML}{ddb310}
\definecolor{approx.}{HTML}{b51d14}
\definecolor{approx.-2nd}{HTML}{00beff} 
\definecolor{random-subsets}{HTML}{fb49b0}
\definecolor{permuted}{HTML}{00b25d}
\definecolor{permuted-100}{HTML}{ff9287}  
\definecolor{mean-difference}{HTML}{d163ed}
\definecolor{complete-dreg}{HTML}{00c6f8} 
\definecolor{dreg}{HTML}{cacaca} 
\definecolor{permuted-dreg}{HTML}{d163e6} 
\newcommand{\kibitz}[2]{\ifnum\Comments=0\textcolor{#1}{{\small #2}}\fi}
\def\@fnsymbol#1{\ensuremath{\ifcase#1\or \ddagger\or \dagger\or
   \mathsection\or \mathparagraph\or \|\or **\or \dagger\dagger
   \or \ddagger\ddagger \else\@ctrerr\fi}}
\title{U-Statistics for Importance-Weighted Variational Inference}
\author{\name{Javier Burroni}\email{jburroni@cs.umass.edu}\\
  \addr University of Massachusetts Amherst 
  \AND
  \name{Kenta Takatsu}\thanks{Work done while at UMass.}\email{ktakatsu@andrew.cmu.edu}\\
  \addr Carnegie Mellon University
  \AND
  \name{Justin Domke}\email{domke@cs.umass.edu}\\ 
  \addr University of Massachusetts Amherst
  \AND
  \name{Daniel Sheldon}\email{sheldon@cs.umass.edu}\\
  \addr University of Massachusetts Amherst
  }
\begin{document}

\maketitle

\begin{abstract}
  We propose the use of U-statistics to reduce variance for gradient estimation in importance-weighted variational inference.
The key observation is that, given a base gradient estimator that requires $m > 1$ samples and a total of $n > m$ samples to be used for estimation, lower variance is achieved by averaging the base estimator on overlapping batches of size $m$ than  disjoint batches, as currently done.
We use classical U-statistic theory to analyze the variance reduction, and propose novel approximations with theoretical guarantees to ensure computational efficiency.
We find empirically that U-statistic variance reduction can lead to modest to significant improvements in inference performance on a range of models, with little computational cost.

\end{abstract}

\section{Introduction}
An important recent development in variational inference (VI) is the use of ideas from Monte Carlo sampling to obtain tighter variational bounds~\citep{Burda_2016_ImportanceWeightedAutoencoders,maddison2017filtering,le2017auto,naesseth2018variational,domke2019divide}.
\citet{Burda_2016_ImportanceWeightedAutoencoders} first introduced the \emph{importance-weighted autoencoder} (IWAE), a deep generative model that uses the \emph{importance-weighted evidence lower bound} (IW-ELBO) as its variational objective.
The IW-ELBO uses $m$ samples from a proposal distribution to bound the log-likelihood more tightly than the conventional evidence lower bound (ELBO), which uses only $1$ sample.
Later, the IW-ELBO was also connected to obtaining better approximate posterior distributions for pure inference applications of VI~\citep{cremer2017reinterpreting,domke2018importance}, or ``IWVI''. Similar connections were made for other variational bounds~\citep{naesseth2018variational,domke2019divide}.

The IW-ELBO is attractive because, under certain assumptions [see \cite{Burda_2016_ImportanceWeightedAutoencoders,domke2018importance}], it gives a tunable knob to make VI more accurate with more computation.
The most obvious downside is the increased computational cost (up to a factor of $m$) to form a single estimate of the bound and its gradients.
A more subtle tradeoff is that the signal-to-noise ratio of some gradient estimators degrades with  $m$~\citep{rainforth2018tighter}, which makes stochastic optimization of the bound harder and might hurt overall inference performance.

To take advantage of the tighter bound while controlling variance, one can average over $r$ independent replicates of a base gradient estimator~\citep{rainforth2018tighter}.
This idea is often used in practice and requires a total of $n=rm$ samples from the proposal distribution.

Our main contribution is the observation that, whenever using $r > 1$ replicates, it is possible to reduce variance with little computational overhead using ideas from the theory of U-statistics.
Specifically, instead of running the base estimator on $r$ independent batches of $m$ samples from the proposal distribution and averaging the result, using the same $n = rm$ samples we can run the estimator on $k > r$ \emph{overlapping} batches of $m$ samples and average the result.
In practice, the extra computation from using more batches is a small fraction of the time for model computations that are already required to be done for each of the $n$ samples. Specifically:

\begin{itemize}[leftmargin=*,itemsep=2pt,topsep=2pt]
\item
  We describe how to take an $m$-sample base estimator for the IW-ELBO or its gradient and reduce variance compared to averaging over $r$ replicates by forming a \emph{complete U-statistic}, which averages the base estimator applied to every distinct batch of size $m$.
This estimator has the lowest variance possible among estimators that average the base estimator over different batches, but it is usually not tractable in practice due to the very large number of distinct batches.

\item
  We then show how to achieve most of the variance reduction with much less computation by using \emph{incomplete U-statistics}, which average over a smaller number of overlapping batches.
  We introduce a novel way of selecting batches and prove that it attains a $(1-1/\ell)$ fraction of the possible variance reduction with $k = \ell r$ batches.
  
\item
  As an alternative to incomplete U-statistics, we introduce novel and fast approximations for IW-ELBO complete U-statistics.
  The extra computational step compared to the standard estimator is a single sort of the $n$ input samples, which is very fast.
  We prove accuracy bounds and show the approximations perform very well, especially in earlier iterations of stochastic optimization.
  
\item
  We demonstrate on a diverse set of inference problems that U-statistic-based variance reduction for the IW-ELBO either does not change, or leads to modest to significant gains in black-box VI performance, with no substantive downsides.
  We recommend always applying these techniques for black-box IWVI with $r > 1$.

\item We empirically show that U-statistic-based estimators also reduce variance during IWAE training and lead to models with higher training objective values when used with either the standard gradient estimator or the doubly-reparameterized gradient (DReG) estimator \citep{tucker2018doubly}.
  
\end{itemize}

\section{Importance-Weighted Variational Inference}\label{sec:iwvi}

Assume a target  distribution $p(z, x)$ where $x \in \R^{d_{\mathcal X}}$ is observed and $z\in \R^{d_{\mathcal Z}}$ is latent.
VI uses the following evidence lower bound (ELBO), given approximating
distribution $q_\phi$ with parameters $\phi \in \R^{d_{\phi}}$, to approximate $\ln p(x)$~\citep{saul1996mean,blei2017variational}:
$$
\mathcal{L} =
 \E\left[\ln \frac{p(Z, x)}{q_\phi(Z)}\right] \leq \ln p(x), \qquad
 Z\sim q_\phi.
$$
The inequality follows from Jensen's inequality and the fact that $\E\left[\frac{p(Z,x)}{q_\phi(Z)}\right] = p(x)$, that is, the \emph{importance weight} $\frac{p(Z, x)}{q_\phi(Z)}$ is an unbiased estimate of $p(x)$.

\citet{Burda_2016_ImportanceWeightedAutoencoders} first showed that a tighter bound can be obtained by using the average of $m$ importance weights within the logarithm.
The importance-weighted ELBO (\textsl{IW-ELBO}) is
\begin{equation}\label{eq:IW-ELBO}
\iwobjct{m} =\E\Bigl[\ln \tfrac{1}{m}\sum_{i=1}^m\frac{p(Z_i, x)}{q_\phi(Z_i)}\Bigr] \leq \ln p(x),\qquad Z_{i} \stackrel{\mathrm{iid}}{\sim} q_\phi.
\end{equation}
This bound again follows from Jensen's inequality and the fact that $\frac{1}{m} \sum_{i=1}^m \frac{p(Z_i, x)}{q_\phi(Z_i)}$, which is the sample average of $m$ unbiased estimates, remains unbiased for $p(x)$.
Moreover, we expect Jensen's inequality to provide a tighter bound because the distribution of this sample average is more concentrated around $p(x)$ than the distribution of one estimate.
Indeed, $\iwobjct{m} \geq \iwobjct{m'}$ for $m > m'$ and $\iwobjct{m} \to \ln p(x)$ as $m \to \infty$~\citep{Burda_2016_ImportanceWeightedAutoencoders}. 

In importance-weighted VI (IWVI), the IW-ELBO $\L_m$ is maximized with respect to the variational parameters $\phi$ to obtain the tightest possible lower bound to $\ln p(x)$, which simultaneously finds an approximating distribution that is close in KL divergence to $p(z \given x)$~\citep{domke2018importance}.
In practice, the IW-ELBO and its gradients are estimated by sampling within a stochastic optimization routine.
It is convenient to define the \emph{log-weight} random variables $V_i = \ln p(Z_i, x) - \ln q_\phi(Z_i)$ for $Z_i \sim q_\phi$ and rewrite the IW-ELBO as 
\begin{equation}
  \label{eq:iwobjct}
  \iwobjct{m} =\E [h(V_{1:m})], \qquad h(v_{1:m}) = \ln \frac{1}{m} \sum_{i=1}^m e^{v_i}.
\end{equation}
Then, an unbiased IW-ELBO estimate with $r$ replicates, using $n=rm$ i.i.d.\ log-weights $(V_{j,i})_{j=1,i=1}^{r,m}$ is
\begin{equation}
  \label{eq:standard-IW-v1}
\hat \L_{n, m} = \frac{1}{r}\sum_{j=1}^r h(V_{j,1}, \ldots, V_{j,m}).
\end{equation}
In $\hat \L_{n, m}$, we use the subscript $n$ to denote the total number of input samples used for estimation and $m$ for the number of arguments of $h$, which determines the IW-ELBO objective to be optimized.

For gradient estimation, an unbiased estimate for the IW-ELBO gradient $\nabla_\phi \L_m$ is:
\begin{equation}
  \label{eq:IW-ELBO-gradient-est}
  \hat \G_{n,m} = \frac{1}{r}\sum_{j=1}^r g(Z_{j,1}, \ldots, Z_{j,m}), \qquad Z_{j,i} \stackrel{\mathrm{iid}}{\sim} q_\phi, 
\end{equation}
where $g(z_{1:m})$ is any one of several unbiased ``base'' gradient estimators that operates on a batch of $m$ samples from $q_\phi$, including the reparameterization gradient estimator \citep{kingma2013auto, rezende2014stochastic}, the doubly-reparameterized gradient (DReG) estimator \citep{tucker2018doubly}, or the score function estimator \citep{fu2006gradient, kleijnen1996optimization}.

\subsection{IWVI Tradeoffs: Bias, Variance, and Computation}\label{subsec:specifics-of-our-setting}

Past research has shown that by using a tighter variational bound, IWVI can improve both learning and inference performance, but also introduce tradeoffs such as those pointed out by \citet{rainforth2018tighter}. 
In fact, there are several knobs to consider when using IWVI that control its bias, variance, and amount of computation.
These tradeoffs can be complex so it is helpful to review the key elements as they relate to our setting, with the goal of understanding when and how IWVI can be helpful and providing self-contained evidence that the setting where U-statistics are beneficial can and does arise in practice.

Consider the task of maximizing an IW-ELBO objective \iwobjct{m} to obtain the tightest final bound on the log-likelihood. 
This requires estimating \iwobjct{m} and its gradient with respect to the variational parameters in each iteration of a stochastic optimization procedure.
Assume there is a fixed budget of $n$ independent samples per iteration, where, for convenience, $n = rm$ for an integer $r \geq 1$, as above.
The parameters $m$ and $r$ can be adjusted to control the estimation bias and variance at the cost of increased computation.
Specifically:
\begin{itemize}[leftmargin=*,topsep=2pt,itemsep=2pt]
  \item For a fixed $m$, by setting $r' > r$, we can reduce the variance of the estimator in Equation (\ref{eq:IW-ELBO-gradient-est}) 
  by increasing the computational cost to $r'm > rm$ samples per iteration.
\item For a fixed $r$, by setting $m' > m$ we can reduce the bias of the objective --- that is, the gap in the bound $\iwobjct{m'} \leq \ln p(x)$ --- by increasing the computational cost to $rm' > rm$ samples per iteration.
  
However, \citet{rainforth2018tighter} observed that increasing $m$ may also have the \emph{negative} effect of worsening the signal-to-noise (SNR) ratio of gradient estimation (but also that this could be counterbalanced by increasing $r$). Later, \citet{tucker2018doubly} showed that, for the DReG gradient estimator, increasing $m$ can \emph{increase} SNR; see also the paper by \citep{finke2019importance} for a detailed discussion of these issues.
\end{itemize}

Overall, while the effect of increasing the number of replicates $r$ to reduce variance is quite clear, the effects of increasing $m$ are sufficiently complex that it is difficult to predict in advance when it will be beneficial.
\begin{wrapfigure}{r}{0.5\textwidth}
  \centering
   \includegraphics[width=0.5\textwidth, trim={0.4cm 0cm 1.5cm .8cm},clip]{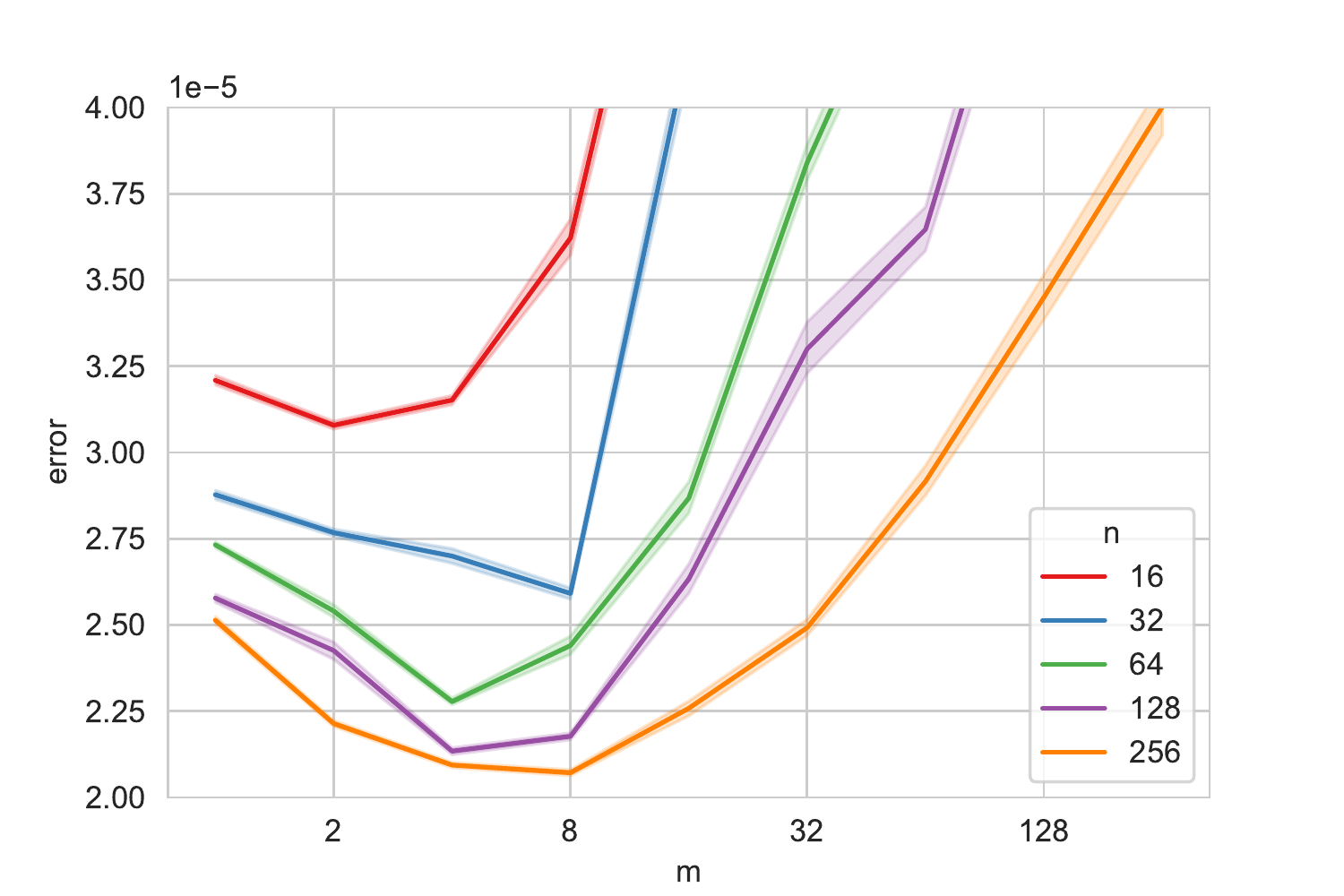}
   \caption{Distribution of the distance (error) between the distribution's covariance and that of the approximating distribution as a function of $m$ for different numbers of sampled points $n$, after training an approximating distribution using the standard IW-ELBO estimator.
   As we increase $n$, the optimal $m$ also increases, but at a slow rate.
   [See Section~\ref{sec:experiments} for details.] }
  \label{fig:dirichlet-error-IW}
  \vspace{-1.5\baselineskip}
\end{wrapfigure}

However, an important premise of our work is that the optimal setting of $m$ is often strictly between $1$ and $n$, since this is the setting where U-statistics can be used to reduce variance.
To understand this, we can first reason from the perspective of a user that is willing to spend more computation to get a better model. Assuming the variational bound is not already tight, this user can increase $m$ as much as desired to tighten the bound, and then, increase $r$ as needed to control the gradient variance. This argument predicts that, for a sufficiently large computational budget and complex enough model (so that the bound is not already tight with $m=1$), a value $m > 1$ will often be optimal.

From the perspective of a user with fixed computational budget, in which the number of optimization iterations is also being fixed, we could instead ask: ``for a fixed $n$, what are the optimal choices of $m$ and $r=n/m$''?
This question can be addressed empirically.
\citet{rainforth2018tighter} reported in their Figure~{6} that the extreme values, i.e., $m=1$ or $m=n$, were never the best values.
We found empirically that for some models, this result also holds for black box VI, i.e.,  the optimal choice of $m$ is strictly greater than $1$ and less than $n$, as shown in Figure~\ref{fig:dirichlet-error-IW}.
See also Figure~\ref{fig:random-dirichlet-no-iw}, which shows that similar observations apply when using the DReG estimator.
In our analysis of 17 real Stan and UCI models, with $n=16$, around half of them achieved the best performance for an intermediate value of $m$, depending on the approximating distribution and base gradient estimator [see Table~\ref{table:max-objective-stan} and \ref{table:max-objective-stan-dreg} in Appendix~\ref{appendix:figures-comparisons}]. 
And we further conjecture that the fraction of real-world models with this property will increase as $n$ increases.

For the rest of this work we focus on methods that can reduce variance for the case when $1 < m < n$.

\section{{U-Statistic Estimators}}\label{sec:u-statistics}
We now introduce estimators for the IW-ELBO and its gradients based on U-statistics, and apply the theory of U-statistics to relate their variances.
The theory of U-statistics was developed in a seminal work by \citet{hoeffding1948class} and extends the theory of unbiased estimation introduced by \citet{halmos1946theory}.
For detailed background, see the original works or the books by \citet{Lee90u-statistics} and \citet{van2000asymptotic}.

The standard estimators in Eqs.~\eqref{eq:standard-IW-v1} and \eqref{eq:IW-ELBO-gradient-est} average the base estimators $h(v_{1:m})$ and $g(z_{1:m})$ on disjoint batches of the input samples.
The key insight of U-statistics is that variance can be reduced by averaging the base estimators on a larger number of overlapping sets of samples.

We will consider general IW-ELBO estimators of the form
\begin{equation}
  \label{eq:U-stat-general}
  \iwestim{\mathcal S}(v_{1:n}) = \frac{1}{\abs{\mathcal S}}\sum_{\col s \in \mathcal S} h(v_{s_1}, \ldots, v_{s_m}),
\end{equation}
where $\mathcal S$ is any non-empty collection of size-$m$ subsets of the indices $\nset{n} := \{1, \ldots, n\}$, and $s_i$ is the $i$th smallest index in the set $\col s \in \S$.
Since $\E  h(V_{1:m}) = \L_m$, it is clear (by symmetry and linearity) that $\E \iwestim{\mathcal S}(V_{1:n}) = \L_m$, that is, the estimator is unbiased.
For now, we will call this a ``U-statistic with kernel $h$'', as it is clear the same construction can be generalized by replacing $h$ by any other symmetric function of $m$ variables\footnote{Recall that a symmetric function is a function invariant under all permutations of its arguments.}, or ``kernel'', while preserving the expected value. Later, we will distinguish between different types of U-statistics based on the collection $\S$.

We can form U-statistics for gradient estimators by using base gradient estimators as kernels. Let $g(z_{1:m})$ be any symmetric base estimator such that $\E g(Z_{1:m}) = \nabla_\phi \L_m$. The corresponding U-statistic is
\begin{equation}
  \label{eq:gradient-u-stat}
\hat \G_\S(Z_{1:n}) = \frac{1}{\abs \S} \sum_{\col s \in \S} g(Z_{s_1}, \ldots, Z_{s_m})
\end{equation}
and satisfies $\E \hat \G_S(Z_{1:n}) = \nabla_\phi \L_m$.

\subsection{Variance Comparison}
How much variance reduction is possible for IWVI by using U-statistics?
In this section, we first define the \emph{standard IW-ELBO estimator} and \emph{complete U-statistic IW-ELBO estimator}, and then relate their variances.
For concreteness, we restrict our attention to IW-ELBO objective estimators, but 
analagous results hold for gradients by using a base gradient estimator as the kernel of the U-statistic.

We first express the standard IW-ELBO estimator \iwestim{n,m} in the terminology of Eq.~\eqref{eq:U-stat-general}:
\begin{estimator}\label{estimator:standard-IW}
The \textsl{standard IW-ELBO estimator} $\iwestim{n,m}$ of Eq.~\eqref{eq:standard-IW-v1} is the U-statistic $\hat \L_\S$ formed by taking $\S$ to be a partition of $\nset{n}$ into disjoint sets, i.e., $\S = \big\{\{1,\ldots, m\}, \{m+1, \ldots, 2m\}, \ldots, \{(r-1)m + 1, \ldots, rm\}\big\}$.
\end{estimator}

\begin{estimator}\label{estimator-complete-U}
  The \textsl{complete U-statistic IW-ELBO estimator} $\iwestim{n,m}^U$ is the U-statistic $\hat\L_\S$ with $\S = \binom{\nset{n}}{m}$, the set of all distinct subsets of $\nset{n}$ with exactly $m$ elements.
  \end{estimator}

{We will show that the variance of the $\iwestim{n,m}^U$ is never more than that of $\iwestim{n,m}$, and is strictly less under certain conditions (that occur in practice), using classical bounds on U-statistic variance due to \citet{hoeffding1948class}}.
Since $\iwestim{n,m}^U$ is an average of terms, one for each $\col s \in \binom{\nset{n}}{m}$, its variance depends on the covariances between pairs of terms for index sets $\col s$ and $\col s'$, which in turn depend on how many indices are shared by $\col s$ and $\col s'$.
This motivates the following definition:

\begin{definition}\label{def:zetas}
  Let $V_1, \dots, V_{2m}$ be \textup{i.i.d.}\ log-weights. 
  For $0 \leq c \leq m$, take $\col s, \col s' \in \binom{\nset{2m}}{m}$ with $\abs{\col s \cap \col s'} = c$.
  Using $h$ from \textup{Eq.\ (\ref{eq:iwobjct})}, define
  \begin{equation*}
    \zeta_c = \cov\Bigl[h(V_{s_1}, \ldots, V_{s_m}), \, h(V_{s'_1}, \ldots, V_{s'_m}) \Bigr],
  \end{equation*}
  which depends only on $c$ and not the particular $\col s$ and $\col s'$.
\end{definition}
In words, this is the covariance between two IW-ELBO estimates, each using one batch of $m$ \textup{i.i.d.}\ log-weights, and where the two batches share $c$ log-weights in common.
For example, when $m=2$ we have 
\begin{align*}
  \zeta_0 = 0,\quad \zeta_1 = \cov[\ln(\tfrac{1}{2}e^{V_1} + \tfrac{1}{2}e^{V_2}), \ln(\tfrac{1}{2}e^{V_1} + \tfrac{1}{2}e^{V_3})],\quad\mathrm{and,}\quad
  \zeta_2 = \var[\ln(\tfrac{1}{2}e^{V_1} + \tfrac{1}{2}e^{V_2})].
\end{align*}

Then, due to Hoeffding's classical result,
\begin{proposition}
  \label{prop:hoeffding}
  With $\zeta_1$, and $\zeta_m$ defined as above, the standard IW-ELBO estimator $\iwestim{n,m}$ \textup{(Estimator~\ref{estimator:standard-IW})} and complete U-statistic estimator \textup{(Estimator~\ref{estimator-complete-U})} with $n=rm$ and $r \in \mathbb{N}$ satisfy
\begin{equation*}
  \tfrac{m^2}{n}\zeta_1 \leq \var[\iwestim{n,m}^U] \leq \tfrac{m}{n}\zeta_m = \var[\iwestim{n,m}].
\end{equation*}
Moreover, for a fixed $m$, the quantity $n\var[\iwestim{n,m}^U]$ tends to its lower bound $m^2\zeta_1$ as $n$ increases.
\end{proposition}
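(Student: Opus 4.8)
The plan is to split the statement into the easy equality on the right and the classical two‑sided bound in the middle, plus the limit. The equality $\var[\iwestim{n,m}]=\tfrac{m}{n}\zeta_m$ is immediate: by Estimator~\ref{estimator:standard-IW} the standard estimator averages the $r=n/m$ values $h(V_{j,1},\dots,V_{j,m})$, $j=1,\dots,r$, over the \emph{disjoint} blocks of a single i.i.d.\ sample, so these $r$ summands are independent, each with variance $\zeta_m$ by Definition~\ref{def:zetas} (the case $c=m$); hence $\var[\iwestim{n,m}]=r^{-1}\zeta_m=(m/n)\zeta_m$.

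For the bounds on $\var[\iwestim{n,m}^U]$ I would invoke the classical variance formula for a complete U‑statistic [\citet{hoeffding1948class}; see also \citet{Lee90u-statistics}]. Expanding the variance as a double sum over pairs $\col s,\col s'\in\binom{\nset{n}}{m}$ and grouping the $\binom{n}{m}\binom{m}{c}\binom{n-m}{m-c}$ ordered pairs with $\abs{\col s\cap\col s'}=c$, and using $\zeta_0=0$ (disjoint batches are independent since the log‑weights are i.i.d.), one obtains $\var[\iwestim{n,m}^U]=\binom{n}{m}^{-1}\sum_{c=1}^{m}\binom{m}{c}\binom{n-m}{m-c}\zeta_c$. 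Equivalently, via the Hoeffding decomposition of $h$ into uncorrelated canonical components of degrees $1,\dots,m$ with variances $\delta_1^2,\dots,\delta_m^2$, one has $\zeta_c=\sum_{j=1}^{c}\binom{c}{j}\delta_j^2$ (so $\zeta_1=\delta_1^2$ and the $\zeta_c$ are nondecreasing in $c$) and $\var[\iwestim{n,m}^U]=\sum_{c=1}^{m}\binom{m}{c}^2\binom{n}{c}^{-1}\delta_c^2$.

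The three claims then fall out. For the \emph{upper bound} the cleanest route is Rao--Blackwellization: $\iwestim{n,m}^U=\E\big[\iwestim{n,m}\mid (V_{(1)},\dots,V_{(n)})\big]$, because conditioning on the order statistics makes the sample an exchangeable relabeling, so each disjoint block of the standard estimator has conditional expectation equal to the all‑subsets average; the law of total variance gives $\var[\iwestim{n,m}^U]\le\var[\iwestim{n,m}]=(m/n)\zeta_m$. (Alternatively, from the decomposition formula together with the elementary ratio bound $\binom{m}{c}/\binom{n}{c}\le(m/n)^c\le m/n$ valid for $1\le c\le m\le n$.) For the \emph{lower bound}, keep only the $c=1$ term of $\sum_{c}\binom{m}{c}^2\binom{n}{c}^{-1}\delta_c^2$, which equals $\tfrac{m^2}{n}\delta_1^2=\tfrac{m^2}{n}\zeta_1$, the remaining terms being nonnegative. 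For the \emph{limit}, $n\var[\iwestim{n,m}^U]=m^2\zeta_1+\sum_{c\ge2}\binom{m}{c}^2\,n\binom{n}{c}^{-1}\delta_c^2$, and $n\binom{n}{c}^{-1}=O(n^{1-c})\to0$ for each fixed $c\ge2$ (with $m$ fixed), so $n\var[\iwestim{n,m}^U]\to m^2\zeta_1$.

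\textbf{Main obstacle.} There is no genuine obstruction here: this is textbook U‑statistic theory, and the only work is bookkeeping — counting overlap patterns, applying Vandermonde's identity in the combinatorial route, and checking $\binom{m}{c}/\binom{n}{c}\le(m/n)^c$ in the decomposition route. The one real choice is how self‑contained to be about the Hoeffding decomposition and the identity $\zeta_1=\delta_1^2$; I would either cite \citet{hoeffding1948class} (or \citet{Lee90u-statistics}) outright, or relegate a short derivation of the variance formula to an appendix.
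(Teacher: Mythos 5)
Your proposal is correct and takes essentially the same route as the paper: the paper's proof consists of noting that the equality follows from independence of the disjoint blocks (the definition of $\zeta_m$) and citing Theorem~5.2 of Hoeffding (1948) for the two inequalities and the asymptotic statement, and your argument is exactly a self-contained derivation of that theorem's content (the overlap-count variance formula, the Hoeffding decomposition with $\zeta_1=\delta_1^2$, and the resulting bounds and limit). Nothing in your write-up diverges from or conflicts with the paper's proof; you have simply expanded its citation into the underlying classical argument.
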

\begin{proof}
The inequalities and asymptotic statement follow directly from Theorem~{5.2} of~\citet{hoeffding1948class}. The equality follows from the definition of $\zeta_m$.
\end{proof}

Hoeffding proved that $m\zeta_1 \leq \zeta_m$.
We observe in practice that there is a gap between the two variances that leads to practical gains for the complete U-statistic estimator in real VI problems. 

A classical result of \citet{halmos1946theory} also shows that complete U-statistics are optimal in a certain sense: we describe how this result applies to estimator $\iwestim{n, m}^U$ in Appendix \ref{section:appendix-additional-results}. 

Finally, we conclude this discussion by stating the main analogue of Proposition~\ref{prop:hoeffding} for gradient estimation.
The result, also following from Theorem~{5.2} of~\citet{hoeffding1948class},  states that the complete U-statistic gradient estimator has total variance and expected squared norm no larger than that of the standard estimator:
\begin{proposition}\label{prop:variance-grads}
Let $\hat\G_{n,m}$ and $\hat\G_{n,m}^U$ be the standard and complete-U-statistic gradient estimators formed using a symmetric base gradient estimator $g(z_{1:m})$ that is unbiased for $\nabla_\phi \L_m$ and the same index sets as $\iwestim{n,m}$ and $\iwestim{n,m}^U$, respectively. Then $\tr(\var [\hat\G_{n,m}^U]) \leq \tr (\var [\hat\G_{n,m}])$ and $\E \Vert \hat\G_{n,m}^U \Vert_2^2 \leq \E \Vert \hat\G_{n,m} \Vert_2^2$.
\end{proposition}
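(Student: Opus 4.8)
The plan is to reduce both claims to the scalar U-statistic comparison already established in Proposition~\ref{prop:hoeffding}, applied coordinate by coordinate. Write $g = (g_1, \ldots, g_{d_\phi})$ for the components of the base gradient estimator. Since $g$ is symmetric in its $m$ arguments, each component $g_\ell : (\R^{d_{\mathcal Z}})^m \to \R$ is itself a symmetric scalar kernel, and by hypothesis $\E g_\ell(Z_{1:m}) = (\nabla_\phi \L_m)_\ell$. By linearity of Eqs.~\eqref{eq:U-stat-general}--\eqref{eq:gradient-u-stat}, the $\ell$th component of $\hat\G_{n,m}$ (resp.\ $\hat\G_{n,m}^U$) is exactly the scalar standard IW-ELBO-type estimator (resp.\ complete U-statistic) built from the kernel $g_\ell$ with the same index collection $\S$, namely the partition of Estimator~\ref{estimator:standard-IW} (resp.\ the full collection $\binom{\nset n}{m}$ of Estimator~\ref{estimator-complete-U}). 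Hence the scalar comparison applies verbatim with $h$ replaced by $g_\ell$.

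First I would record the decomposition $\tr(\var[X]) = \sum_{\ell=1}^{d_\phi} \var[X_\ell]$ for any random vector $X$ with finite second moments. Applying this to $X = \hat\G_{n,m}^U$ and to $X = \hat\G_{n,m}$, then invoking the scalar bound $\var[(\hat\G_{n,m}^U)_\ell] \le \var[(\hat\G_{n,m})_\ell]$ (Proposition~\ref{prop:hoeffding}, i.e.\ Theorem~5.2 of \citet{hoeffding1948class}, with kernel $g_\ell$) for each $\ell$, and summing over $\ell$, yields $\tr(\var[\hat\G_{n,m}^U]) \le \tr(\var[\hat\G_{n,m}])$. For the expected squared norm I would use the bias--variance identity $\E\|X\|_2^2 = \|\E X\|_2^2 + \tr(\var[X])$. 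Both estimators are unbiased for $\nabla_\phi \L_m$, so the terms $\|\E X\|_2^2$ coincide, and therefore $\E\|\hat\G_{n,m}^U\|_2^2 - \E\|\hat\G_{n,m}\|_2^2 = \tr(\var[\hat\G_{n,m}^U]) - \tr(\var[\hat\G_{n,m}]) \le 0$ by the previous step.

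The only point requiring care --- and the closest thing to an obstacle --- is justifying the componentwise use of Hoeffding's theorem: this needs each $g_\ell$ to be symmetric (immediate from symmetry of $g$) and square-integrable under $q_\phi^{\otimes m}$ (the standing assumption that makes all the variances finite), and it needs the two componentwise estimators to use precisely the partition-based and complete collections $\S$, which holds by construction. No genuinely new analytic input beyond Proposition~\ref{prop:hoeffding} is required; the matrix structure never enters because both $\tr(\var[\cdot])$ and the bias--variance identity decouple across coordinates.
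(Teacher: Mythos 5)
Your proposal is correct and follows essentially the same route as the paper's proof in Appendix~\ref{subsec:appendix-proof-var-gradient}: apply Hoeffding's Theorem~5.2 coordinate-wise to each symmetric scalar kernel $g_\ell$, sum the component variances to obtain the trace inequality, and then use unbiasedness (the bias--variance decomposition) to transfer the bound to the expected squared norm.
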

We provide a proof in Appendix~\ref{subsec:appendix-proof-var-gradient}.

\subsection{Computational Complexity}
There are two main factors to consider for the computational complexity of an IW-ELBO estimator:
\begin{enumerate}[nosep, itemsep=3pt, label=\rm{\arabic*)}]
  \item The cost to compute $n$ log-weights $V_i = \ln p(Z_i, x) - \ln q(Z_i)$ for $i\in \nset{n}$, and 
  \item the cost to compute the estimator given the log-weights.
\end{enumerate}
A problem with the complete U-statistics $\iwestim{n,m}^U$ and $\hat\G_{n,m}^U$ is that they use $\big|\binom{\nset{n}}{m}\big| = \binom{n}{m}$ distinct subsets of indices in Step {2)}, which is expensive. 
It should be noted that these log-weight manipulations are very simple, while, for many probabilistic models, computing each log-weight is expensive, so, for modest $m$ and $n$, the computation may still be dominated by Step~{1)}.
However, for large enough $m$ and $n$, Step~{2)} is impractical.

\section{{Incomplete U-Statistic Estimators}}\label{sec:incomplete-u-statistics}
In practice, we can achieve most of the variance reduction of the complete U-statistic with only modest computational cost by averaging over only $k \ll \binom{n}{m}$ subsets of indices selected in some way.
Such an estimator is called an \textsl{incomplete U-statistic}.
Incomplete U-statistics were introduced and studied by~\citet{blom1976some}.

A general incomplete U-statistic for the IW-ELBO has the form in Eq.~\eqref{eq:U-stat-general}
where $\mathcal S \subsetneq \binom{\nset{n}}{m}$ is a collection of size-$m$ subsets of $\nset{n}$ that does not include every possible subset.
We will also allow $\mathcal S$ to be a multi-set, so that the same subset may appear more than once.
Note that the standard IW-ELBO estimator $\iwestim{n, m}$ is itself an incomplete U-statistic, where the $k =r = \frac{n}{m}$ index sets are disjoint.
We can improve on this by selecting $k > r$ sets.

\begin{estimator}[Random subsets]\label{estimator:random-subsets}
  The \emph{random-subset incomplete-U-statistic estimator for the IW-ELBO} is the estimator $\iwestim{\mathcal S_k}$ where $\mathcal S_k$ is a set of $k$ subsets $(\col s_i)_{i=1}^k$ drawn uniformly at random (with replacement) from $\binom{\nset{n}}{m}$.
\end{estimator}

We next introduce a novel incomplete U-statistic, which is both very simple and enjoys strong theoretical properties. 
\begin{estimator}[Permuted block]\label{estimator:permuted-block}
    The permuted block estimator is computed by repeating the standard IW-ELBO estimator $\ell$ times with randomly permuted log-weights and averaging the results.
  Formally, the \emph{permuted-block incomplete-U-statistic estimator for the IW-ELBO} is the estimator $\iwestim{\mathcal S^\ell_\Pi}$ with the collection $S^\ell_\Pi$ defined as follows.
  Let $\pi$ denote a permutation of $\nset{n}$.
  Define $\mathcal S_{\pi}$ as the collection obtained by permuting indices according to $\pi$ and then dividing them into $r$ disjoint sets of size $m$. That is, 
  \begin{equation*}
    \mathcal S_{\pi} = \Big\{  \big\{ \pi(1), \pi(2), \ldots, \pi(m)  \big\}, 
     \big\{ \pi(m+1), \ldots, \pi(2m) \big\}, \ldots,
    \big\{ \pi\big((r-1)m+1\big), \ldots, \pi(rm) \big\} \Big\}.
  \end{equation*}
  Now, let $\mathcal S^\ell_\Pi = \biguplus_{\pi \in \Pi}\mathcal S_\pi$ where $\Pi$ is a collection of $\ell$ random permutations and $\biguplus$ denotes union as a multiset.
  The total number of sets in $\mathcal S^\ell_{\Pi}$ is $k=r\ell$.
\end{estimator}

Both incomplete-U-statistic estimators can achieve variance reduction in practice for a large enough number of sets $k$, but the permuted block estimator has an advantage: 
its variance with $k$ subsets is never more than that of the random subset estimator with $k$ subsets, and never more than the variance of the standard IW-ELBO estimator (and usually smaller).
On the other hand, the variance of the random subset estimator is more than that of the standard estimator unless $k \geq k_0$ for some threshold $k_0 > r$.

\newcommand{\spacer}{\vphantom{\iwestim{\mathcal S^\ell_\Pi}}}
\begin{proposition}\label{prop:comparison-variances}
  Given $m$ and $n = rm$, the variances of these estimators
  satisfy the following partial ordering:
  \begin{equation}\label{eq:ordering-variances}
    \underbrace{\spacer \var[\iwestim{n,m}^U]}_{\textup{complete}} \stackrel{\mathrm{(a)}}{\leq} \underbrace{\var[\iwestim{\mathcal S^\ell_\Pi}]}_{\textup{permuted block}} \
    \begin{aligned}
      &\stackrel{\mathrm{(b)}}{\leq} \overbrace{\vphantom{\tfrac{1}{2}}\var[\iwestim{n,m}]}^{\textup{standard}}\\
      &\underset{\mathrm{(c)}}{\leq} \underbrace{\var[\iwestim{\mathcal S_{r\ell}}]}_{\textup{random subset}}.
    \end{aligned}
  \end{equation}
  Moreover, if the number of permutations $\ell>1 $ and $\var[\iwestim{n,m}^U] < \var[\iwestim{n,m}]$, then \textup{(b)} is strict;
  if $r = \frac{n}{m} > 1$, then \textup{(c)} is strict.
  \textup{(Note that the permuted and random subset estimators both use $k = r\ell$ subsets.)}
\end{proposition}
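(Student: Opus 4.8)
The plan is to reduce everything to two closed-form identities,
\begin{equation*}
  \var[\iwestim{\mathcal S^\ell_\Pi}] = \var[\iwestim{n,m}^U] + \tfrac{1}{\ell}\bigl(\var[\iwestim{n,m}] - \var[\iwestim{n,m}^U]\bigr),
  \qquad
  \var[\iwestim{\mathcal S_{r\ell}}] = \var[\iwestim{n,m}^U] + \tfrac{1}{r\ell}\bigl(r\,\var[\iwestim{n,m}] - \var[\iwestim{n,m}^U]\bigr),
\end{equation*}
after which (a), (b), (c) and their strictness conditions are read off by inspection. Both identities come from the law of total variance applied to a conditioning on the data $V = (V_1,\dots,V_n)$: I will show that, conditionally on $V$, both randomized estimators have mean equal to the complete U-statistic $\iwestim{n,m}^U = \tfrac{1}{\binom{n}{m}}\sum_{\col s}h(V_{s_1},\dots,V_{s_m})$, so that $\var[\cdot] = \var[\iwestim{n,m}^U] + \E_V[\var[\cdot \mid V]]$ and it remains to compute the expected conditional variance in each case.

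For the permuted-block estimator, write $\iwestim{\mathcal S^\ell_\Pi} = \tfrac{1}{\ell}\sum_{j=1}^\ell T_{\pi_j}$ where $T_\pi := \tfrac{1}{r}\sum_{\col s \in \mathcal S_\pi} h(V_{s_1},\dots,V_{s_m})$ averages $h$ over the $r$ blocks of the $\pi$-permuted partition and $\pi_1,\dots,\pi_\ell$ are i.i.d.\ uniform permutations independent of $V$. Since each block of a uniformly random partition is a uniformly random element of $\binom{\nset{n}}{m}$, we get $\E[T_\pi \mid V] = \iwestim{n,m}^U$, and because the $T_{\pi_j}$ are conditionally i.i.d.\ given $V$, $\var[\iwestim{\mathcal S^\ell_\Pi} \mid V] = \tfrac{1}{\ell}\var[T_\pi \mid V]$. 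To evaluate $\E_V[\var[T_\pi \mid V]]$ I would combine two observations: (i) for every fixed $\pi$, $(V_{\pi(1)},\dots,V_{\pi(n)}) \eqd (V_1,\dots,V_n)$, hence $T_\pi \eqd \iwestim{n,m}$ and $\var[T_\pi] = \var[\iwestim{n,m}]$; and (ii) the law of total variance applied directly to $T_\pi$ gives $\var[T_\pi] = \E_V[\var[T_\pi\mid V]] + \var[\iwestim{n,m}^U]$. Subtracting yields $\E_V[\var[T_\pi\mid V]] = \var[\iwestim{n,m}] - \var[\iwestim{n,m}^U]$, which is the first identity.

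For the random-subset estimator the conditioning is more direct: given $V$, $\iwestim{\mathcal S_{r\ell}}$ is the average of $k = r\ell$ i.i.d.\ terms $h(V_{s_1},\dots,V_{s_m})$ with $\col s$ uniform on $\binom{\nset{n}}{m}$, so $\E[\iwestim{\mathcal S_{r\ell}}\mid V] = \iwestim{n,m}^U$ and $\var[\iwestim{\mathcal S_{r\ell}}\mid V] = \tfrac{1}{r\ell}\var[h(V_{s_1},\dots,V_{s_m})\mid V]$. Since $h(V_{s_1},\dots,V_{s_m}) \eqd h(V_{1:m})$ for i.i.d.\ $V$, its (unconditional) variance is $\zeta_m$, so the law of total variance gives $\E_V[\var[h(V_{s_1},\dots,V_{s_m})\mid V]] = \zeta_m - \var[\iwestim{n,m}^U]$; plugging in $\zeta_m = r\,\var[\iwestim{n,m}]$ (Proposition~\ref{prop:hoeffding}, since $n=rm$) gives the second identity. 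From the two identities: (a) is immediate because the correction term $\tfrac{1}{\ell}(\var[\iwestim{n,m}] - \var[\iwestim{n,m}^U])$ is nonnegative by Proposition~\ref{prop:hoeffding}; (b) holds because $\var[\iwestim{\mathcal S^\ell_\Pi}]$ is a convex combination of $\var[\iwestim{n,m}^U]$ and $\var[\iwestim{n,m}]$ with the gap $(1-\tfrac1\ell)(\var[\iwestim{n,m}]-\var[\iwestim{n,m}^U])$, strictly positive exactly when $\ell>1$ and $\var[\iwestim{n,m}^U]<\var[\iwestim{n,m}]$; and (c) follows by subtracting the two identities, $\var[\iwestim{\mathcal S_{r\ell}}] - \var[\iwestim{\mathcal S^\ell_\Pi}] = (\tfrac1\ell - \tfrac1{r\ell})\var[\iwestim{n,m}^U] = \tfrac{r-1}{r\ell}\var[\iwestim{n,m}^U] \ge 0$, which is strict once $r>1$ (in the nondegenerate case $\var[\iwestim{n,m}^U]>0$).

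The only step carrying any real content is the identity $\E_V[\var[T_\pi\mid V]] = \var[\iwestim{n,m}] - \var[\iwestim{n,m}^U]$ for the permuted-block kernel; the trick that makes it painless is to apply the law of total variance to $T_\pi$ in two complementary ways and exploit the distributional permutation-invariance $T_\pi \eqd \iwestim{n,m}$, which sidesteps any explicit bookkeeping over the covariances between overlapping or disjoint blocks. Everything else is routine.
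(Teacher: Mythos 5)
Your proof is correct, and the two closed-form identities you derive are algebraically identical to Eq.~\eqref{eq:var-permuted-block} and its random-subset analogue in the paper's proof (after substituting $\var[\iwestim{n,m}]=\tfrac1r\zeta_m$), so the read-off of (a)--(c) and the strictness conditions coincides exactly. The route to those identities is genuinely different, though. The paper works at the level of pairwise covariances between kernel evaluations: it first shows $\E[\zeta_{\abs{\col s\cap\col s'}}]=\var[\iwestim{n,m}^U]$ for independent uniform $\col s,\col s'$, then classifies pairs of index sets as identical (contributing $\zeta_m$), distinct-within-a-permutation (disjoint, contributing $0$), or cross-permutation (contributing $\var[\iwestim{n,m}^U]$ on average), and sums. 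You instead condition on the sample $V$ and observe that both randomized estimators are conditionally unbiased for $\iwestim{n,m}^U$ --- i.e., the complete U-statistic is their Rao--Blackwellization --- so the law of total variance gives $\var[\cdot]=\var[\iwestim{n,m}^U]+\E_V[\var[\cdot\mid V]]$, with the residual evaluated by applying the same decomposition in the opposite direction together with the invariance $T_\pi\eqd\iwestim{n,m}$. Your version avoids all explicit bookkeeping over the $\zeta_c$ and makes (a) conceptually immediate (conditioning never increases variance), at the cost of hiding which covariances drive the gap; the paper's version makes that structure explicit, which is what Proposition~\ref{prop:number-of-permutations-variance-reduction} then exploits. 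Two minor notes: both arguments implicitly assume the $\ell$ permutations (and the $k$ random subsets) are i.i.d.\ and independent of $V$, which the estimator definitions leave tacit; and your added caveat that strictness of (c) requires $\var[\iwestim{n,m}^U]>0$ is a degenerate case the paper's statement silently ignores.
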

\begin{proof}
  By Def.~\ref{def:zetas}, if $\col s$ and $\col s'$ are uniformly drawn from $\binom{\nset{n}}{m}$ and $\kappa = \big\lvert \binom{\nset{n}}{m}\big\rvert$, we have
  \begin{equation}\label{eq:expectation-zeta-equals-var-complete-u}
    \E[\zeta_{\abs{\col s \cap \col s'}}] =  \sum_{\col s, \col s' \in \binom{\nset{n}}{m}}\!\!\!\!\frac{\zeta_{\abs{\col s \cap \col s'}}}{\kappa^2} = \sum_{\col s, \col s' \in \binom{\nset{n}}{m}}\!\!\!\!\!\frac{\E[h(V_{s_1}, \dots, V_{s_m})h(V_{s_1'}, \dots, V_{s_m'})]}{\kappa^2} - \E[\iwestim{n,m}^U]^2 = \var[\iwestim{n,m}^U].
  \end{equation}
  Let $\pi_1, \dots, \pi_\ell$ be the random permutations.
  Observe that for $\col s, \col s' \in \mathcal S_{\pi_i}$ distinct, i.e., two distinct sets within the $i$th block, $\col s$ and $\col s'$ are disjoint and then $h(V_{s_1}, \dots, V_{s_m})$ is independent of $h(V_{s_1'}, \dots, V_{s_m'})$.
  Hence, all dependencies between different sets are due to relations between permutations, i.e., each of the $\ell r$ terms will have a dependency with the $(\ell -1)r$ terms not in the same permutation.
  Therefore, it follows from (\ref{eq:expectation-zeta-equals-var-complete-u}) that the total variance of $\iwestim{\mathcal S^\ell_\Pi}$ is
  \begin{equation}\label{eq:var-permuted-block}
    \var[\iwestim{\mathcal S^\ell_\Pi}] = \tfrac{1}{\ell r}\zeta_m + (1 - \tfrac{1}{\ell})\var[\iwestim{n,m}^U],
  \end{equation}
  i.e., a convex combination of $\tfrac{1}{r}\zeta_m = \var[\iwestim{n,m}]$ and $\var[\iwestim{n,m}^U]$.
  Hence, using Proposition~\ref{prop:hoeffding}, {(a)} and {(b)} holds.

  By a similar argument, the total variance of $\iwestim{\mathcal S_{r\ell}}$ is
  \begin{equation*}
    \var[\iwestim{\mathcal S_{r\ell}}] = \tfrac{1}{\ell r}\zeta_m + (1-\tfrac{1}{r\ell})\var[\iwestim{n,m}^U].
  \end{equation*}
  Then, {(c)} holds because
  \begin{equation*}
    \var[\iwestim{\mathcal S^\ell_\Pi}] - \var[\iwestim{\mathcal S_{r\ell}}] = \tfrac{1}{\ell}(\tfrac{1}{r} - 1 )\var[\iwestim{n,m}^U] \leq 0.
  \end{equation*}
\end{proof}

A remarkable property of the permuted-block estimator is that we can choose the number of permutations $\ell$ to guarantee what fraction of the variance reduction of the complete estimator we want to achieve.
Say we would like to achieve $90\%$ of the variance reduction; then it suffices to set $\ell = 10$. 
The following Proposition formalizes this result.
\begin{proposition}\label{prop:number-of-permutations-variance-reduction}
  Given $m$ and $n=rm$, for $\ell \in \N$ the permuted-block estimator achieves a  $(1-1/\ell)$ fraction of the variance reduction provided by the complete U-statistic IW-ELBO estimator, i.e.,
  \begin{equation*}
    \underbrace{\spacer \var[\iwestim{n,m}]}_{\textup{standard}} - \underbrace{\var[\iwestim{\mathcal S^\ell_\Pi}]}_{\textup{permuted block}} = (1-\tfrac{1}{\ell})(\underbrace{\spacer \var[\iwestim{n,m}]}_{\textup{standard}} - \underbrace{\spacer \var[\iwestim{n,m}^U]}_{\textup{complete}}).
  \end{equation*}
\end{proposition}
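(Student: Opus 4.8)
The plan is to simply unwind the variance decomposition of the permuted-block estimator that was already established in the proof of Proposition~\ref{prop:comparison-variances}. That proof derived the identity
\begin{equation*}
  \var[\iwestim{\mathcal S^\ell_\Pi}] = \tfrac{1}{\ell r}\zeta_m + \bigl(1 - \tfrac{1}{\ell}\bigr)\var[\iwestim{n,m}^U],
\end{equation*}
i.e., the variance of the permuted-block estimator is a convex combination (with weights $\tfrac{1}{\ell}$ and $1-\tfrac{1}{\ell}$) of $\tfrac{1}{r}\zeta_m$ and $\var[\iwestim{n,m}^U]$. So the first step is to recall this decomposition and the fact that $\tfrac{1}{r}\zeta_m = \var[\iwestim{n,m}]$ is exactly the variance of the standard IW-ELBO estimator (Proposition~\ref{prop:hoeffding}, using $\zeta_m = \var[h(V_{1:m})]$ and the disjointness of the $r$ blocks).

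The second step is a one-line algebraic manipulation: substituting these two facts,
\begin{align*}
  \var[\iwestim{n,m}] - \var[\iwestim{\mathcal S^\ell_\Pi}]
  &= \tfrac{1}{r}\zeta_m - \tfrac{1}{\ell r}\zeta_m - \bigl(1-\tfrac{1}{\ell}\bigr)\var[\iwestim{n,m}^U] \\
  &= \bigl(1-\tfrac{1}{\ell}\bigr)\tfrac{1}{r}\zeta_m - \bigl(1-\tfrac{1}{\ell}\bigr)\var[\iwestim{n,m}^U]
  = \bigl(1-\tfrac{1}{\ell}\bigr)\bigl(\var[\iwestim{n,m}] - \var[\iwestim{n,m}^U]\bigr),
\end{align*}
which is precisely the claimed identity.

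Since everything follows from equation~\eqref{eq:var-permuted-block}, which was already proven, the proof is essentially a corollary and there is no real obstacle. The only thing worth being careful about is to state clearly that we are invoking the convex-combination identity from the proof of Proposition~\ref{prop:comparison-variances} (so that the present proof is self-contained modulo that earlier display), and to note that the result holds for every integer $\ell \geq 1$, with the two extremes $\ell = 1$ (recovering $\var[\iwestim{\mathcal S^\ell_\Pi}] = \var[\iwestim{n,m}]$, zero variance reduction) and $\ell \to \infty$ (recovering the full reduction to $\var[\iwestim{n,m}^U]$) serving as sanity checks. If one wanted to make the write-up fully standalone, the mildly more involved part would be re-deriving \eqref{eq:var-permuted-block} itself — counting, among the $\ell r$ terms, that two terms from the same permutation are independent while each term shares a dependency with the $(\ell-1)r$ terms from other permutations, and that averaging $\zeta_{|\col s \cap \col s'|}$ over independently permuted blocks reproduces $\E[\zeta_{|\col s \cap \col s'|}] = \var[\iwestim{n,m}^U]$ as in \eqref{eq:expectation-zeta-equals-var-complete-u} — but since Proposition~\ref{prop:comparison-variances} is available, I would not repeat this.
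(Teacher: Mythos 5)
Your proof is correct and takes exactly the same route as the paper: the paper's own proof consists of the single sentence that the identity ``follows directly from Eq.~\eqref{eq:var-permuted-block}'', which is precisely the convex-combination decomposition you invoke and then expand algebraically. Your write-up merely makes the one-line algebra explicit, so there is nothing to add.
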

\begin{proof}
  This follows directly from Eq.~\eqref{eq:var-permuted-block}.
\end{proof}
The conclusions of Propositions~\ref{prop:comparison-variances} and \ref{prop:number-of-permutations-variance-reduction} do not depend on the kernel.
This means they provide strong guarantees for our novel and simple permuted-block incomplete U-statistic with \emph{any} kernel, which may be of general interest, and also imply the following result for gradients:
\begin{proposition}The conclusion of \textup{Proposition~\ref{prop:number-of-permutations-variance-reduction}} holds with $\var[\hat\L]$ replaced by either $\E[\|\hat\G\|^2_2]$ or $\tr([\var[\hat\G])$, for each pair $(\hat\L, \hat\G)$ of objective estimator and gradient estimator that use the same collection $\S$ of index sets, and for any base gradient estimator $g(v_{1:m})$.
\end{proposition}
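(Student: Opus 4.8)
The plan is to reduce both claims to the scalar identity \eqref{eq:var-permuted-block} applied coordinate by coordinate, then treat $\tr(\var[\hat\G])$ and $\E\|\hat\G\|_2^2$ separately. Fix a symmetric base gradient estimator $g=(g^{(1)},\dots,g^{(d_\phi)})$ and note that each coordinate $g^{(j)}$ is itself a scalar-valued symmetric function of $m$ arguments. For any collection $\S$ of size-$m$ index sets, the $j$th coordinate of $\hat\G_\S$ from Eq.~\eqref{eq:gradient-u-stat} is exactly the scalar U-statistic with kernel $g^{(j)}$ over the same collection $\S$. As observed right after Proposition~\ref{prop:number-of-permutations-variance-reduction}, the proof of that proposition rests only on the decomposition \eqref{eq:var-permuted-block}, which in turn uses only that two disjoint index sets inside a single permuted block give independent kernel evaluations; this is a statement about the underlying i.i.d.\ samples and so holds for any scalar symmetric kernel. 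Applying it with kernel $g^{(j)}$ gives, for every $j$,
\[
  \var\bigl[\hat\G_{n,m}^{(j)}\bigr]-\var\bigl[\hat\G_{\mathcal S^\ell_\Pi}^{(j)}\bigr]
  =\bigl(1-\tfrac1\ell\bigr)\Bigl(\var\bigl[\hat\G_{n,m}^{(j)}\bigr]-\var\bigl[\hat\G_{n,m}^{U,(j)}\bigr]\Bigr).
\]

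Summing over $j=1,\dots,d_\phi$ and using $\tr(\var[\,\cdot\,])=\sum_j \var[\,\cdot^{(j)}\,]$ yields the desired identity with $\var[\hat\L]$ replaced by $\tr(\var[\hat\G])$. For the expected squared norm, use the bias--variance decomposition $\E\|\hat\G\|_2^2=\tr(\var[\hat\G])+\|\E[\hat\G]\|_2^2$. All three gradient estimators in play (standard, permuted block, complete U-statistic) are averages of unbiased base estimators and hence unbiased for $\nabla_\phi\L_m$, so $\|\E[\hat\G]\|_2^2=\|\nabla_\phi\L_m\|_2^2$ is the same constant for each. Consequently every difference $\E\|\hat\G\|_2^2-\E\|\hat\G'\|_2^2$ among these three estimators coincides with the corresponding difference of $\tr(\var[\,\cdot\,])$, and the identity with $\E\|\hat\G\|_2^2$ follows immediately from the one just proved.

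There is no substantive obstacle; the only points needing care are (i) checking that the within-block independence underlying \eqref{eq:var-permuted-block} is really a property of the shared i.i.d.\ samples and therefore descends to each coordinate of a vector-valued kernel, and (ii) remembering that $\E\|\hat\G\|_2^2$ is not a variance, so its reduction to $\tr(\var)$ genuinely uses the common unbiasedness of the three estimators rather than being automatic.
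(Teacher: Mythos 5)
Your proposal is correct and follows essentially the same route the paper takes: the paper justifies this proposition by noting that the scalar identity \eqref{eq:var-permuted-block} is kernel-independent, applying it coordinate-wise to each $g^{(j)}$, summing to obtain the trace statement, and invoking the common unbiasedness of the three estimators to transfer the identity to $\E\|\hat\G\|_2^2$ (exactly as in the appendix proof of Proposition~\ref{prop:variance-grads}). No gaps; your two "points needing care" are precisely the right ones and both check out.
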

\section{{Efficient Lower Bounds}}\label{sec:approximation}

In the last section, we approximated the complete U-statistic by averaging over $k \ll \binom{n}{m}$ subsets.
For example, by Proposition~\ref{prop:number-of-permutations-variance-reduction}, we could achieve 90\% of the variance reduction with 10$\times$ more batches than the standard estimator, and the extra running-time cost is often very small in practice.
An even faster alternative is to approximate the kernel in such a way that we can compute the complete U-statistic without iterating over subsets.
In this section, we introduce such an approximation for the IW-ELBO objective, where the extra running-time cost is a single sort of the $n$ log-weights, which is extremely fast.
Furthermore, Proposition~\ref{prop:approximation-bounds-u-stat} below will show that it is always a lower bound to $\iwestim{n,m}^U$ and has bounded approximation error, so its expectation lower bounds $\iwobjct{m}$ and $\ln p(x)$; thus, it  can be used as a surrogate objective within VI that behaves well under maximization.
We then introduce a ``second-order'' lower bound, which has provably lower error.
Unlike the last two sections, these approximations do not have analogues for arbitrary gradient estimators such as DReG or score function estimators. 
For optimization, we use reparameterization gradients of the \emph{surrogate} objective.

\begin{estimator}\label{def:approximation}
  The \emph{approximate complete U-statistic IW-ELBO estimator} is
  \begin{equation*}
    \approxiw{n,m}(V_{1:n}) = \binom{n}{m}^{-1}\!\!\!\!\sum_{\col s\in \binom{\nset{n}}{m}}\max(V_{s_1}, \dots, V_{s_m}) -\ln m.
  \end{equation*}
\end{estimator}

This estimator uses the approximation $\ln \sum_{i=1}^m e^{v_i} \approx \max \set{v_1, \dots, v_m}$ for log-sum-exp. 
The following Proposition shows that we can compute $\approxiw{n,m}$ \emph{exactly} without going over the $\binom{n}{m}$ subsets but instead taking only $O(n\ln n)$ time.
The intuition is that each of the $n$ log-weights will be a maximum element of some number of size-$m$ subsets, and each such term in the summation for $\approxiw{n,m}$ will be the same. 
Moreover, we can reason in advance how many times each log-weight will be a maximum.
\begin{proposition}\label{prop:approximation-easy-to-compute}
  For any $v_{1:n}  \in \Rn$, it  holds that 
  \begin{equation*}
    \begin{aligned}
    \approxiw{n, m}(v_{1:n}) \equiv \binom{n}{m}^{-1}\sum_{i=1}^nb_i v_{[i]} - \ln m,
    \end{aligned}
  \end{equation*}
  where $b_i = \binom{n-i}{m-1}$,  if $i \in \nset{n-(m-1)}$ (and $0$ otherwise),
  and $[\,\cdot\,] \colon \nset{n} \to \nset{n}$ is a permutation s.t.,  the sequence of log-weights $v_{[1]}, \ldots, v_{[n]}$ is non-increasing. 
\end{proposition}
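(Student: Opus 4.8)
The plan is to establish the identity by a counting argument that reorganizes the sum over all $m$-subsets according to which log-weight attains the maximum within each subset. Because the quantity $\sum_{\col s \in \binom{\nset{n}}{m}} \max(v_{s_1},\dots,v_{s_m})$ is unchanged by any relabeling of the indices $\nset{n}$, and because the right-hand side depends on $v_1,\dots,v_n$ only through their sorted sequence $v_{[1]}\ge\cdots\ge v_{[n]}$ together with the fixed coefficients $b_i$, it suffices to prove the claim under the assumption $v_1\ge v_2\ge\cdots\ge v_n$, i.e.\ $[i]=i$; the general case follows by relabeling. (If the $v_i$ have ties the permutation $[\cdot]$ is not unique, but any two non-increasing orderings differ by permutations inside blocks of equal values, and $\sum_i b_i v_{[i]}$ is insensitive to those permutations since the $v_{[i]}$ within a block coincide, so the right-hand side is well defined.)

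With $v_1\ge\cdots\ge v_n$, for any $\col s=\{s_1<\cdots<s_m\}$ we have $v_{s_1}\ge v_{s_2}\ge\cdots\ge v_{s_m}$, so $\max(v_{s_1},\dots,v_{s_m})=v_{s_1}=v_{\min\col s}$. Grouping the subsets by their minimum element gives
\[
  \sum_{\col s\in\binom{\nset{n}}{m}}\max(v_{s_1},\dots,v_{s_m}) \;=\; \sum_{i=1}^n v_i\cdot\bigl|\{\col s\in\tbinom{\nset{n}}{m}:\min\col s=i\}\bigr|.
\]
A size-$m$ subset has minimum $i$ exactly when it contains $i$ and its other $m-1$ elements lie in $\{i+1,\dots,n\}$, a set of size $n-i$; hence the inner cardinality is $\binom{n-i}{m-1}=b_i$, which is $0$ precisely when $n-i<m-1$, i.e.\ $i\notin\nset{n-(m-1)}$. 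Dividing by $\binom{n}{m}$, subtracting $\ln m$, and undoing the relabeling (writing $v_{[i]}$ for $v_i$) yields the stated formula. As a consistency check, the hockey-stick identity $\sum_{i=1}^{n-m+1}\binom{n-i}{m-1}=\binom{n}{m}$ shows $\binom{n}{m}^{-1}\sum_i b_i=1$, matching the fact that $\approxiw{n,m}$ is exact when all $v_i$ are equal.

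For the running-time remark, sorting the $v_i$ costs $O(n\ln n)$; the coefficients $b_1,\dots,b_n$ can be generated incrementally (successive ratios are simple rational functions of $i$, or they can be built via Pascal's rule), so computing them together with the weighted sum $\sum_i b_i v_{[i]}$ requires only $O(n)$ further arithmetic operations, for $O(n\ln n)$ overall. I do not expect a genuine obstacle here, as the argument is elementary; the single point that deserves explicit care is the treatment of ties, which is why I would spell out the well-definedness remark in the first paragraph rather than tacitly assume the $v_i$ are distinct.
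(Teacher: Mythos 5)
Your proposal is correct and follows essentially the same argument as the paper: rewrite the sum over subsets by noting that the maximum over a subset is the sorted value at the subset's smallest (sorted-order) index, then count that $\binom{n-i}{m-1}$ subsets have minimum index $i$. Your added remarks on ties, the hockey-stick consistency check, and the $O(n\ln n)$ cost are sensible elaborations but do not change the substance.
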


\begin{proof}
  For $\col s \in \binom{\nset{n}}{m}$, let $\col v_{\col s} = (v_{s_1}, \ldots, v_{s_m})$. 
  We can see that $\max \col v_{\col s} = v_{[i]}$ where $i$ is the smallest index in $\col s$. 
  Thus, 
  \begin{equation*}
    \sum_{\col s \in \binom{\nset{n}}{m}} \max  \col v_{\col s}= \sum_{i=1}^n b_i v_{[i]},
  \end{equation*}
  where $b_i$ is the number of sets $\col s \in \binom{\nset{n}}{m}$ with minimum index equal to $i$. 
  The conclusion follows because there are $n-i$ indices larger than $i$, but we can take $m-1$ of them only when $i \in \nset{n-(m-1)}$.
\end{proof}

To further understand both the computational simplification and the quality of this approximation, consider this real example of computing the (non-approximate) complete U-statistic IW-ELBO estimator $\iwestim{4,2}^U$.
Suppose that the sampled log-weights are
\begin{equation*}
  \col v = (-6034.091, -4351.335, -4157.236, -5419.201).
\end{equation*}
Given the $\binom{4}{2}$ sets, we can evaluate the kernel $h(v_i, v_j) = \ln (e^{v_i} + e^{v_j}) - \ln 2$ on each of them to generate the following table:
\begin{center}
  \begin{tabular}{ c|c }
    $(v_i, v_j)$ & $h(v_i, v_j)$\\
    \hline
    $(-6034.091, -4351.335)$ & $\textcolor{Set-6-2}{-4352.028}$\\
    $(-6034.091, -4157.236)$ & $\textcolor{Set-6-1}{-4157.930}$\\
    $(-6034.091, -5419.201)$ & $\textcolor{Set-6-3}{-5419.895}$\\
    $(-4351.335, -4157.236)$ & $\textcolor{Set-6-1}{-4157.930}$\\
    $(-4351.335, -5419.201)$ & $\textcolor{Set-6-2}{-4352.028}$\\
    $(-4157.236, -5419.201)$ & $\textcolor{Set-6-1}{-4157.930}$\\
    \hline 
    Mean  & $-4432.956$
  \end{tabular}
\end{center}
At three decimal points of precision, we see that $h(v_i, v_j) = \max(v_i, v_j) + \ln 2$ and therefore $\textcolor{Set-6-1}{-4157.930}$, $\textcolor{Set-6-2}{-4352.028}$, and  $\textcolor{Set-6-3}{-5419.895}$  each appear $\binom{3}{1}$ times, $\binom{2}{1}$ times, and once, respectively.

\begin{figure*}[hbt!]
  \centering
  \includegraphics[width=0.32\linewidth, trim={0.3cm 0.5cm 0.3cm 0.4cm},clip]{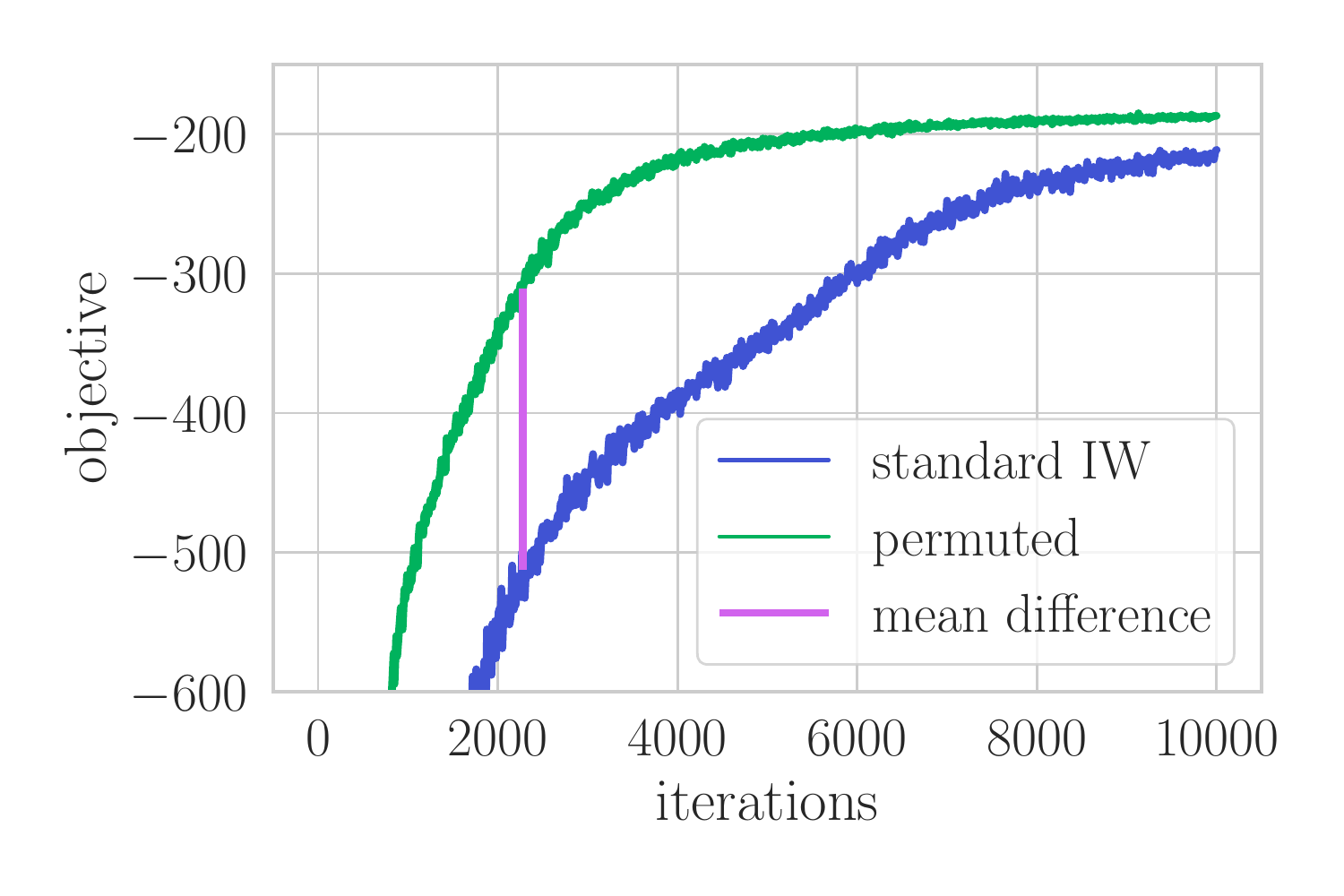}
  \includegraphics[width=0.32\linewidth, trim={0.3cm 0.5cm 0.3cm 0.4cm},clip]{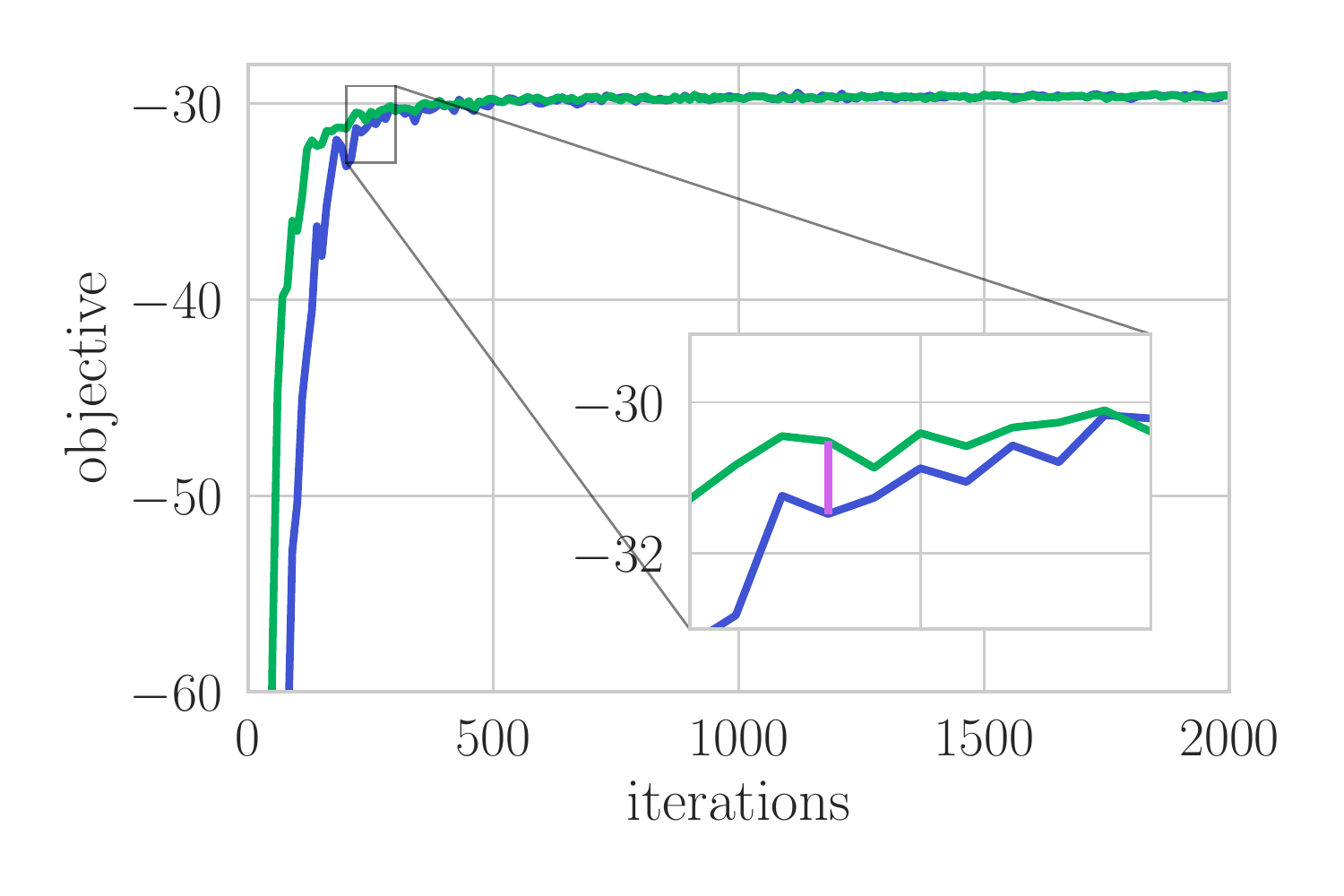}
  \includegraphics[width=0.32\linewidth, trim={0.3cm 0.5cm 0.3cm 0.4cm},clip]{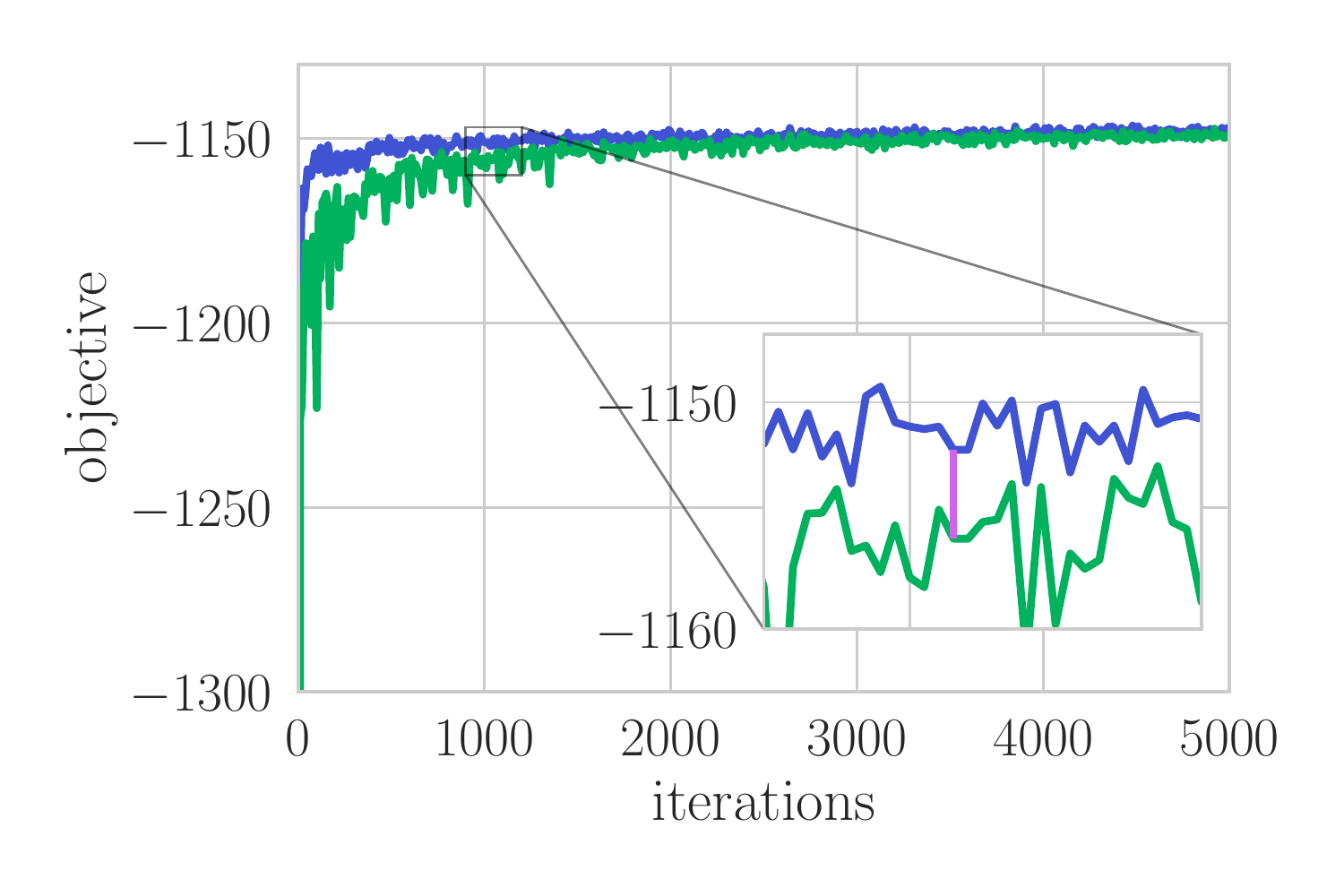}
  \caption{Median envelope of the objective using the \textcolor{permuted}{permuted-block} and \textcolor{standard-IW}{standard IW-ELBO} estimators for the \texttt{mushrooms} (left), \texttt{mesquite} (center) and \texttt{electric-one-pred} (right) models. 
  In all cases we used $n=16$ and $m=8$.
  For reference there is a line segment of length similar to the \textcolor{mean-difference}{average objective difference}, respectively, $202.08$, $0.97$, and $-3.91$.
  }
  \label{fig:envelope-three-models}
\end{figure*}

\subsection{Accuracy and Properties of the Approximation}

It is straightforward to derive both upper and lower bounds of the complete U-statistic IW-ELBO estimator $\iwestim{n,m}^U$ from this approximation.

\begin{proposition}\label{prop:approximation-bounds-u-stat}
  For any set of log-weights $v_{1:n} \in \Rn$, it holds that 
  \begin{equation}\label{eq:approximation-bounds-u-stat}
    \approxiw{n, m}(v_{1:n}) \leq \iwestim{n,m}^U(v_{1:n}) \leq \approxiw{n, m}(v_{1:n}) + \ln m.
  \end{equation}

  Moreover, the first inequality is strict unless $m=1$.
  On the other hand, the second inequality is an equality when all log-weights are equal.
\end{proposition}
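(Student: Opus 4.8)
The plan is to reduce everything to the elementary two-sided bound on $\logsumexp$: for any reals $a_1,\dots,a_m$,
\[
  \max_{i} a_i \;\le\; \ln\!\sum_{i=1}^m e^{a_i} \;\le\; \max_i a_i + \ln m .
\]
Here the left inequality is strict as soon as $m\ge 2$, since $\sum_{i} e^{a_i} = e^{\max_i a_i} + \sum_{i\neq i_0} e^{a_i} > e^{\max_i a_i}$ whenever there is at least one index $i\neq i_0$ (with $i_0$ attaining the max); and the right inequality is an equality exactly when $a_1=\dots=a_m$, since $\sum_i e^{a_i}\le m\,e^{\max_i a_i}$ with equality iff every $e^{a_i}$ equals $e^{\max_i a_i}$. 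The first step is simply to record this fact together with these two conditions.

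Next I would rewrite the kernel as $h(v_{s_1},\dots,v_{s_m}) = \ln\sum_{i=1}^m e^{v_{s_i}} - \ln m$ and apply the display above with $a_i = v_{s_i}$, obtaining, for every $\col s\in\binom{\nset{n}}{m}$,
\[
  \max(v_{s_1},\dots,v_{s_m}) - \ln m \;\le\; h(v_{s_1},\dots,v_{s_m}) \;\le\; \max(v_{s_1},\dots,v_{s_m}) .
\]
Averaging these over all $\col s\in\binom{\nset{n}}{m}$ (applying $\binom{n}{m}^{-1}\sum_{\col s}(\cdot)$, which preserves inequalities), the left-hand sides average to $\approxiw{n,m}(v_{1:n})$ by the definition of Estimator~\ref{def:approximation}, the middle averages to $\iwestim{n,m}^U(v_{1:n})$ by Estimator~\ref{estimator-complete-U}, and the right-hand sides average to $\approxiw{n,m}(v_{1:n}) + \ln m$. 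This is exactly \eqref{eq:approximation-bounds-u-stat}.

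For the two refinements: when $m\ge 2$ the per-$\col s$ lower bound above is strict for every $\col s$, so its average is strict, which yields strictness of the first inequality (and when $m=1$ the kernel is the identity minus $\ln 1 = 0$, so both sides coincide); when all log-weights are equal, within each subset $\col s$ the values $v_{s_1},\dots,v_{s_m}$ are all equal, so the per-$\col s$ upper bound holds with equality and hence so does its average, giving equality in the second inequality. The whole argument is elementary and I do not anticipate a genuine obstacle; the only care needed is bookkeeping the $\ln m$ terms (the kernel $h$ already subtracts $\ln m$, so the ``$+\ln m$'' slack in \eqref{eq:approximation-bounds-u-stat} comes entirely from the $\ln m$ gap in the $\logsumexp$ upper bound) and stating the lower-bound strictness as the condition ``$m\ge 2$'' rather than as a condition on the values of the $v_i$.
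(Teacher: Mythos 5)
Your proof is correct and follows essentially the same route as the paper's: both reduce the claim to the elementary two-sided bound $\max_i a_i \le \ln\sum_i e^{a_i} \le \max_i a_i + \ln m$ applied to each subset $\col s$ and then averaged over $\binom{\nset{n}}{m}$. You additionally spell out the strictness/equality conditions, which the paper's proof leaves implicit, but there is no substantive difference in approach.
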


\begin{proof}
  This is a direct application of well-known inequalities for log-sum-exp. Let $h(v_1, \ldots, v_m) = \ln \sum_{i=1}^M e^{v_i}$ and $f(v_1, \ldots, v_m) = \max\{v_1, \ldots, v_m\}$. Then, for all $v_{1:m} \in \R^m$,
  \begin{equation}
    \label{eq:log-sum-exp}
    f(v_{1:m}) \leq h(v_{1:m}) \leq f(v_{1:m}) + \ln m.
  \end{equation}
  To see this, write $\hat v = \max\set{v_1, \dots, v_m}$. 
  Then,
  \begin{equation}
    \label{eq:sum-exp}
    \tfrac{1}{m}e^{\hat v} \leq \tfrac{1}{m}\sum_{j=1}^me^{v_j} \leq  e^{\hat v}.
  \end{equation}
  Eq.~\eqref{eq:log-sum-exp} follows from applying $\ln$ to \eqref{eq:sum-exp}.
  Eq.~\eqref{eq:approximation-bounds-u-stat} then follows from \eqref{eq:log-sum-exp} and the definitions of $\approxiw{n,m}$ and $\iwestim{n,m}^U$.
\end{proof}

One comment about the approximation quality is in order: in the limit as the variance of the log-weights decreases, the second inequality in the bounds above becomes tight, and the approximation error of $\approxiw{n,m}(v_{1:n})$ approaches its maximum $\ln m$. This can be seen during optimization when maximizing the IW-ELBO, which tends to reduce log-weight variance [cf.~Figure~\ref{fig:approximation-error}].

\subsection{Second-Order Approximation}

Based on our understanding of the approximation properties of $\approxiw{n,m}$, we can add a correction term to obtain a second-order approximation.

\begin{estimator}\label{def:second-order-approximation}
  For $2 \leq m \leq n$, the \emph{second-order approximate complete-U-statistic IW-ELBO estimator} is
  \begin{equation}\label{eq:defininition-second-order}
    \approxiwsecond{n,m}(V_{1:n}) = \approxiw{n,m}(V_{1:n}) + {\textstyle\binom{n}{m}^{-1}}\!\!\sum_{i=1}^{\raisebox{2pt}{$\scriptstyle n-(m-1)$}}\!\!\tilde b_i\ln(1 + e^{\Delta V_{[i]}}),
  \end{equation}
  where $\Delta V_{[i]} = V_{[i+1]} - V_{[i]}$ and  $ \tilde b_i = \binom{n-1-i}{m-2}$.
\end{estimator}

This can still be computed in $O(n\ln n)$ time and gives a tighter approximation than $\approxiw{n,m}$.

\begin{proposition}
  For all $v_{1:n} \in \Rn$,
  \begin{equation*}
    \approxiw{n,m}(v_{1:n}) < \approxiwsecond{n, m}(v_{1:n}) \leq \iwestim{n,m}^U(v_{1:n}).
  \end{equation*}
  Moreover, the second inequality is an equality exactly when $m=n=2$.
\end{proposition}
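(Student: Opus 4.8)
The plan is to reduce both claimed inequalities to a single per-subset rewriting of the gap $\iwestim{n,m}^U - \approxiw{n,m}$, followed by an elementary counting argument. Write $w_1 \ge w_2 \ge \dots \ge w_n$ for the log-weights sorted in non-increasing order (so $w_i = v_{[i]}$ in the notation of Proposition~\ref{prop:approximation-easy-to-compute}), and associate to each $\col s \in \binom{\nset n}{m}$ the set of \emph{ranks} $\{t_1 < \dots < t_m\} \subseteq \nset n$ that its entries occupy in this order, so that $\max(v_{s_1},\dots,v_{s_m}) = w_{t_1}$. Factoring the maximum out of the log-sum-exp kernel gives, for each $\col s$,
\[
  h(v_{s_1},\dots,v_{s_m}) - \max(v_{s_1},\dots,v_{s_m}) + \ln m = \ln\Bigl(1 + \sum_{j=2}^{m} e^{w_{t_j}-w_{t_1}}\Bigr),
\]
and summing over all subsets,
\[
  \binom{n}{m}\bigl(\iwestim{n,m}^U(v_{1:n}) - \approxiw{n,m}(v_{1:n})\bigr)
  = \sum_{\col s \in \binom{\nset n}{m}} \ln\Bigl(1 + \sum_{j=2}^{m} e^{w_{t_j}-w_{t_1}}\Bigr).
\]
The first inequality is then immediate: by Estimator~\ref{def:second-order-approximation}, $\approxiwsecond{n,m} - \approxiw{n,m} = \binom{n}{m}^{-1}\sum_{i=1}^{n-m+1}\tilde b_i\ln(1+e^{\Delta v_{[i]}})$ is a sum of nonnegative terms with $\tilde b_1 = \binom{n-2}{m-2}\ge 1$ (as $m \le n$) and $\ln(1+e^{\Delta v_{[1]}}) > 0$ strictly, hence $\approxiwsecond{n,m} > \approxiw{n,m}$.

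For the second inequality I would lower-bound the displayed sum by keeping only the $j=2$ term in each inner sum, $\ln(1+\sum_{j=2}^m e^{w_{t_j}-w_{t_1}}) \ge \ln(1+e^{w_{t_2}-w_{t_1}})$, and then count by the two smallest ranks: the number of $\col s$ with $(t_1,t_2) = (a,b)$, $a<b$, is $\binom{n-b}{m-2}$ (choose the remaining $m-2$ ranks from $\{b+1,\dots,n\}$), so
\[
  \sum_{\col s}\ln(1+e^{w_{t_2}-w_{t_1}}) = \sum_{1\le a<b\le n}\binom{n-b}{m-2}\ln(1+e^{w_b-w_a}).
\]
Since every term here is positive, discarding all non-consecutive pairs ($b > a+1$) only decreases the sum, and what remains, $\sum_{i=1}^{n-m+1}\binom{n-1-i}{m-2}\ln(1+e^{w_{i+1}-w_i})$ — the range being exactly where the coefficient is nonzero — equals $\binom{n}{m}(\approxiwsecond{n,m} - \approxiw{n,m})$ after identifying $\binom{n-(i+1)}{m-2}$ with the paper's $\tilde b_i = \binom{n-1-i}{m-2}$. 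Chaining the two bounds gives $\iwestim{n,m}^U - \approxiw{n,m} \ge \approxiwsecond{n,m} - \approxiw{n,m}$, i.e.\ $\approxiwsecond{n,m} \le \iwestim{n,m}^U$.

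For the equality case, the chain above has exactly two inequalities. The first is an equality iff for every $\col s$ the dropped terms $\sum_{j\ge3}e^{w_{t_j}-w_{t_1}}$ vanish; since exponentials are positive this happens iff $m = 2$. The second is an equality iff no pair $(a,b)$ with $b \ge a+2$ has $\binom{n-b}{m-2}>0$; testing $(a,b)=(1,3)$ shows this forces $n-3 < m-2$, i.e.\ $n = m$. Both conditions hold simultaneously exactly when $m = n = 2$, and in that single case a direct computation confirms $\approxiwsecond{2,2} = h(v_1,v_2) = \iwestim{2,2}^U$.

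The only real care points are the treatment of ties among the $v_i$ (the factoring identity is unaffected, $w_{t_1}$ being the value at the smallest rank regardless, and strictness of ``dropping the $j\ge3$ terms'' when $m\ge3$ still holds because those exponentials are strictly positive) and making the combinatorial bookkeeping match the paper's coefficients $\tilde b_i$ and summation limit $n-(m-1)$ exactly. I expect the counting step — verifying that $\binom{n-b}{m-2}$ is indeed the number of $m$-subsets with prescribed two smallest ranks, and that it aligns with $\tilde b_i$ — to be where the argument demands the most attention, though nothing there is deep.
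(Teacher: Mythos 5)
Your proof is correct and follows essentially the same route as the paper's: lower-bound each kernel evaluation by retaining the largest log-weight and, exactly when the rank adjacent to the smallest one belongs to the subset, the second term, with the count $\binom{n-1-i}{m-2}$ of such subsets reproducing the coefficients $\tilde b_i$. Your treatment of the equality case (strictness from the dropped $j\ge 3$ terms when $m\ge 3$ and from the discarded non-consecutive pair $(1,3)$ when $n>m$) is actually more explicit than the paper's proof, which asserts the ``moreover'' clause without argument.
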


\begin{proof}
  The first inequality follows directly because the terms in the summation of (\ref{eq:defininition-second-order}) are positive reals.

  For the second inequality, take $\col s \in \binom{\nset{n}}{m}$ and let $i$ be the smallest index in $\col s$.
  If $\col s$ is one of the $\binom{n-1-i}{m-2}$ sets on which $i$ is the smallest index and $i+1 \in \col s$, then 
  \begin{equation*}
    \begin{aligned}
      \tfrac{1}{m}e^{v_{[i]}}(1+e^{v_{[i+1]} - v_{[i]}}) = \frac{e^{v_{[i]}} + e^{v_{[i+1]}}}{m} \leq \tfrac{1}{m}\sum_{s\in \col s}e^{v_s}.
    \end{aligned}
  \end{equation*}
  If $i+1 \not\in \col s$, we know that $\frac{1}{m}e^{v_{[i]}} \leq \frac{1}{m}\sum_{j=1}^me^{v_{s_j}}$.
  We finish by applying logarithm to both inequalities and the definition of $\approxiw{n,m}$ and $\iwestim{n,m}^U$.
\end{proof}
  In contrast to $\approxiw{n, m}$, the second-order approximation is not a U-statistic.
  However, it is a tighter lower-bound of $\iwestim{n,m}^U$.

\begin{note}
  To use the approximations as an objective, we need them to be differentiable.
  If the distribution of $W$ is absolutely continuous, then the approximations are almost surely differentiable because \texttt{sort} is almost surely differentiable, with Jacobian given by the permutation matrix it represents \textup{[cf.~\cite{pmlr-v119-blondel20a}]}.
\end{note}

\section{{Experiments}}\label{sec:experiments}
In this section, we empirically analyze the methods proposed in this paper.
We do so in three parts: we first study the gradient variance, VI performance, and running time for IWVI in the ``black-box'' setting\footnote{That is, VI that uses only black-box access to $\ln p(z, x)$ and its gradients.}; we then focus on a case where the posterior has a closed-form solution, using random Dirichlet distributions;  and finally, we study the performance of the estimators for Importance-Weighted Autoencoders.

For black-box IWVI, we experiment with two kinds of models:
Bayesian logistic regression with 5 different UCI datasets \citep{Dua:2019} using both diagonal and full covariance Gaussian variational distributions,\footnote{
  That is, $p(y \given \theta) = \prod_{i=1}^N \mathrm{Bernoulli}\big(y_i;\, \mathrm{logistic}(\theta^T x_i)\big)$ for fixed $x_{i} \in \mathbb{R}^d$ and $p(\theta) = \mathcal{N}(\theta;\, \col 0, \sigma^2 \col I_d)$, and $V = \ln p(\theta, y) - \ln q(\theta)$ for $\theta \sim q(\theta)$ with either $q(\theta) = \mathcal{N}(\theta;\ \mu, \text{diag}(w))$ or $q(\theta) = \mathcal{N}(\theta;\, \mu, LL^T)$; we optimize over $(\mu, w)$ or $(\mu, L)$, with $w$ constrained to be positive (via exponential transformation) and $L$ constrained to be lower triangular with positive diagonal (via softplus transformation).
  Parameters were randomly initialized prior to transformations from iid standard Gaussians.
  }
and a suite of 12 statistical models from the Stan example models \citep{standevstancore, carpenter2017stan}, with both diagonal (all models) and full covariance Gaussian (10 models\footnote{The 
\texttt{irt-multilevel} model diverged for all configurations using a full covariance Gaussian.}) approximating distributions.
We provide additional information regarding the models in Appendix~\ref{appendix:Dataset Description}.
For each model, the variational parameters were optimized using stochastic gradient descent with fixed learning rate for $15$ different logarithmically spaced learning rates.
We used $n=16$ samples per iteration except for the running time analysis, and experimented with $m \in \{2,4,8\}$.
Since this is a stochastic optimization problem, we ran every combination of model, learning rate, $n$, and $m$, using $50$ different random seeds to assess typical performance.
We used the reparameterization gradient estimator as the base gradient estimator, and also provide in Appendix~\ref{appendix:mean-difference} and  \ref{appendix:figures-comparisons} (very similar) results for the doubly-reparameterized (DReG) gradient estimator.
\begin{figure}[hbt!]
  \centering
  \begin{subfigure}[b]{0.48\textwidth}
    \includegraphics[width=0.95\columnwidth]{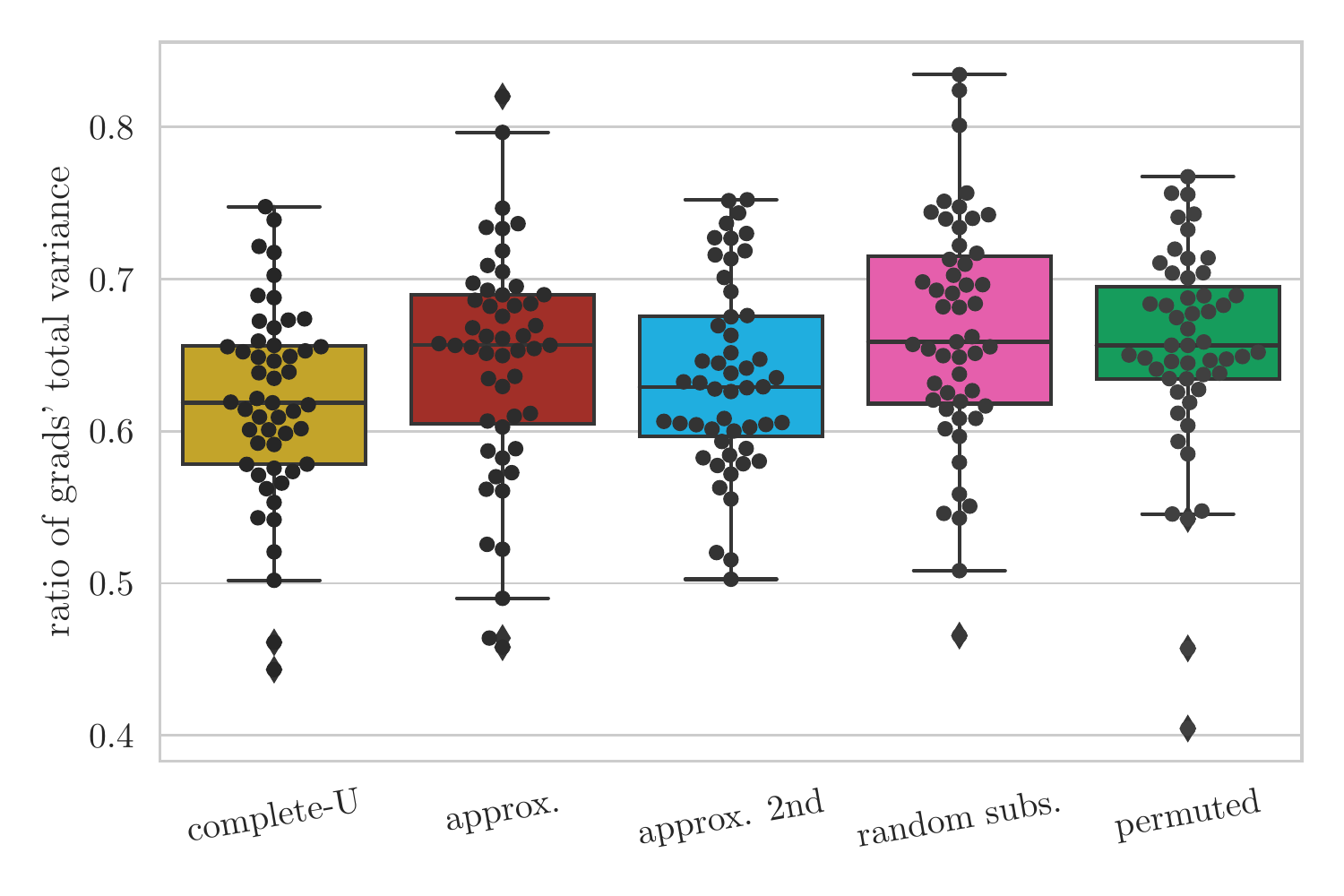}
    \caption{Ratio of gradient's total variance.}
    
  \end{subfigure}
  \begin{subfigure}[b]{0.48\textwidth}
 \includegraphics[width=0.95\textwidth]{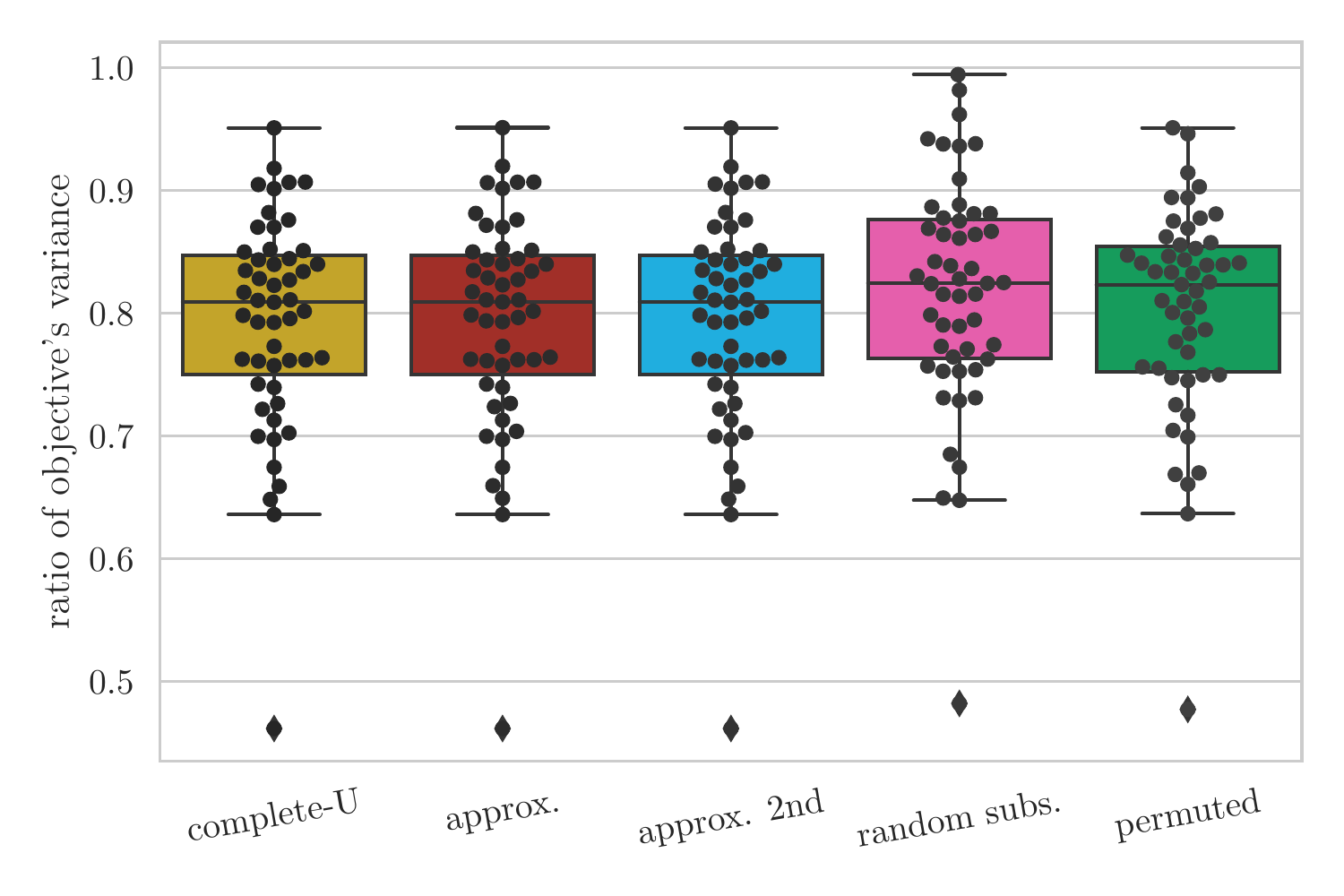}
  \caption{Ratio of the objective's variances.}
  \label{fig:ratio-variance-objective}
\end{subfigure}%
\caption{
Ratios of the trace of the variance (i.e., the total variance) of different proposed gradient estimators to that of the standard gradient estimator,  and objective's variances (b)  for the \texttt{mushrooms} dataset ($d=96$).
All ratios are below $1$, which indicates variance reduction.
The estimators can be ordered by variance: the complete-U-statistic estimator and second-order approximation are lowest, followed by the permuted-block and first-order approximation, and finally the random subsets estimator.
Since $\ell = 20$, we expect the permuted-block estimator to achieve $1-\tfrac{1}{20} = 0.95$ of the variance reduction; the estimated variance reduction is $91.24\%$ and $95.72\%$ for the objective.}
\label{fig:ratio-total-variance}
\end{figure}

\paragraph{Gradient Variance}
We first confirm empirically that U-statistics reduce the variance of gradients within IWVI.
For each random seed, we performed IWVI using the complete U-statistic $\iwestim{n,m}^U$ for 10,000 iterations.
Every $200$ iterations, we computed the gradients, given the values of the parameters at that time, for each of the alternative gradient estimators: the standard estimator, the complete U-statistic estimator, its approximations, the permuted-block estimator with $\ell=20$, and the random subsets estimator with $k= 20\tfrac{n}{m}$ (a number of sets equal to the permuted version).
In all cases we used $n=16$ and $m=8$.
For each gradient estimator $\hat \G$, we estimate the total variance $\tr(\var[\hat\G])$ using $200$ independent gradient samples.

Figure~\ref{fig:ratio-total-variance}--(a) shows the total variance of each estimator as a fraction of that of the standard estimator (that is, the ratio $\tr(\var[\hat\G])/\tr(\var[\hat\G_{n,m}])$) for Bayesian logistic regression with the \texttt{mushrooms} dataset.
The ratios are between $60\%$ and $70\%$ for all methods, with the random subsets estimator showing the highest variance and the complete U-statistic the lowest.
This confirms it is possible to reduce gradient variance with U-statistics. 
Moreover, the estimators can be ordered by their gradients' total variance. 
The complete U-statistic estimator and the 2nd order approximation have the smallest variance, the permuted-block estimator has slightly higher variance, and the random subsets estimator has the highest variance (but still less than that of the standard estimator).
Recall that, according to Prop.~\ref{prop:number-of-permutations-variance-reduction}, $\ell = 20$ implies that the permuted estimator achieves $95\%$ of the variance reduction provided by the complete-U-statistic IW-ELBO.
In this case, we estimated the variance reduction of the permuted-block estimator to be $91.24\%$ of that of the complete-U-statistic estimator. 
We also show the ratio of the objective's variances in Figure~\ref{fig:ratio-total-variance}--(b). 
Most estimators have a ratio of around $80\%$, but the permuted-block estimator achieves a $95.72\%$ variance reduction provided by the complete U-statistic estimator.

\begin{table}[h]
  \centering
  \caption{For Bayesian logistic regression {(a)} and Stan models {(b)}, difference in nats of the average objective (higher is better) when trained using the permuted estimator vs.\ the standard IW-ELBO estimator. 
  The variational distribution is a Gaussian distribution using either a full rank covariance matrix (first column) or a diagonal one (second column). 
  The entry is ``---'' when the model diverged for all configurations, and NaN when it diverged for the specific configuration (other configurations are found in the Appendix).}
  \label{table:mean-difference}
  \begin{subfigure}[t]{0.48\textwidth}
    \centering
      \caption{Bayesian logistic regression models.}
      \label{table:logistic-diff}
      {\small
      \begin{tabular}{|l|r|r|}
          \hline
          \multirow{3}{*}{Dataset}   &   \multicolumn{2}{|c|}{$\text{permuted} - \text{standard IW-ELBO}$}\\
          \cline{2-3}
          {}   &\multicolumn{2}{|c|}{$m=8$} \\
          \cline{2-3}
              &   Full Covariance & Diagonal \\
          \hline
          a1a &   112.42    & 4.48 \\
          australian  &     3.36    & 1.38 \\
          ionosphere  &     16.58   & 0.06 \\
          mushrooms  &  202.56    &  8.69\\
          sonar  &    50.62    & 0.19 \\
        \hline
      \end{tabular}
      }
      \end{subfigure}\hfill%
  \begin{subfigure}[t]{0.48\textwidth}
    \centering
      \caption{Stan models.}
      \label{table:stan-diff}
      {\small
\begin{tabular}{|l|r|r|}
  \hline
  \multirow{3}{*}{Dataset}   &   \multicolumn{2}{|c|}{$\text{permuted} - \text{standard IW-ELBO}$}\\
  \cline{2-3}
  {}   &\multicolumn{2}{|c|}{$m=8$} \\
  \cline{2-3}
      &   Full Covariance & Diagonal \\
  \hline
  congress &    19.80 &  7.33 \\
  election88 &   1133.70 & 6.94 \\
  election88Exp & NaN &   32.76 \\
  electric &  80.46 & 4.32 \\
  electric-one-pred &    -3.45&   -3.91 \\
  hepatitis &  NaN &  0.65\\
  hiv-chr &  283.19  &  15.84 \\
  irt &  16077.03 &  1.00 \\
  irt-multilevel & ---&   62.32 \\
  mesquite &  1.41&  2.00 \\
  radon &   268.98& 14.83 \\
  wells &  -0.03 & -0.11 \\
\hline
\end{tabular}
}
\end{subfigure}
\end{table}

\paragraph{VI Performance}
Ultimately, our goal is to provide a more efficient optimization method.
To measure typical stochastic optimization performance, we first took the maximum objective value across learning rates in each iteration to construct the optimization \emph{envelope} for each method and random seed [cf.\ \cite{geffner2018using}].
The purpose of the envelope is to eliminate the learning rate as a nuisance parameter since stochastic optimization methods are very sensitive to learning rate, and one common benefit of variance reduction is to allow a larger learning rate.
Then, for each method we used the median envelope across the 50 random seeds as a measure of its typical optimization behavior over iterations. 
Examples can be seen in Figure~\ref{fig:envelope-three-models}.
As a final metric for each method we computed the \emph{average objective} value (of the median envelope) across iterations up to  10,000 iterations,\footnote{For some datasets, such as \texttt{sonar}, we observed early convergence by visual inspection and computed the metric only up to that point. See Figures in Appendix~\ref{appendix:figures-comparisons}.} excluding the first 50 iterations, which were highly noisy and sensitive to initialization.
This is a useful summary metric to measure the tendency of one method to ``stay ahead'' of another (see the examples in Figure~\ref{fig:envelope-three-models}). \citet{agrawal2020advances} found a similar metric effective for learning rate selection.
\begin{wraptable}{l}{0.45\textwidth}
  \centering
  \caption{Times for $1000$ iterations of optimization with different estimators on the \texttt{mushrooms} dataset with $n=24$, $m=12$, averaged over $100$ trials. }
  \label{table:times}
  {\small
  \begin{tabular}{| l | r |c|}
      \hline
      \multirow{2}{*}{Method} & \multicolumn{2}{|c|}{Time (s)}\\
      \cline{2-3}
      {} & Mean & Std \\
      \hline
      $\iwestim{24,12}$ standard IW-ELBO & $5.47$  & $0.04$ \\
      $\iwestim{24,12}^U$ complete U & $1573.27$ & $2.12$ \\
      $\iwestim{\mathcal S_{20\frac{24}{12}}}$ random subsets& $6.49$ & $0.09$ \\
      $\iwestim{\mathcal S_{\Pi}}^{20}$ permuted block & $6.45$ & $0.09$ \\
      $\approxiw{24,12}$ approx. & $5.25$ & $0.02$ \\
      $\approxiwsecond{24,12}$ approx.\ 2nd order & $5.54$ & $0.04$\\
      \hline
  \end{tabular}
  }
\end{wraptable}

Table~\ref{table:mean-difference} shows the average objective difference between the permuted-block and standard IW-ELBO estimators for $m=8$, with positive numbers indicating better performance for permuted-block.
We focus on permuted-block here because it consistently achieves an excellent tradeoff between variance reduction and running time.
In Appendix~\ref{appendix:mean-difference} we present similar results for two additional methods---the 2nd order approximation and the permuted-block estimator with DReG as the base gradient estimator---and for different values of $m$; in Appendix~\ref{appendix:figures-comparisons} we show the median envelopes themselves for many combinations of models, methods, and $m$.
The examples in  Figure~\ref{fig:envelope-three-models} were selected to show cases where the difference is big (left), small (center), and negative (right);
to contextualize our summary metric, we also added a reference vertical bar showing an iteration where the difference between the two envelopes is approximately equal to the average objective difference.
These results make it clear that the permuted-block estimator improves the convergence of stochastic optimization for VI across a range of models and settings.
In {\texttt{electric-one-pred}}, permuted-block was consistently worse, but we verified that it still had lower-variance gradients; we speculate this is an unstable model where higher variance gradients help escape local optima.

\paragraph{Running Time} Table~\ref{table:times} shows the times required to complete $1000$ iterations of optimization with different estimators for Bayesian logistic regression with the \texttt{mushrooms} data set, averaged over $100$ trials.

Here we used $n=24$ and $m=12$, which makes it a challenging setting for the complete U-statistic estimator, because there are $\binom{24}{12} =$ 2,704,156 sets.
As expected, the complete U-statistic is orders of magnitude slower.
The approximations are faster than the standard estimator because the smallest $m-1$ log-weights do not contribute to the objective, and thus their gradients are not needed.
The permuted-block estimator incurs an extra cost of less than 1 ms per iteration compared to the standard IW-ELBO estimator for this model (a 18\% increase).
However, the increased time only depends on $m$, $n$, and $\ell$, and not on the model.
Even for a very complex model, we would expect the extra time for these settings to be on the order of order of 1 ms per iteration, and be negligible compared to other costs. For example, for the \texttt{irt}, the standard estimator took $16.62$s $(0.11)$, while the permuted-block estimator took $17.54$s $(0.12)$, i.e., a 5\% increase.

\paragraph{Incomplete U-Statistics and Approximations}
Previously, we analyzed the methods by comparing them to the standard IW-ELBO estimator.
In this part we will use the complete U-statistics as a baseline: given a realization of log-weights $v_1, \dots, v_n$, we measure the difference between the objective value assigned by the complete U-statistic and the alternatives.
For this experiment, we will use $n=16$, $m=8$ and the Bayesian logistic regression dataset \texttt{mushrooms}.
In Figure~\ref{fig:approximation-error} we plot the difference measured in nats as a function of the iteration step. 
From that plot (especially the inset), it is clear that the approximations are underestimators.
\begin{wrapfigure}{r}{0.5\textwidth}
  \centering
 \includegraphics[width=0.5\textwidth, trim={0.3cm 0.5cm 0.3cm 0.4cm},clip]{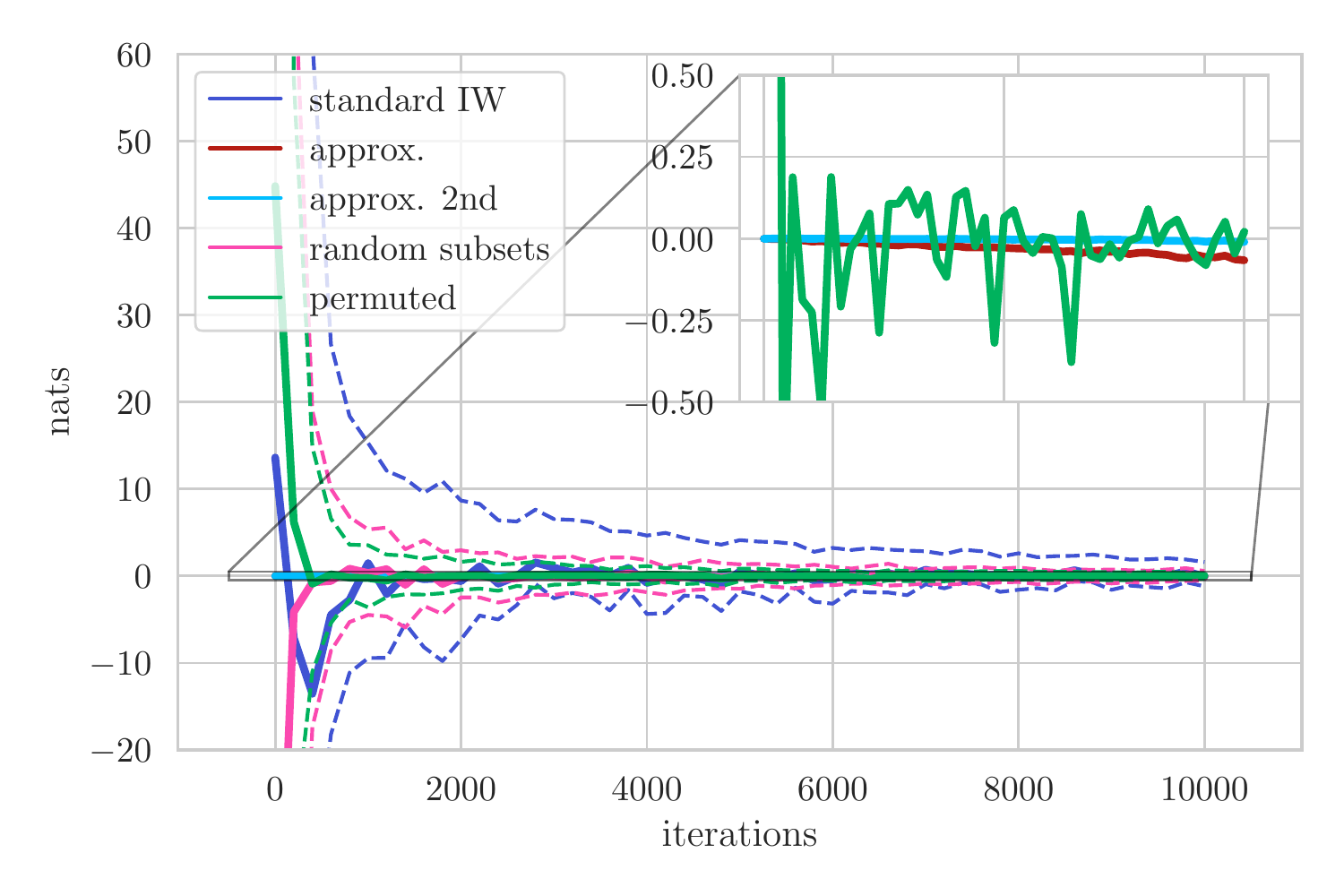}
  \caption{Difference between estimated value using any of the methods and that of the complete U-statistic, in nats, for the \texttt{mushrooms} dataset. (25th and 75th percentiles shown with dashed lines.)
  As optimization progresses, the error of the incomplete U-statistics decreases, but the error of the approximation increases.
  The inset shows the permuted and both approximations in a region that is $0.5$ nats of the target value. }
  \label{fig:approximation-error}
  \vspace{-1.5\baselineskip}
\end{wrapfigure}
It is also interesting to see the approximations and the incomplete U-statistic being complementary: as the optimization progresses, the error of the approximations increases, but the error made by the incomplete U-statistics decreases.
We expected this result because the variance of the log-weights decreases with the optimization.
(The upper-bound of Eq.~\eqref{eq:approximation-bounds-u-stat} is achieved when all $v_i$ are equal; 
but this is exactly the case when all the incomplete U-statistics coincide.)

\paragraph{Dirichlet Experiments}
We conducted experiments with random Dirichlet distributions as described in \citep{domke2018importance}.
The goal was twofold.
First, this is a setting where exact inference is possible, so we can evaluate IWVI with different estimators on the accuracy of posterior inference directly, instead of using the IW-ELBO as a proxy.
Secondly, this is a simple setting to demonstrate that the optimal value of $m$ is often strictly between $1$ and $n$, which is the regime in which our variance reduction methods are useful (all but the approximations coincide when $m\in \set{1,n}$). 
We again used SGD with 15 different learning rates and selected, for each configuration, the learning rate that achieved the best mean objective after 10k iterations.
We optimized each configuration using, for this experiment, 100 different random seeds. 
We estimated the accuracy of the approximation by computing the  distance (error) between the distribution's covariance and the estimated covariance of the learned approximation.
Figure~\ref{fig:dirichlet-error-IW}  shows the error as a function of $m$ for different values of $n$ when using the standard IW-ELBO estimator for a random Dirichlet with 50 parameters.
The figure shows that the optimal $m$ increases with $n$, but slowly.
Figure~\ref{fig:random-dirichlet-no-iw} shows similar results for other estimators: permuted, DReG, and permuted-DReG.
In all cases, we confirm that, for this model, the optimal $m$ lies strictly between $1$ and $n$.
We provide in Appendix~\ref{appendix:dirichlet} additional details.

\begin{figure}[h]
  \centering
   \includegraphics[width=\columnwidth, trim={0.3cm 0.3cm 0.3cm 0.0cm},clip]{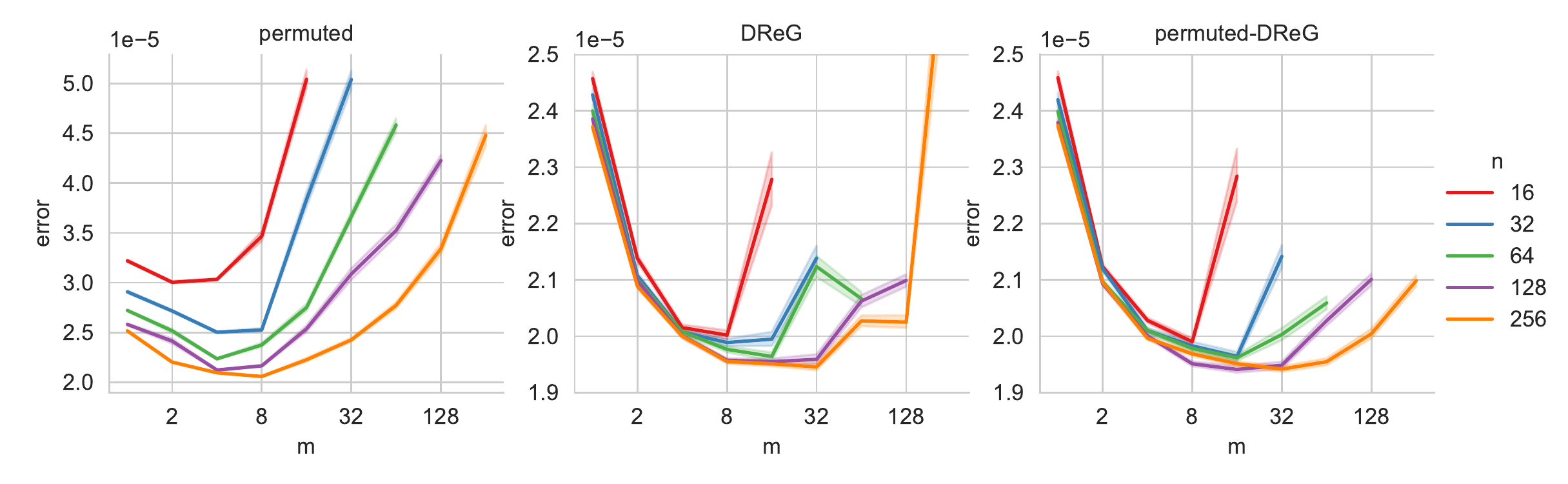}
   \caption{Distance between the covariance of a random Dirichlet distribution with 50 parameters and the covariance of its approximation as a function of $m$ for different values of $n$ after training using the permuted (left), DReG (center) or permuted-DReG (right) estimators.}
  \label{fig:random-dirichlet-no-iw}
\end{figure}

\subsection{Importance-Weighted Autoencoders}
To evaluate the performance of the proposed methods on IWAEs, we trained IWAEs on 4 different datasets: \texttt{MNIST}, \texttt{KMNIST}, \texttt{FMNIST}, and \texttt{Omniglot}.
We compare the standard IW-ELBO estimator and DReG estimators to their permuted versions, i.e., the permuted and permuted-DReG estimators. We also evaluate the second-order approximation to the complete-U-statistic estimator.
We trained each combination of dataset, method, and value of $m$ using five different random seeds, and the optimization was run for 100 epochs using Adam \citep{kingma2015adam}.

In Figure~\ref{fig:vaes-kmnist-objetive}, we present the final testing objective for different values of $m$ (using $n=50$ in all cases) for the \texttt{KMNIST} dataset, and we show results for the rest of the datasets in Figure~\ref{fig:vaes-objetive} in the Appendix~\ref{sec:vae-details} along with further details on the experiments.
The figure shows that the permuted versions consistently improved over the base versions, i.e., the \textcolor{permuted}{permuted} estimator improves over the \textcolor{standard-IW}{standard-IW} estimator in the same way as the \textcolor{permuted-dreg}{permuted-DReG} estimator improves over the \textcolor{dreg}{DReG} estimator.
Additionally, we can see that the \textcolor{approx.-2nd}{second-order approximation} outperforms the permuted estimator for small values of $m$.
However, as $m$ increases, the permuted estimator takes the lead, which is expected since the approximation error grows with $m$.

\begin{figure}[!h]
  \centering
   \includegraphics[width=\columnwidth, trim={0.3cm 0.3cm 0.3cm 0.0cm},clip]{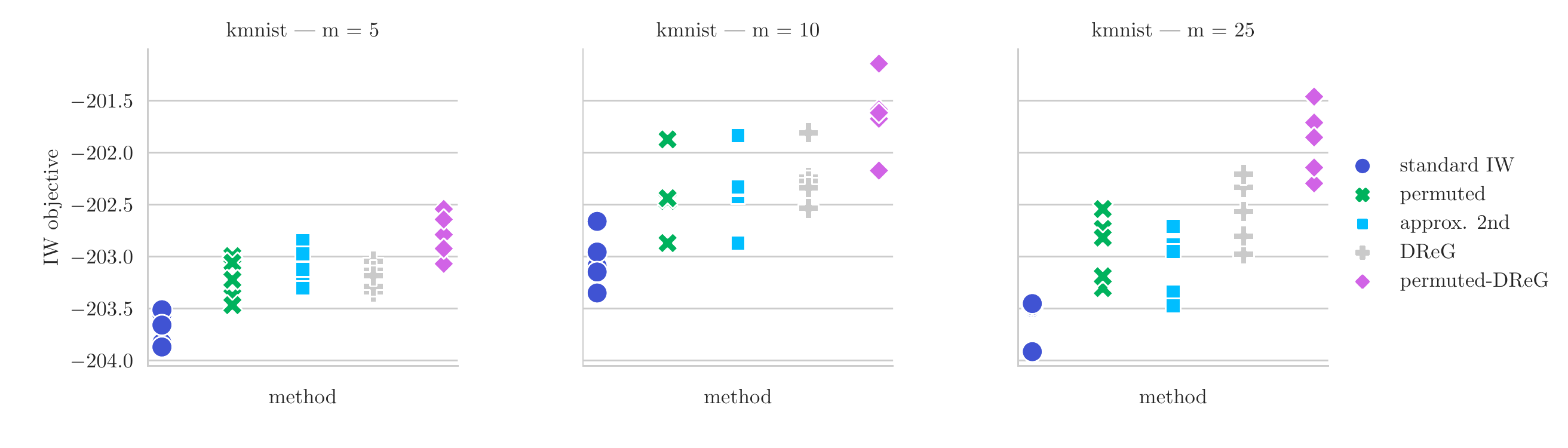}
   \caption{Objective's distribution for \texttt{KMNIST} with $n=50$  and different combinations of methods and $m$.}
   \label{fig:vaes-kmnist-objetive}
\end{figure}

We also compared the total wall-clock time required to complete the optimization with different estimators in Figure~\ref{fig:vaes-times} in the Appendix. 
It can be seen that there is not a significant time increase for using our proposed methods.

\section{{Related and Future Work}}\label{sec:related-work}

Gradient variance reduction is an active topic in VI because of its impact on stochastic optimization.
Our complete- and incomplete-U-statistic methods are complementary to other variance reduction techniques: they are compatible with different base estimators, including the Doubly Reparameterized Gradient Estimator (DReG) of \citet{tucker2018doubly} and the generalization of \citet{bauer2021generalized}.
Another broad approach to variance reduction is the use of control variates~\citep{miller2017reducing, mnih2014neural, ranganath2014black, geffner2018using, geffner2020approximation}.
In the case of IWVI, the control variates of \citet{mnih2016variational} and \citet{lievin2020optimal}, which are designed for the score function estimator, could work as a base estimator from which a U-statistic can be built. 
We leave its empirical evaluation for future work.

Importance-weighted estimators are also being used for the Reweighted Wake-Sleep (RWS) procedure \citep{bornschein2014reweighted, pmlr-v115-le20a} and its variations \citep{dieng2019reweighted, pmlr-v108-kim20b}.
Given the connection between the gradient estimators of RWS and that of the IW-ELBO [see \citet{pmlr-v108-kim20b}], these estimators could be potentially improved by using the ideas of the complete- and incomplete-U-statistic methods.

The numerical approximations of Section~\ref{sec:approximation} follow a different principle of approximating the objective; it is an open question if such an approximation can be used in conjunction with other variance reduction methods.
Interestingly, the first-order approximation expresses the objective as a convex combination of the ordered log-weights (minus a constant), which has a form similar to the objective presented in \citet{wang2018variational}, albeit with different coefficients.

It would be an interesting future line of work to extend the order of Proposition~\ref{prop:comparison-variances} to a partial order of random variables in the sense of \citet{mattei2022uphill}.

\citet{nowozin2018debiasing} introduced Jackknife-VI (JVI), which uses complete-U statistics to reduce bias instead of variance. 
In Appendix~\ref{sec:appendix-Jackknife} we briefly discuss possible applications of our methods to JVI.

\section{Conclusion}\label{sec:conclusion}
We introduced novel methods based on U-statistics to reduce gradient and objective variance for importance-weighted variational inference, and found empirically that the methods improve black-box VI performance and IWAEs training.
We recommend using the permuted-block estimator in any situation with $r > 1$ replicates: it never increases variance, and can be tuned based on computational budget to achieve any desired fraction of the possible variance reduction.
In practice, a 95\% fraction of possible variance reduction can be achieved at a very low cost.
The approximations of Section~\ref{sec:approximation} are extremely fast and provide substantial variance reduction, but are not universally better than the standard estimator because they introduce some bias that can hurt performance, especially in easier models near the end of optimization.

\subsubsection*{Acknowledgments}
This material is based upon work supported by the National Science Foundation under Grant Nos.\ 1749854 and 1908577.
JB would like to thank Tom\'as Geffner and Miguel Fuentes for their helpful discussions.
\vfill
\bibliography{bib}

\begin{thebibliography}{53}
\providecommand{\natexlab}[1]{#1}
\providecommand{\url}[1]{\texttt{#1}}
\expandafter\ifx\csname urlstyle\endcsname\relax
  \providecommand{\doi}[1]{doi: #1}\else
  \providecommand{\doi}{doi: \begingroup \urlstyle{rm}\Url}\fi

\bibitem[Agrawal et~al.(2020)Agrawal, Domke, and Sheldon]{agrawal2020advances}
Abhinav Agrawal, Justin Domke, and Daniel Sheldon.
\newblock Advances in black-box {VI}: Normalizing flows, importance weighting,
  and optimization.
\newblock In \emph{Advances in {Neural} {Information} {Processing} {Systems}
  ({NeurIPS})}, pp.\  1--8, 2020.

\bibitem[Bauer \& Mnih(2021)Bauer and Mnih]{bauer2021generalized}
Matthias Bauer and Andriy Mnih.
\newblock Generalized doubly reparameterized gradient estimators.
\newblock In \emph{International Conference on Machine Learning}, pp.\
  738--747. PMLR, 2021.

\bibitem[Bingham et~al.(2018)Bingham, Chen, Jankowiak, Obermeyer, Pradhan,
  Karaletsos, Singh, Szerlip, Horsfall, and Goodman]{bingham2018pyro}
Eli Bingham, Jonathan~P. Chen, Martin Jankowiak, Fritz Obermeyer, Neeraj
  Pradhan, Theofanis Karaletsos, Rohit Singh, Paul Szerlip, Paul Horsfall, and
  Noah~D. Goodman.
\newblock {Pyro: Deep Universal Probabilistic Programming}.
\newblock \emph{Journal of Machine Learning Research}, 2018.

\bibitem[Blei et~al.(2017)Blei, Kucukelbir, and McAuliffe]{blei2017variational}
David~M Blei, Alp Kucukelbir, and Jon~D McAuliffe.
\newblock Variational inference: A review for statisticians.
\newblock \emph{Journal of the American statistical Association}, 112\penalty0
  (518):\penalty0 859--877, 2017.

\bibitem[Blom(1976)]{blom1976some}
Gunnar Blom.
\newblock Some properties of incomplete {U}-statistics.
\newblock \emph{Biometrika}, 63\penalty0 (3):\penalty0 573--580, 1976.

\bibitem[Blondel et~al.(2020)Blondel, Teboul, Berthet, and
  Djolonga]{pmlr-v119-blondel20a}
Mathieu Blondel, Olivier Teboul, Quentin Berthet, and Josip Djolonga.
\newblock Fast differentiable sorting and ranking.
\newblock In Hal~Daumé III and Aarti Singh (eds.), \emph{Proceedings of the
  37th International Conference on Machine Learning}, volume 119 of
  \emph{Proceedings of Machine Learning Research}, pp.\  950--959. PMLR, 13--18
  Jul 2020.
\newblock URL \url{https://proceedings.mlr.press/v119/blondel20a.html}.

\bibitem[Bornschein \& Bengio(2015)Bornschein and
  Bengio]{bornschein2014reweighted}
J{\"o}rg Bornschein and Yoshua Bengio.
\newblock Reweighted wake-sleep.
\newblock In \emph{{{ICLR}}}, 2015.

\bibitem[Burda et~al.(2016)Burda, Grosse, and
  Salakhutdinov]{Burda_2016_ImportanceWeightedAutoencoders}
Yuri Burda, Roger Grosse, and Ruslan Salakhutdinov.
\newblock Importance {{Weighted Autoencoders}}.
\newblock In \emph{{{ICLR}}}, 2016.

\bibitem[Carpenter et~al.(2017)Carpenter, Gelman, Hoffman, Lee, Goodrich,
  Betancourt, Brubaker, Guo, Li, and Riddell]{carpenter2017stan}
Bob Carpenter, Andrew Gelman, Matthew~D Hoffman, Daniel Lee, Ben Goodrich,
  Michael Betancourt, Marcus Brubaker, Jiqiang Guo, Peter Li, and Allen
  Riddell.
\newblock Stan: A probabilistic programming language.
\newblock \emph{Journal of statistical software}, 76\penalty0 (1), 2017.

\bibitem[Chang \& Lin(2011)Chang and Lin]{chang2011libsvm}
Chih-Chung Chang and Chih-Jen Lin.
\newblock Libsvm: a library for support vector machines.
\newblock \emph{ACM transactions on intelligent systems and technology (TIST)},
  2\penalty0 (3):\penalty0 1--27, 2011.

\bibitem[Clanuwat et~al.(2018)Clanuwat, Bober-Irizar, Kitamoto, Lamb, Yamamoto,
  and Ha]{clanuwat2018deep}
Tarin Clanuwat, Mikel Bober-Irizar, Asanobu Kitamoto, Alex Lamb, Kazuaki
  Yamamoto, and David Ha.
\newblock Deep learning for classical japanese literature, 2018.

\bibitem[Cremer et~al.(2017)Cremer, Morris, and
  Duvenaud]{cremer2017reinterpreting}
Chris Cremer, Quaid Morris, and David Duvenaud.
\newblock Reinterpreting importance-weighted autoencoders.
\newblock \emph{arXiv preprint arXiv:1704.02916}, 2017.

\bibitem[Dieng \& Paisley(2019)Dieng and Paisley]{dieng2019reweighted}
Adji~B Dieng and John Paisley.
\newblock Reweighted expectation maximization.
\newblock \emph{arXiv preprint arXiv:1906.05850}, 2019.

\bibitem[Domke \& Sheldon(2018)Domke and Sheldon]{domke2018importance}
Justin Domke and Daniel Sheldon.
\newblock Importance weighting and variational inference.
\newblock In \emph{Proceedings of the 32nd International Conference on Neural
  Information Processing Systems}, pp.\  4475--4484, 2018.

\bibitem[Domke \& Sheldon(2019)Domke and Sheldon]{domke2019divide}
Justin Domke and Daniel~R. Sheldon.
\newblock Divide and couple: Using {M}onte {C}arlo variational objectives for
  posterior approximation.
\newblock In \emph{Advances in Neural Information Processing Systems 32: Annual
  Conference on Neural Information Processing Systems 2019, NeurIPS 2019,
  December 8-14, 2019, Vancouver, BC, Canada}, pp.\  338--347, 2019.

\bibitem[Dua \& Graff(2017)Dua and Graff]{Dua:2019}
Dheeru Dua and Casey Graff.
\newblock {UCI} machine learning repository, 2017.
\newblock URL \url{http://archive.ics.uci.edu/ml}.

\bibitem[Finke \& Thiery(2019)Finke and Thiery]{finke2019importance}
Axel Finke and Alexandre~H Thiery.
\newblock On importance-weighted autoencoders.
\newblock \emph{arXiv preprint arXiv:1907.10477}, 2019.

\bibitem[Fu(2006)]{fu2006gradient}
Michael~C Fu.
\newblock Gradient estimation.
\newblock \emph{Handbooks in operations research and management science},
  13:\penalty0 575--616, 2006.

\bibitem[Geffner \& Domke(2018)Geffner and Domke]{geffner2018using}
Tomas Geffner and Justin Domke.
\newblock Using large ensembles of control variates for variational inference.
\newblock In \emph{Proceedings of the 32nd International Conference on Neural
  Information Processing Systems}, pp.\  9982--9992, 2018.

\bibitem[Geffner \& Domke(2020)Geffner and Domke]{geffner2020approximation}
Tomas Geffner and Justin Domke.
\newblock Approximation based variance reduction for reparameterization
  gradients.
\newblock \emph{Advances in Neural Information Processing Systems}, 33, 2020.

\bibitem[Gelman \& Hill(2006)Gelman and Hill]{gelman2006data}
Andrew Gelman and Jennifer Hill.
\newblock \emph{Data analysis using regression and multilevel/hierarchical
  models}.
\newblock Cambridge university press, 2006.

\bibitem[Halmos(1946)]{halmos1946theory}
Paul~R Halmos.
\newblock The theory of unbiased estimation.
\newblock \emph{The Annals of Mathematical Statistics}, 17\penalty0
  (1):\penalty0 34--43, 1946.

\bibitem[Hoeffding(1948)]{hoeffding1948class}
Wassily Hoeffding.
\newblock A class of statistics with asymptotically normal distribution.
\newblock \emph{Annals of Mathematical Statistics}, 19:\penalty0 273--325,
  1948.

\bibitem[Kim et~al.(2020)Kim, Hwang, and Kim]{pmlr-v108-kim20b}
Dongha Kim, Jaesung Hwang, and Yongdai Kim.
\newblock On casting importance weighted autoencoder to an em algorithm to
  learn deep generative models.
\newblock In Silvia Chiappa and Roberto Calandra (eds.), \emph{Proceedings of
  the Twenty Third International Conference on Artificial Intelligence and
  Statistics}, volume 108 of \emph{Proceedings of Machine Learning Research},
  pp.\  2153--2163. PMLR, 26--28 Aug 2020.
\newblock URL \url{https://proceedings.mlr.press/v108/kim20b.html}.

\bibitem[Kingma \& Ba(2015)Kingma and Ba]{kingma2015adam}
Diederik~P Kingma and Jimmy Ba.
\newblock Adam: A method for stochastic optimization.
\newblock In \emph{ICLR (Poster)}, 2015.

\bibitem[Kingma \& Welling(2013)Kingma and Welling]{kingma2013auto}
Diederik~P Kingma and Max Welling.
\newblock Auto-encoding variational bayes.
\newblock \emph{arXiv preprint arXiv:1312.6114}, 2013.

\bibitem[Kleijnen \& Rubinstein(1996)Kleijnen and
  Rubinstein]{kleijnen1996optimization}
Jack~PC Kleijnen and Reuven~Y Rubinstein.
\newblock Optimization and sensitivity analysis of computer simulation models
  by the score function method.
\newblock \emph{European Journal of Operational Research}, 88\penalty0
  (3):\penalty0 413--427, 1996.

\bibitem[Lake et~al.(2015)Lake, Salakhutdinov, and Tenenbaum]{lake2015human}
Brenden~M Lake, Ruslan Salakhutdinov, and Joshua~B Tenenbaum.
\newblock Human-level concept learning through probabilistic program induction.
\newblock \emph{Science}, 350\penalty0 (6266):\penalty0 1332--1338, 2015.

\bibitem[Le et~al.(2018)Le, Igl, Rainforth, Jin, and Wood]{le2017auto}
Tuan~Anh Le, Maximilian Igl, Tom Rainforth, Tom Jin, and Frank Wood.
\newblock Auto-encoding sequential {M}onte {C}arlo.
\newblock In \emph{6th International Conference on Learning Representations,
  {ICLR} 2018, Vancouver, BC, Canada, April 30 - May 3, 2018, Conference Track
  Proceedings}, 2018.

\bibitem[Le et~al.(2020)Le, Kosiorek, Siddharth, Teh, and
  Wood]{pmlr-v115-le20a}
Tuan~Anh Le, Adam~R. Kosiorek, N.~Siddharth, Yee~Whye Teh, and Frank Wood.
\newblock Revisiting reweighted wake-sleep for models with stochastic control
  flow.
\newblock In Ryan~P. Adams and Vibhav Gogate (eds.), \emph{Proceedings of The
  35th Uncertainty in Artificial Intelligence Conference}, volume 115 of
  \emph{Proceedings of Machine Learning Research}, pp.\  1039--1049. PMLR,
  22--25 Jul 2020.
\newblock URL \url{https://proceedings.mlr.press/v115/le20a.html}.

\bibitem[LeCun et~al.(2010)LeCun, Cortes, and Burges]{lecun2010mnist}
Yann LeCun, Corinna Cortes, and CJ~Burges.
\newblock Mnist handwritten digit database.
\newblock \emph{ATT Labs [Online]. Available:
  http://yann.lecun.com/exdb/mnist}, 2, 2010.

\bibitem[Lee(1990)]{Lee90u-statistics}
Laurence~Lock Lee.
\newblock \emph{U-statistics. Theory and Practice}.
\newblock CRC Press, 1990.
\newblock ISBN 9781351405867.

\bibitem[Li{\'e}vin et~al.(2020)Li{\'e}vin, Dittadi, Christensen, and
  Winther]{lievin2020optimal}
Valentin Li{\'e}vin, Andrea Dittadi, Anders Christensen, and Ole Winther.
\newblock Optimal variance control of the score-function gradient estimator for
  importance-weighted bounds.
\newblock \emph{Advances in Neural Information Processing Systems},
  33:\penalty0 16591--16602, 2020.

\bibitem[Lunn et~al.(2000)Lunn, Thomas, Best, and
  Spiegelhalter]{lunn2000winbugs}
David~J Lunn, Andrew Thomas, Nicky Best, and David Spiegelhalter.
\newblock Winbugs --- a bayesian modelling framework: concepts, structure, and
  extensibility.
\newblock \emph{Statistics and computing}, 10\penalty0 (4):\penalty0 325--337,
  2000.

\bibitem[Maddison et~al.(2017)Maddison, Lawson, Tucker, Heess, Norouzi, Mnih,
  Doucet, and Teh]{maddison2017filtering}
Chris~J. Maddison, Dieterich Lawson, George Tucker, Nicolas Heess, Mohammad
  Norouzi, Andriy Mnih, Arnaud Doucet, and Yee~Whye Teh.
\newblock Filtering variational objectives.
\newblock In \emph{Advances in Neural Information Processing Systems 30: Annual
  Conference on Neural Information Processing Systems 2017, December 4-9, 2017,
  Long Beach, CA, {USA}}, pp.\  6573--6583, 2017.

\bibitem[Mantel(1967)]{mantel1967232}
Nathan Mantel.
\newblock 232. note: Assumption-free estimators using {U} statistics and a
  relationship to the {Jackknife} method.
\newblock \emph{Biometrics}, pp.\  567--571, 1967.

\bibitem[Mattei \& Frellsen(2022)Mattei and Frellsen]{mattei2022uphill}
Pierre-Alexandre Mattei and Jes Frellsen.
\newblock Uphill roads to variational tightness: Monotonicity and monte carlo
  objectives.
\newblock \emph{arXiv preprint arXiv:2201.10989}, 2022.

\bibitem[Miller et~al.(2017)Miller, Foti, D~Amour, and
  Adams]{miller2017reducing}
AC~Miller, NJ~Foti, A~D~Amour, and Ryan~P Adams.
\newblock Reducing reparameterization gradient variance.
\newblock \emph{Advances in Neural Information Processing Systems}, 2017.

\bibitem[Mnih \& Gregor(2014)Mnih and Gregor]{mnih2014neural}
Andriy Mnih and Karol Gregor.
\newblock Neural variational inference and learning in belief networks.
\newblock In \emph{International Conference on Machine Learning}, pp.\
  1791--1799. PMLR, 2014.

\bibitem[Mnih \& Rezende(2016)Mnih and Rezende]{mnih2016variational}
Andriy Mnih and Danilo Rezende.
\newblock Variational inference for monte carlo objectives.
\newblock In \emph{International Conference on Machine Learning}, pp.\
  2188--2196. PMLR, 2016.

\bibitem[Naesseth et~al.(2018)Naesseth, Linderman, Ranganath, and
  Blei]{naesseth2018variational}
Christian~A. Naesseth, Scott~W. Linderman, Rajesh Ranganath, and David~M. Blei.
\newblock Variational sequential {M}onte {C}arlo.
\newblock In \emph{International Conference on Artificial Intelligence and
  Statistics, {AISTATS} 2018, 9-11 April 2018, Playa Blanca, Lanzarote, Canary
  Islands, Spain}, volume~84 of \emph{Proceedings of Machine Learning
  Research}, pp.\  968--977. {PMLR}, 2018.

\bibitem[Nowozin(2018)]{nowozin2018debiasing}
Sebastian Nowozin.
\newblock Debiasing evidence approximations: On importance-weighted
  autoencoders and {Jackknife} variational inference.
\newblock In \emph{International Conference on Learning Representations}, 2018.

\bibitem[Paszke et~al.(2019)Paszke, Gross, Massa, Lerer, Bradbury, Chanan,
  Killeen, Lin, Gimelshein, Antiga, Desmaison, Kopf, Yang, DeVito, Raison,
  Tejani, Chilamkurthy, Steiner, Fang, Bai, and Chintala]{NEURIPS2019_9015}
Adam Paszke, Sam Gross, Francisco Massa, Adam Lerer, James Bradbury, Gregory
  Chanan, Trevor Killeen, Zeming Lin, Natalia Gimelshein, Luca Antiga, Alban
  Desmaison, Andreas Kopf, Edward Yang, Zachary DeVito, Martin Raison, Alykhan
  Tejani, Sasank Chilamkurthy, Benoit Steiner, Lu~Fang, Junjie Bai, and Soumith
  Chintala.
\newblock Pytorch: An imperative style, high-performance deep learning library.
\newblock In H.~Wallach, H.~Larochelle, A.~Beygelzimer, F.~d\textquotesingle
  Alch\'{e}-Buc, E.~Fox, and R.~Garnett (eds.), \emph{Advances in Neural
  Information Processing Systems 32}, pp.\  8024--8035. Curran Associates,
  Inc., 2019.
\newblock URL
  \url{http://papers.neurips.cc/paper/9015-pytorch-an-imperative-style-high-performance-deep-learning-library.pdf}.

\bibitem[Platt(1999)]{platt1998sequential}
John~C Platt.
\newblock Sequential minimal optimization: A fast algorithm for training
  support vector machines.
\newblock In \emph{Advances in Kernel Methods-Support Vector Learning}, 1999.

\bibitem[Rainforth et~al.(2018)Rainforth, Kosiorek, Le, Maddison, Igl, Wood,
  and Teh]{rainforth2018tighter}
Tom Rainforth, Adam Kosiorek, Tuan~Anh Le, Chris Maddison, Maximilian Igl,
  Frank Wood, and Yee~Whye Teh.
\newblock Tighter variational bounds are not necessarily better.
\newblock In \emph{International Conference on Machine Learning}, pp.\
  4277--4285. PMLR, 2018.

\bibitem[Ranganath et~al.(2014)Ranganath, Gerrish, and
  Blei]{ranganath2014black}
Rajesh Ranganath, Sean Gerrish, and David Blei.
\newblock Black box variational inference.
\newblock In \emph{Artificial intelligence and statistics}, pp.\  814--822.
  PMLR, 2014.

\bibitem[Rezende et~al.(2014)Rezende, Mohamed, and
  Wierstra]{rezende2014stochastic}
Danilo~Jimenez Rezende, Shakir Mohamed, and Daan Wierstra.
\newblock Stochastic backpropagation and approximate inference in deep
  generative models.
\newblock In \emph{International conference on machine learning}, pp.\
  1278--1286. PMLR, 2014.

\bibitem[Saul et~al.(1996)Saul, Jaakkola, and Jordan]{saul1996mean}
Lawrence~K Saul, Tommi Jaakkola, and Michael~I Jordan.
\newblock Mean field theory for sigmoid belief networks.
\newblock \emph{Journal of Artificial Intelligence Research}, 4:\penalty0
  61--76, 1996.

\bibitem[{Stan Development Team}(2021)]{standevstancore}
{Stan Development Team}.
\newblock {Stan Example models}, 2021.
\newblock URL \url{https://github.com/stan-dev/example-models}.

\bibitem[Tucker et~al.(2018)Tucker, Lawson, Gu, and Maddison]{tucker2018doubly}
George Tucker, Dieterich Lawson, Shixiang Gu, and Chris~J Maddison.
\newblock Doubly reparameterized gradient estimators for {M}onte {C}arlo
  objectives.
\newblock In \emph{International Conference on Learning Representations}, 2018.

\bibitem[van~der Vaart(2000)]{van2000asymptotic}
Aad~W van~der Vaart.
\newblock \emph{Asymptotic statistics}, volume~3.
\newblock Cambridge University Press, 2000.

\bibitem[Wang et~al.(2018)Wang, Liu, and Liu]{wang2018variational}
Dilin Wang, Hao Liu, and Qiang Liu.
\newblock Variational inference with tail-adaptive $f$-divergence.
\newblock \emph{Advances in Neural Information Processing Systems}, 31, 2018.

\bibitem[Xiao et~al.(2017)Xiao, Rasul, and Vollgraf]{xiao2017/online}
Han Xiao, Kashif Rasul, and Roland Vollgraf.
\newblock Fashion-mnist: a novel image dataset for benchmarking machine
  learning algorithms, 2017.

\end{thebibliography}
\bibliographystyle{tmlr}
\newpage
\appendix
\section{{Experiments with Jackknife}}\label{sec:appendix-Jackknife}
The relation between the jackknife estimator and complete U-statistics was made explicit early on by \citet{mantel1967232}.
Recently, \citet{nowozin2018debiasing} used the jackknife estimator as a way to diminish the bias in IW-VI, proposing jackknife VI (JVI).
Using the notation of Section~\ref{sec:u-statistics}, the jackknife estimator is
\begin{equation}\label{eq:jackknife-vi}
  \iwestim{n}^{J, r}(V_{1:n}) = \sum_{j=0}^r c(n, r, j)\iwestim{n,n-j}^U(V_{1:n}),
\end{equation}
where $\iwestim{n,n-j}^U$ is the complete U-statistic IW-ELBO estimator, and the $c(n, r, j)$ are the Sharot coefficients [cf.~\citet{nowozin2018debiasing}].

In the original version (\ref{eq:jackknife-vi}), it evaluates a collection of $r$ complete U-statistics with $m$ ranging from $n$ to $n-r$.
However, there is no need to constrain $m$ in that way, i.e., we can instead compute the following estimator 
\begin{equation*}\label{eq:jackknife-vi-2}
  \iwestim{n, m}^{J, r}(V_{1:n}) = \sum_{j=0}^r c(m, r, j)\iwestim{n,m-j}^U(V_{1:n}), \qquad\text{for $r < m \leq n$,}
\end{equation*}
because the bias is a function of $m$.
This means that once $m$ is fixed, we can pick the number of independent samples $n \geq m$ to reduce the variance of the estimation.

\begin{wrapfigure}{l}{0.5\textwidth}
  \centering
  \includegraphics[width=0.50\columnwidth]{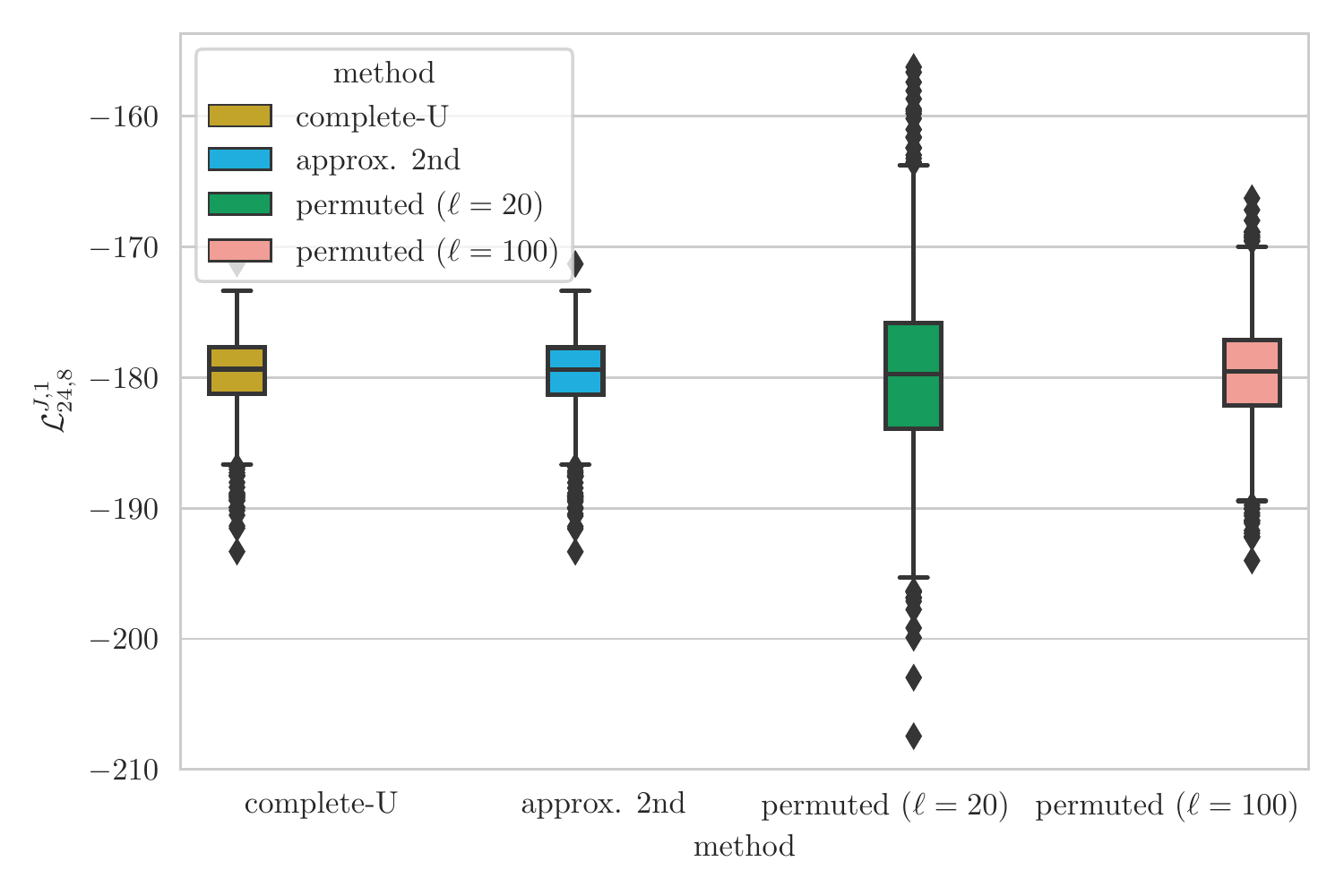}
  \caption{Distributions of the objective using the Jackknife estimator $\iwestim{24, 8}^{J, 1}$ on an approximation of the posterior of the \texttt{mushrooms} dataset, using different estimators.}
  \label{fig:jackknife}
  \vspace{-0.5cm}
\end{wrapfigure}
For our experiment, we optimized a variational approximation to the posterior of the \texttt{mushrooms} dataset as in Section~\ref{sec:experiments}.
We used the complete-U-statistic IW-ELBO estimator for optimization ($n'=16$ and $m=8$), and we choose the configuration with the highest final bound.

We evaluated the trained model using the Jackknife estimator with $n=24$, $r=1$ and $m=8$.
For the inner estimator we used the \textcolor{complete-U}{complete-U-statistic IW-ELBO} estimator, a variation of the {permuted-block IW-ELBO} estimator\footnote{Since $n$ is not an integer multiple of $m-1$, we reduced the total number of sets to $\ell m\floor{\frac{n}{m}}$.} with \textcolor{permuted}{$\ell = 20$} and with $\textcolor{permuted-100}{\ell =100}$, and the \textcolor{approx.-2nd}{second order approximation}.
Figure~\ref{fig:jackknife} shows that, when using the permuted estimator with $\ell = 20$, the increased variance gets translated into an increased variance in the final estimation.
However, it can be reduced by increasing the number of permutations to $\ell=100$.
In the following table, we show the time taken to compute the Jackknife estimator without accounting for the time of building the index set.\footnote{We pre-computed the index set for the complete-U-statistic, which in this case requires $735471 + 346104 = 1081575$ sets, taking $1.34$ seconds.}

\begin{center}
  \begin{tabular}{ l | r |c}
    Method & Mean time (ms) & Std \\
    \hline
    complete-U & $23.98$  & $2.28$ \\
    approx.\ 2nd& $0.71$ & $0.05$ \\
    permuted ($\ell = 20$) & $0.69$ & $0.02$ \\
    permuted ($\ell = 100$) & $0.86$ & $0.03$ \\
  \end{tabular}
\end{center}
  
  In this case, we observed that the alternatives are approximately $30$ times faster than using the complete-U-statistic.
\section{Additional Theoretical results}\label{section:appendix-additional-results}
In this section, we apply a result of \citet{halmos1946theory} to the estimation of the IW-ELBO.
Subject to {certain conditions}, the estimator $\iwestim{n, m}^U$ has the smallest variance of any unbiased estimator of the IW-ELBO. 
The technical conditions are needed to define the class of ``unbiased estimators'' as ones that are unbiased for all log-weight distributions in a non-trivial class.

\begin{proposition}\label{prop:thm-5-halmos}
Let $\E_F[\,\cdot\,]$ and $\var_F[\,\cdot\,]$ denote expectation and variance with respect to log-weights $V_1, \ldots, V_n$ drawn independently from distribution $F$, and let $\iwobjct{m}(F) = \E_{F} \bigl[\ln \frac{1}{m}\bigl(\sum_{i=1}^m  e^{V_i}\bigr)\bigr]$ be the IW-ELBO with log-weight distribution $F$.  
Let $\tilde{\mathcal{F}}$ denote the set of distributions supported on a finite subset of $\R$.
Suppose $\Phi$ is any estimator such that $\E_{\tilde F}[\Phi(V_{1:n})] = \iwobjct{m}(\tilde F)$ for all $\tilde F \in \tilde{\mathcal{F}}$. Then,
$$
\var_F[\iwestim{n,m}^U(V_{1:n})] \leq \var_F[\Phi(V_{1:n})]
$$
whenever the latter quantity is defined, for \emph{any} distribution $F$ on the real numbers (up to conditions of measurability and integrability).
\end{proposition}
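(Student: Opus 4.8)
The statement is an instance of the classical principle, essentially due to \citet{halmos1946theory}, that a complete U-statistic is the uniformly minimum-variance unbiased estimator of the functional it estimates, over any family of distributions rich enough to pin down the kernel; since $\iwestim{n,m}^U$ is precisely the complete U-statistic with kernel $h$ of Eq.~\eqref{eq:iwobjct}, one legitimate route is simply to cite that theorem. I would instead give the standard self-contained Lehmann--Scheff\'e argument, in two steps: (i) Rao--Blackwellize an arbitrary competitor $\Phi$ onto the order statistics, obtaining a symmetric estimator with no larger variance under \emph{every} $F$; (ii) use completeness of the order statistics for the finitely-supported family $\tilde{\mathcal F}$ to show that this symmetrization must coincide with $\iwestim{n,m}^U$.

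For step (i) I would set $\bar\Phi(v_{1:n}) = \frac{1}{n!}\sum_{\pi}\Phi(v_{\pi(1)},\dots,v_{\pi(n)})$, the average over all permutations $\pi$ of $\nset{n}$. The point is that for i.i.d.\ $V_{1:n}\sim F$, conditioning on the order statistics $(V_{(1)},\dots,V_{(n)})$ leaves the sample distributed as a uniformly random arrangement of its multiset of values, so $\E_F[\Phi(V_{1:n})\mid V_{(1)},\dots,V_{(n)}] = \bar\Phi(V_{(1)},\dots,V_{(n)})$ for \emph{every} $F$ --- with no tie-breaking subtleties, since the uniform-arrangement law and the permutation average overcount repeated values by the same multinomial factor. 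The tower property then gives $\E_F[\bar\Phi]=\E_F[\Phi]$ and conditional Jensen gives $\var_F[\bar\Phi]\le\var_F[\Phi]$ whenever the latter is finite; in particular $\bar\Phi$ remains unbiased for $\iwobjct{m}(\tilde F)$ on all of $\tilde{\mathcal F}$.

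For step (ii) I would note that $\bar\Phi$ and $\iwestim{n,m}^U$ are both symmetric, hence both functions of the order statistics, and both unbiased for $\iwobjct{m}(\tilde F)$ on $\tilde{\mathcal F}$; so $g := \bar\Phi - \iwestim{n,m}^U$ is a symmetric function with $\E_{\tilde F}[g(V_{1:n})]=0$ for every $\tilde F\in\tilde{\mathcal F}$. Plugging in $\tilde F$ that places masses $p_1,\dots,p_k$ on an arbitrary finite set $\{a_1,\dots,a_k\}\subset\R$ turns this into a polynomial identity in $(p_1,\dots,p_k)$ on the simplex, whose coefficients are positive multiples of the values of $g$ at sorted tuples drawn from $\{a_1,\dots,a_k\}$ with prescribed multiplicities; standard completeness arguments force them all to vanish. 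Letting the finite set range over $\R$ yields $g\equiv 0$, hence $\iwestim{n,m}^U(V_{1:n}) = \bar\Phi(V_{1:n})$ a.s.\ under every $F$, and combining with step (i), $\var_F[\iwestim{n,m}^U] = \var_F[\bar\Phi]\le\var_F[\Phi]$ for any $F$ for which the right side is finite --- the ``integrability'' caveat in the statement.

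I expect step (ii) --- completeness of the order statistics over $\tilde{\mathcal F}$, together with the measurability bookkeeping that the parenthetical ``up to conditions of measurability and integrability'' absorbs --- to be the main obstacle; it is also what forces the class over which $\Phi$ must be unbiased to be taken as large as $\tilde{\mathcal F}$, since a smaller class need not determine $g$ pointwise. Step (i) is routine exchangeability and conditional-expectation bookkeeping.
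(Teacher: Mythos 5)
Your proof is correct and matches the paper's in substance: the paper's entire proof is the one-line citation ``direct application of Theorem~5 of \citet{halmos1946theory},'' and your two-step argument (Rao--Blackwellization onto the order statistics, then completeness of the order statistics over the finitely-supported family to force $\bar\Phi = \iwestim{n,m}^U$) is precisely the content of that theorem's proof. The only difference is that you unpack the cited result rather than invoking it.
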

\begin{proof}
  The result is a direct application of Theorem~5 of \cite{halmos1946theory}.
\end{proof}

{For IW-ELBO estimation, the conditions are rather mild: we expect an IW-ELBO estimator to work for generic log-weight distributions. For gradient estimation, we take the conclusion lightly, because gradient estimators often use specific properties of the underlying distributions, such as having a reparameterization.}

\subsection{Additional Proofs}\label{subsec:appendix-proof-var-gradient}
In this section, we provide a proof of Proposition~\ref{prop:variance-grads}.
We first need to define a quantity similar to Definition~\ref{def:zetas}.
Recall, from the statement of the Proposition, that $g \colon \R^{d_{\mathcal Z}}\to \R^{d_\phi}$, and let $g_i$ denote its $i$th component.

\begin{definition}
  Let $Z_1, \dots, Z_{2m}$ be \textup{i.i.d.}\ drawn from $q_\phi$, and $1 \leq i \leq d_\phi$.
  For $0 \leq c \leq m$, take $\col s, \col s' \in \binom{\nset{2m}}{m}$ with $\abs{\col s \cap \col s'} = c$.
  Using $g$ from \textup{Proposition~\ref{prop:variance-grads}}, define
  \begin{equation*}
    \varsigma_c^{(i)} = \cov\Bigl[g_i(Z_{s_1}, \ldots, Z_{s_m}), \, g_i(Z_{s'_1}, \ldots, Z_{s'_m}) \Bigr],
  \end{equation*}
  which depends only on $c$ and not the particular $\col s$ and $\col s'$.
\end{definition}
We can now proceed with the proof.
\begin{proof}[Proof of \textup{Proposition~\ref{prop:variance-grads}}]
For $1 \leq i \leq d_\phi$, using Eq.~(\ref{eq:IW-ELBO-gradient-est}) and (\ref{eq:gradient-u-stat}), it follows from Theorem~{5.2} of \citet{hoeffding1948class} that 
\begin{equation*}
  \var [(\hat\G_{n,m}^U)_i] \leq \tfrac{m}{n}\varsigma_m^{(i)} = \var [(\hat\G_{n,m})_i].
\end{equation*}
From the definition of the covariance matrix, we get 
\begin{equation*}
  \tr(\var [\hat\G_{n,m}^U]) = \sum_{i=1}^{d_\theta} \var [(\hat\G_{n,m}^U)_i] \leq \sum_{i=1}^{d_\theta} \var [(\hat\G_{n,m})_i] = \tr (\var [\hat\G_{n,m}]).
\end{equation*} 
Using again that $\hat\G_{n,m}^U$ and $\hat\G_{n,m}$ are unbiased, that is, $\E[\hat\G_{n,m}^U] = \E[\hat\G_{n,m}]$, then 
\begin{equation*}
  \E \Vert \hat\G_{n,m}^U \Vert_2^2 = \sum_{i=1}^{d_\theta} (\var [(\hat\G_{n,m}^U)_i] + \E[(\hat\G_{n,m}^U)_i]^2) \leq \sum_{i=1}^{d_\theta} (\var [(\hat\G_{n,m})_i] + \E[(\hat\G_{n,m})_i]^2) = \E \Vert \hat\G_{n,m} \Vert_2^2.
\end{equation*}
\end{proof}
\section{Dataset description}\label{appendix:Dataset Description}
We provide a brief description of the datasets and models used for the experiments.
The models used for Bayesian logistic regressions were taken from the UCI Machine Learning Repository \cite{Dua:2019}.
The rest of the models are part of the Stan Example models \cite{standevstancore, carpenter2017stan}.

For the dataset used for Bayesian logistic regression, whenever there was a categorical variable with $k$ categories, we \textsl{dummified} it by creating $k-1$ dummies variables.
Additionally, for the \texttt{a1a} dataset, continuous variables were discretized into quintiles following the work of \citet{platt1998sequential}. 
However, since we were unable to find the file describing the actual process used for the discretization, some discrepancies remained.

\begin{table}[hbt!]
  \centering
  \caption{Description of datasets/models.}
  \label{table:dataset-description}
\begin{tabular}{|l|c|c|l|}
  \hline
  Name & \begin{tabular}{@{}c@{}}Num.~of\\ variables\end{tabular} & \begin{tabular}{@{}c@{}}Num.~of\\ records\end{tabular} & Comments\\
\hline
{a1a} & 105 & 1605 & \begin{tabular}{@{}c@{}}First 1605 instances of the \href{https://www.notion.so/a1a-90c39eba44894cd89a21d1f555c89901}{Adult Data Set},\\ following LIBSVM \cite{chang2011libsvm},\\
  + discretized continous and dummified.\end{tabular}\\
{australian} & 35 & 690 & From \href{https://archive.ics.uci.edu/ml/datasets/statlog+(australian+credit+approval)}{UCI} + dummified.\\
{ionosphere} & 35 & 351 & From \href{https://archive.ics.uci.edu/ml/datasets/ionosphere}{UCI}\\
{mushrooms} & 96 & 8124 & From \href{https://archive.ics.uci.edu/ml/datasets/mushroom}{UCI} + dummified.\\
{sonar} & 61 & 208 & From \href{https://archive.ics.uci.edu/ml/datasets/Connectionist+Bench+(Sonar,+Mines+vs.+Rocks)}{UCI}\\
{congress} & 4 & 343 & \citet{gelman2006data} Ch.~7\\
{election88} & 95 & 2015 & \citet{gelman2006data} Ch.~19\\
{election88Exp} & 96 & 2015 & \citet{gelman2006data} Ch.~19\\
{electric} & 100 & 192& \citet{gelman2006data} Ch.~23\\
{electric-one-pred} & 3& 192& \citet{gelman2006data} Ch.~23\\
{hepatitis} & 218 & 288 & WinBUGS \cite{lunn2000winbugs} \href{http://www.mrc-bsu.cam.ac.uk/wp-content/uploads/WinBUGS_Vol3.pdf}{examples}\\
{hiv-chr} & 173 & 369& \citet{gelman2006data} Ch.~7\\
{irt} & 501 & 30105& \citet{gelman2006data} Ch.~14\\
{irt-multilevel} & 604 & 30015& \citet{gelman2006data} Ch.~14\\
{mesquite} & 3 & 46& \citet{gelman2006data} Ch.~4\\
{radon} & 88 & 919 &\texttt{radon-chr} from \citet{gelman2006data} Ch.~19\\
{wells} & 2 & 3020 & \citet{gelman2006data} Ch.~7\\
{MNIST} & 784 & $60000 + 10000$ & \citet{lecun2010mnist}\\
{FMNIST} & 784 & $60000 + 10000$ & Fashion-MNIST, \citet{xiao2017/online}\\
{KMNIST} & 784 & $60000 + 10000$ & Kuzushiji-MNIST \citet{clanuwat2018deep}\\
{Omniglot} & 784 & $24345 + 8070$ & \citet{lake2015human} from \citet{Burda_2016_ImportanceWeightedAutoencoders}\\
\hline
\end{tabular}
\end{table}

\FloatBarrier
\section{Pairwise comparison}\label{appendix:mean-difference}
In this section we present the mean difference of the medians of the envelopes as described in Section~\ref{sec:experiments}.
We compare the methods that used the reparameterized gradients as based gradient estimator, i.e., the permuted-block estimator and the 2nd order approximation, to the standard IW-ELBO estimator.
Additionally, we compare the standard IW using DReG as a based gradient estimator with a version of the permuted-block that uses the DReG as a base gradient estimator, namely, the permuted DReG.

Interestingly, in the settings presented in Table~\ref{table:differences-stan-full-covariance}, only the proposed methods, i.e., the complete-U statistic with its two approximations, the permuted-block, and the random subsets, converged at some point. 
All the other methods diverged, which explains why we cannot compute the difference.

\begin{table}
  \centering
  \caption{Bayesian logistic regression models using a Gaussian approximation with a covariance matrix of full rank.
  Difference in nats of the average objective (higher values are better).}
\begin{tabular}{|l|r|r|r|r|r|r|r|r|r| }
  \hline
  \multirow{3}{*}{Dataset}   &   \multicolumn{3}{|c|}{permuted - standard IW} &\multicolumn{3}{|c|}{approx.\ 2nd - standard IW}&\multicolumn{3}{|c|}{permuted DReG - DReG}\\
  \cline{2-10}
     &   \multicolumn{9}{|c|}{$m$} \\
     \cline{2-10}
  {}   &    2 &     4 &     8 &    2 &     4 &     8&    2 &     4 &     8 \\
  \hline
  a1a &   45.36 &  100.56 &  112.42 &  47.53 &  105.30 &  122.34 & 27.30 & 111.74 & 119.51 \\
  australian &    1.31 &    2.61 &    3.36 &  1.07 &    2.37 &    3.22 &  1.45 &   1.87 &   3.94 \\
  ionosphere &    3.89 &   13.17 &   16.58 & 4.11 &   13.55 &   17.55 &  4.34 &  15.74 &  17.91 \\
  mushrooms &   64.46 &  145.85 &  202.56 & 67.28 &  153.58 &  214.31 & 93.45 & 186.01 & 179.02 \\
  sonar &   30.15 &   61.09 &   50.62 & 32.94 &   63.34 &   54.14 & 27.99 &  69.54 &  90.86 \\
\hline
\end{tabular}
\end{table}

\begin{table}
  \centering
  \caption{Bayesian logistic regression models using a diagonal Gaussian approximation.
  Difference in nats of the average objective (higher values are better).}
\begin{tabular}{|l|r|r|r|r|r|r|r|r|r| }
  \hline
  \multirow{3}{*}{Dataset}   &   \multicolumn{3}{|c|}{permuted - standard IW} &\multicolumn{3}{|c|}{approx.\ 2nd - standard IW}&\multicolumn{3}{|c|}{permuted DReG - DReG}\\
  \cline{2-10}
     &   \multicolumn{9}{|c|}{$m$} \\
     \cline{2-10}
  {}   &    2 &     4 &     8 &    2 &     4 &     8&    2 &     4 &     8 \\
  \hline
  a1a   &  1.54 &  4.01 &  4.48 &  1.49 &  4.08 &  4.45 &    1.40 & 12.67 & 12.86 \\
  australian &  0.02 &  1.00 &  1.38 &  0.05 &  0.96 &  1.28 &  -0.07 &  0.06 &  1.43 \\
  ionosphere & -0.10 & -0.10 &  0.06 & -0.12 & -0.21 &  0.00 &  -0.12 & -0.08 &  0.31 \\
  mushrooms &  1.88 &  2.76 &  8.69 &  1.74 &  3.30 &  9.16 & 1.94 &  4.69 &  8.50 \\
  sonar &  0.03 & -0.15 &  0.19 & -0.02 & -0.18 &  0.15 & 0.03 & -0.28 &  0.21 \\
\hline
\end{tabular}
\end{table}

\begin{table}
  \centering
  \caption{Stan models using a diagonal Gaussian approximation.
  Difference in nats of the average objective (higher values are better).}
\begin{tabular}{|l|r|r|r|r|r|r|r|r|r| }
  \hline
  \multirow{3}{*}{Dataset}   &   \multicolumn{3}{|c|}{permuted - standard IW} &\multicolumn{3}{|c|}{approx.\ 2nd - standard IW}&\multicolumn{3}{|c|}{permuted DReG - DReG}\\
  \cline{2-10}
     &   \multicolumn{9}{|c|}{$m$} \\
     \cline{2-10}
  {}   &    2 &     4 &     8 &    2 &     4 &     8&    2 &     4 &     8 \\
  \hline
  congress &  2.50 &  4.61 &  7.33          &  2.89 &   4.76 &  7.63 &  2.37 &   4.68 &  7.02  \\
  election88 &  0.12 &  2.66 &  6.94       &  0.12 &   2.84 &  7.06 &  0.10 &   2.67 &  6.83  \\
  election88Exp &  0.82 & 98.52 & 32.76     &  4.73 & 117.78 & 55.27 & -1.89 & 100.09 & 32.53  \\
  electric &  0.26 &  1.53 &  4.32          &  0.16 &   1.54 &  4.52 &  0.24 &   1.56 &  4.63  \\
  electric-one-pred &  0.66 & -0.77 & -3.91 &  0.74 &  -0.76 & -4.38 &  0.69 &  -0.77 & -3.93  \\
  hepatitis &  0.90 & -0.06 &  0.65        &  2.06 & 156.53 &  1.86 & -0.30 &   0.92 &  0.69  \\
  hiv-chr &  0.16 &  2.03 & 15.84           &  0.34 &   2.12 & 21.74 & -0.08 &   1.45 & 12.91  \\
  irt &  0.19 &  0.80 &  1.00               &  0.15 &   0.72 &  0.93  &  0.11 &   0.61 &  1.40  \\
  irt-multilevel & 35.69 & 43.79 & 62.32    & 29.74 &  48.20 & 53.64 & 34.66 &  50.26 & 55.22  \\ 
  mesquite &  0.20 &  0.58 &  2.00          & -0.06 &   0.28 &  1.74 & -0.29 &   0.39 &  1.99  \\
  radon &  7.88 &  5.79 & 14.83             &  7.85 &   8.91 & 65.49 &  8.16 &   7.56 & 60.92  \\
  wells & -0.02 &  0.01 & -0.11           & -0.20 &  -0.30 & -0.35 & -0.02 &  -0.04 & -0.14  \\
\hline
\end{tabular}
\end{table}

\begin{table}
  \centering
  \caption{Stan models using a full covariance Gaussian approximation.
  Difference in nats of the average objective (higher values are better).}
  \label{table:differences-stan-full-covariance}
\begin{tabular}{|l|r|r|r|r|r|r|r|r|r| }
  \hline
  \multirow{3}{*}{Dataset}   &   \multicolumn{3}{|c|}{permuted - standard IW} &\multicolumn{3}{|c|}{approx.\ 2nd - standard IW}&\multicolumn{3}{|c|}{permuted DReG - DReG}\\
  \cline{2-10}
     &   \multicolumn{9}{|c|}{$m$} \\
     \cline{2-10}
  {}   &    2 &     4 &     8 &    2 &     4 &     8&    2 &     4 &     8 \\
  \hline
  congress          & 11.62 &   12.02 &  19.80       & 12.33 &   12.57 &  20.46       & 13.55 &   13.11 &  20.96 \\
  election88        &      NaN &  1785 &  1133 &      NaN &  2494 &  2170 & NaN & 1776 & 1116 \\
  election88Exp    &      NaN &      NaN &      NaN &      NaN &      NaN &      NaN  & NaN &     NaN &     NaN \\
  electric         &   NaN &  -38.02 &  80.46        &   NaN &  -77.53 &  89.06       &   NaN &  -43.16 &  34.91 \\
  electric-one-pred& -1.81 &   -4.73 &  -3.45        & -3.18 &   -4.77 &  -4.37       & -1.79 &   -4.72 &  -3.46 \\
  hepatitis       &   NaN &     NaN &    NaN         &   NaN &     NaN &    NaN       &   NaN &     NaN &    NaN \\
  hiv-chr         &   NaN &     NaN & 283.19         &   NaN &     NaN & 325.79       &   NaN &     NaN &    NaN \\
  irt             & 17793 & 20064 & 16077   & 19399 & 22000 & 17686 & NaN &     NaN &     NaN \\  
  mesquite       &  2.57 &    1.20 &   1.41           &  2.43 &    0.95 &   1.19      &  2.67 &    0.53 &   0.74 \\ 
  radon          &   NaN & 1150 & 268.98        &   NaN & 1316 & 303.83         &   NaN & 11675 & 269.26 \\
  wells          &  0.02 &    0.07 &  -0.03         & -0.29 &   -0.31 &  -0.29        &  0.04 &    0.02 &  -0.02 \\
\hline
\end{tabular}
\end{table}

\FloatBarrier
\section{Random Dirichlet experiment}\label{appendix:dirichlet}
We follow \citet{domke2018importance} for the Random Dirichlet experiment.
For a randomly-sampled Dirichlet Distribution with 50 parameters, we approximate it using a  ($50-1$)-dimensional Gaussian distribution parameterized with a full rank covariance matrix, with its domain constrained to the simplex using PyTorch's distributions \citep{NEURIPS2019_9015}.

We optimize each configuration using 100 different random seeds.
We select the learning rate that achieved the highest objective among all learning rates that converged for all seeds. 
For each seed, we compute the Frobenius norm between the empirical covariance of the approximating distribution and that of the theoretical distribution (the error).
The distribution of this error is shown in Figure~\ref{fig:dirichlet-error-IW} and~\ref{fig:random-dirichlet-no-iw}.
We had to exclude eight outliers with errors greater than $10^{-4}$ and up to $0.5$. 
Interestingly, those outliers used either the DReG or permuted-DReG estimators.

\section{VAE details.}
\label{sec:vae-details}
For the variational autoencoders, we used, for all datasets, the architecture used by \citet{Burda_2016_ImportanceWeightedAutoencoders}.
We trained each configuration for a fixed number of epochs (100) using Adam \citep{kingma2015adam} with a learning rate of $10^{-4}$.
In all cases, we used a batch size of $500$, and a latent variable of dimension $50$, while taking $n=50$ samples.
Datasets were taken from PyTorch, except for the \texttt{Omniglot}, for which we used the construction provided by \citet{Burda_2016_ImportanceWeightedAutoencoders}.
We evaluated using the standard IW-ELBO estimator, regardless of the estimator used for the optimization.

To get consistent wall-clock time measurements, we trained only using CPU on dedicated servers, with disabled hyper-threading and a single task per core. 
Additionally, we used \texttt{set\_flush\_denormal} to avoid creating denormal numbers because some estimators create many of such numbers (especially DReG-like estimators), having a substantial negative impact on performance. 
Our implementation of DReG is based on Pyro's \citep{bingham2018pyro} not-yet-integrated implementation. 
We are not aware of a PyTorch implementation without the extra time penalty.

In the following plots, we provide the objective distribution for all dataset/method/$m$ configurations and the distribution of the wall-clock time. 

\begin{figure}[h]
  \centering
   \includegraphics[width=\columnwidth, trim={0.3cm 0.3cm 0.3cm 0.0cm},clip]{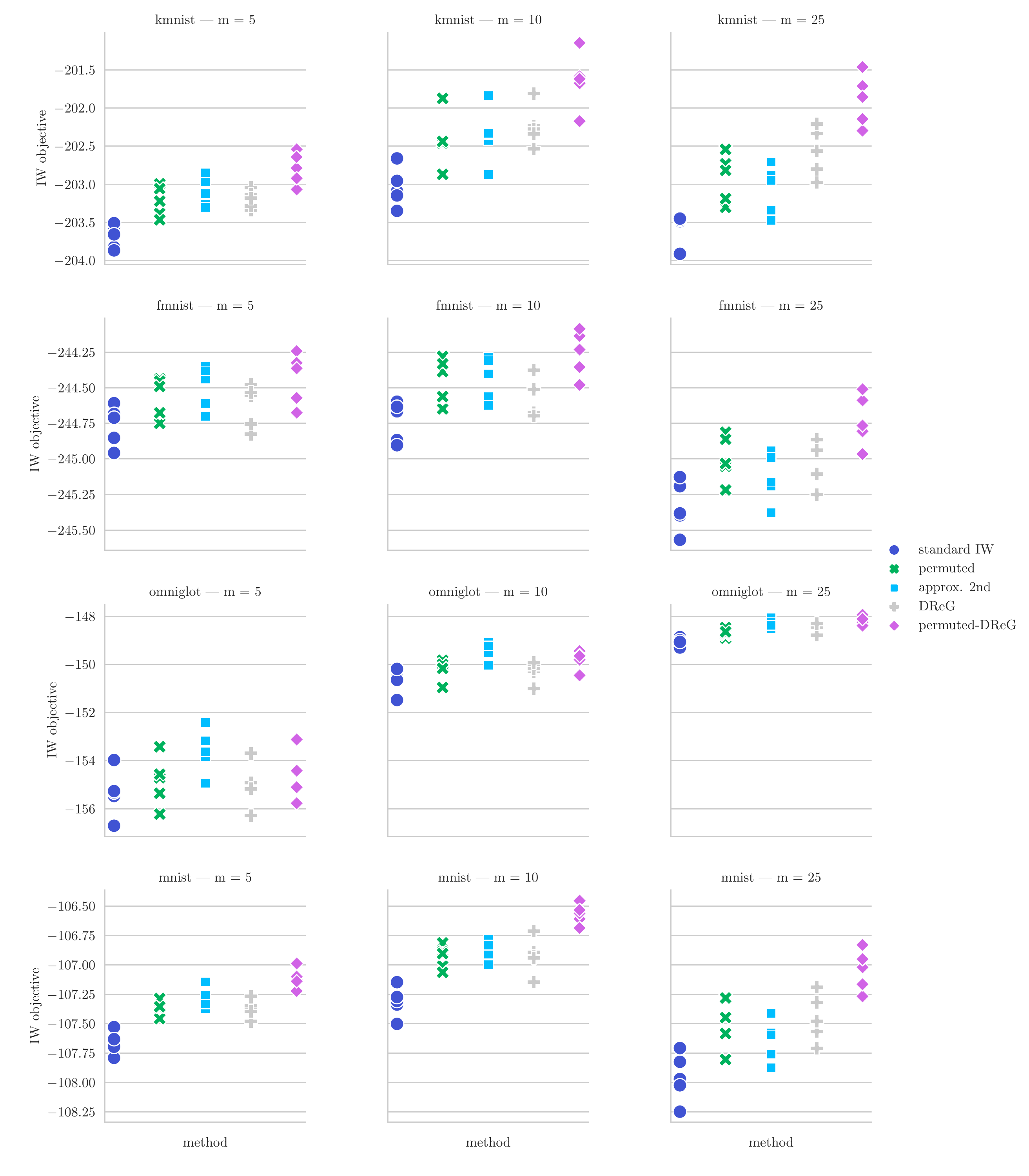}
   \caption{Distribution of the objective for different combinations of datasets, methods and $m$, using $n =50$ samples.}
   \label{fig:vaes-objetive}
\end{figure}
\begin{figure}[h]
  \centering
   \includegraphics[width=\columnwidth, trim={0.3cm 0.3cm 0.3cm 0.0cm},clip]{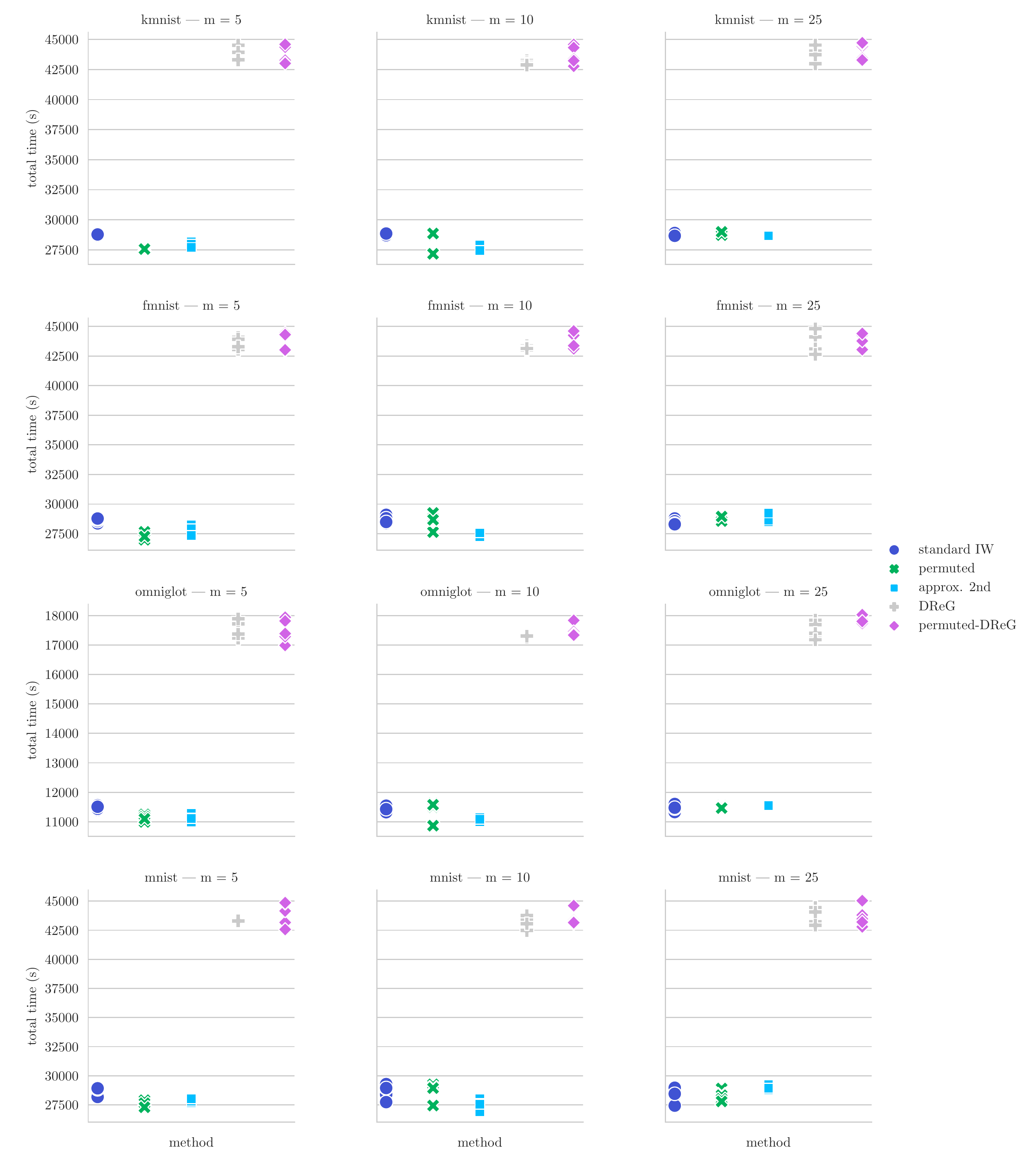}
   \caption{Distribution of the time taken to run 100 epochs for different combinations of datasets, methods and $m$, using $n =50$ samples.}
   \label{fig:vaes-times}
\end{figure}
\FloatBarrier
\section{Figures of median envelope}\label{appendix:figures-comparisons}
For some of the methods presented in the paper, we compute the median envelope during training as described in Section~\ref{sec:experiments}.
\begin{figure}[b!]
  \centering
  \includegraphics[height=0.77\textheight, trim={.6cm .6cm .4cm .5cm},clip]{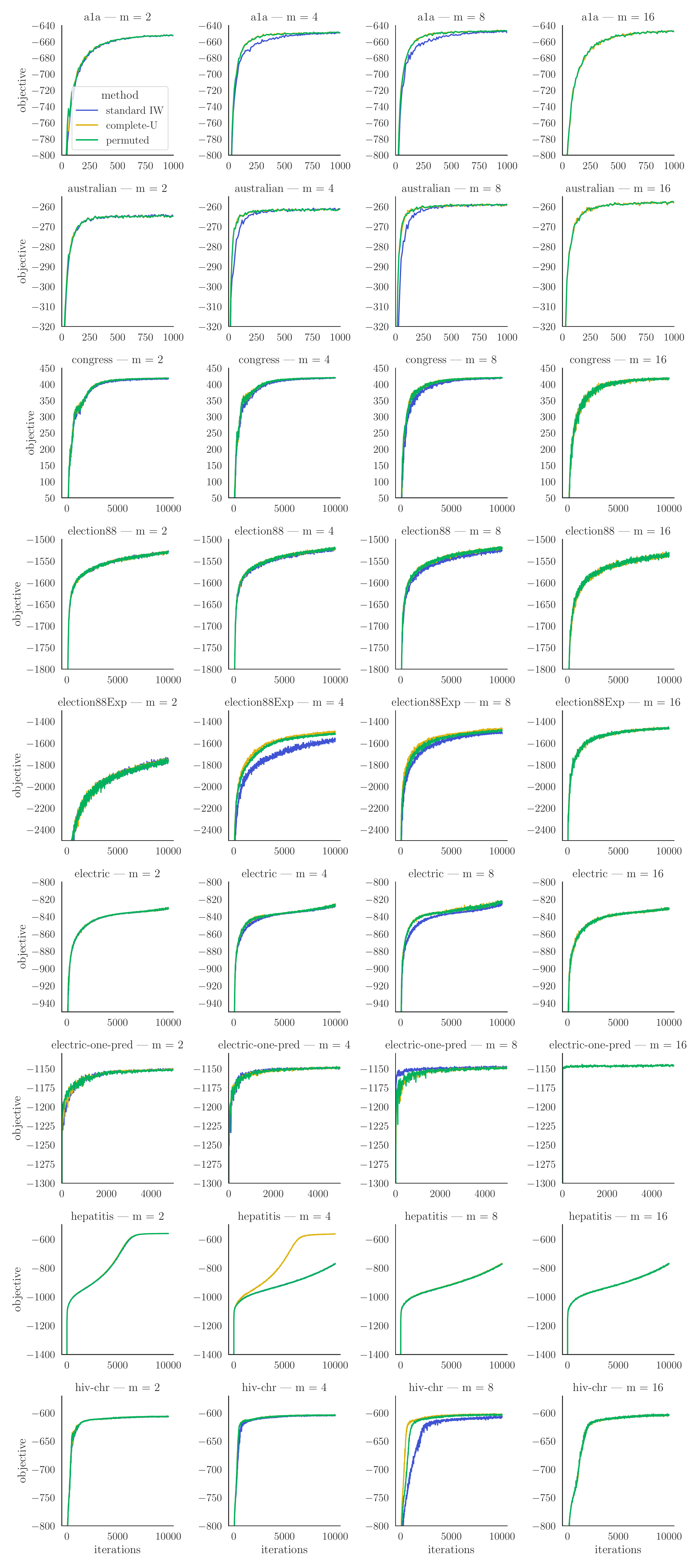}
  \includegraphics[height=0.77\textheight, trim={.5cm .6cm .5cm .5cm},clip]{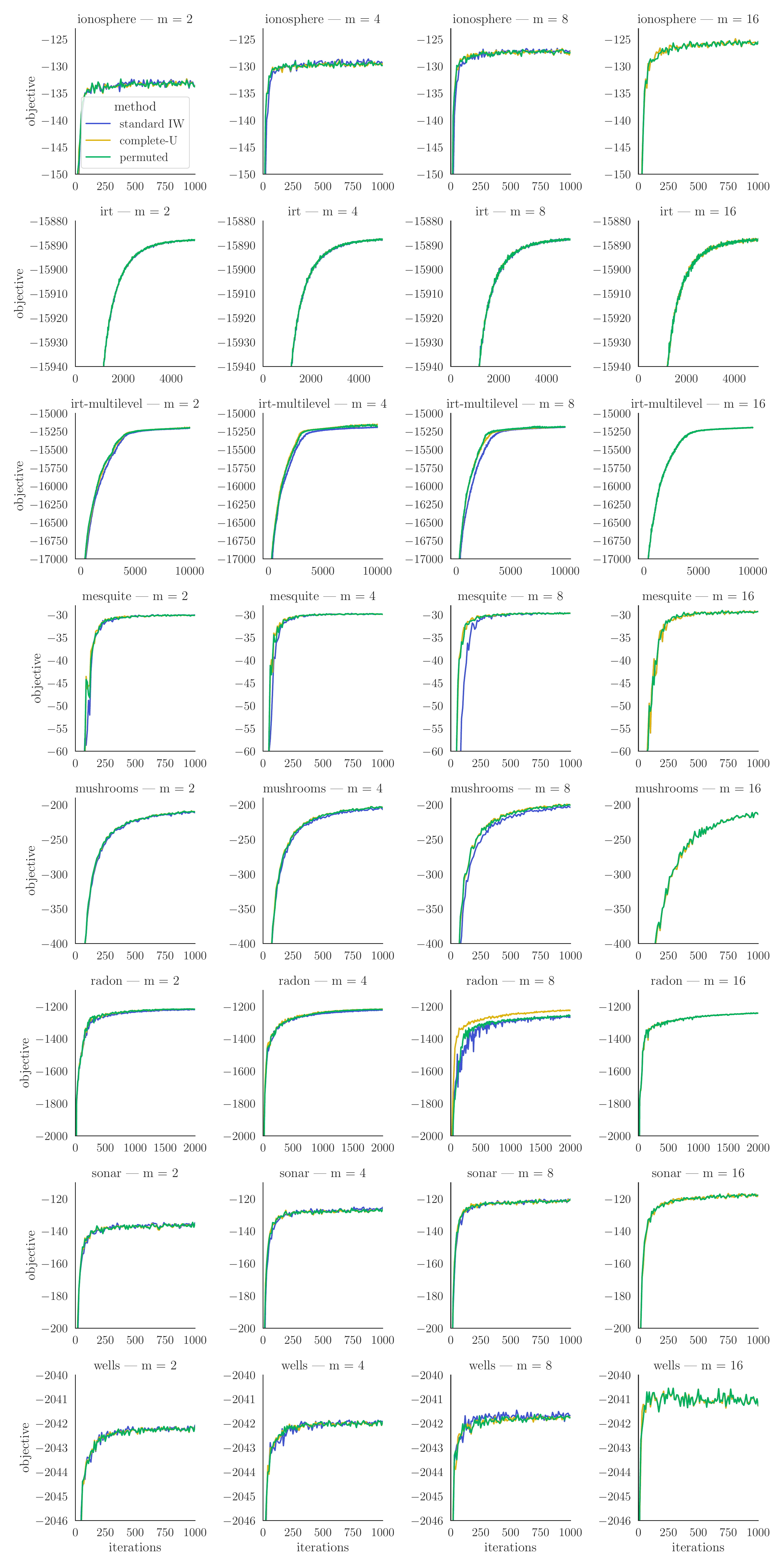}
  \caption{Median envelope for models when using a diagonal Gaussian as approximating distribution for the estimators \textcolor{complete-U}{complete-U}, \textcolor{permuted}{permuted} and the \textcolor{standard-IW}{standard IW}.}
\end{figure}

\begin{figure}
  \centering
  \includegraphics[height=0.75\textheight]{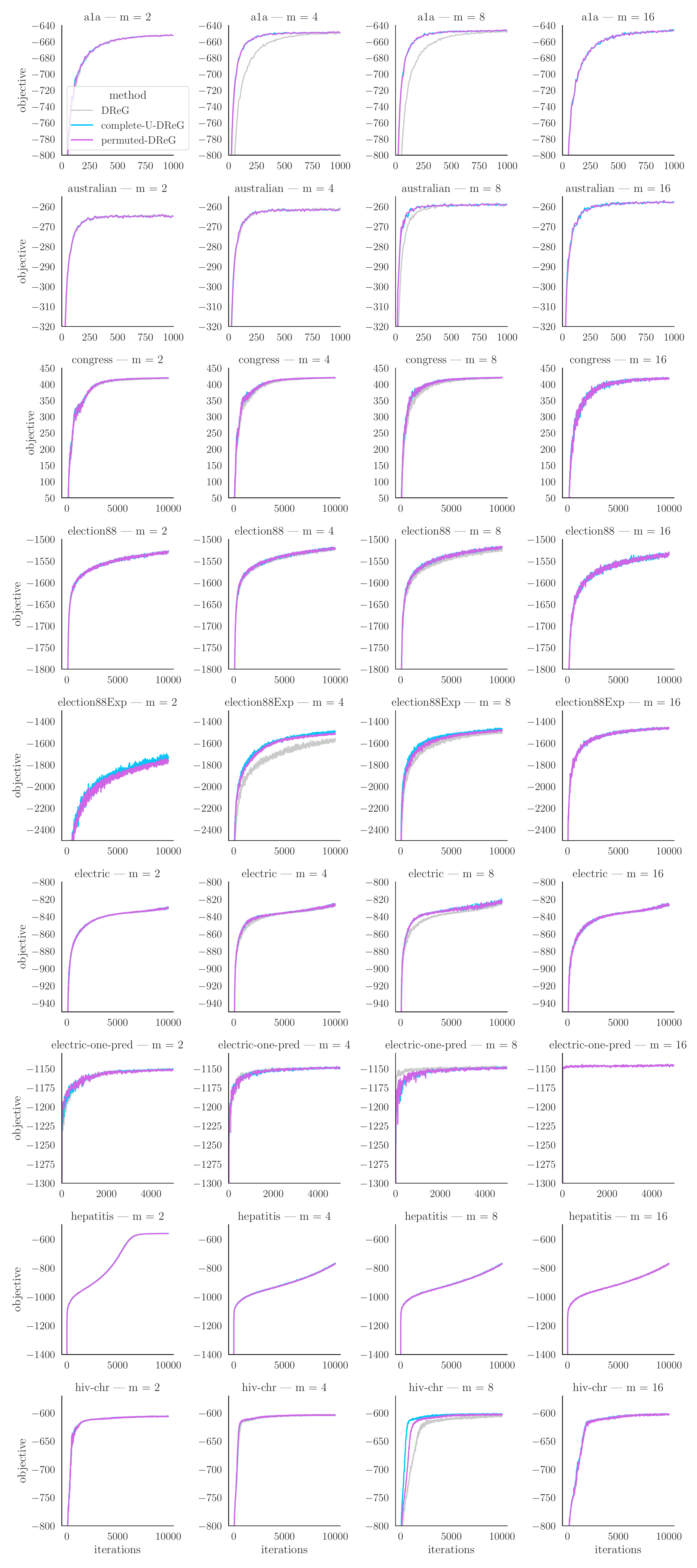}
  \includegraphics[height=0.75\textheight]{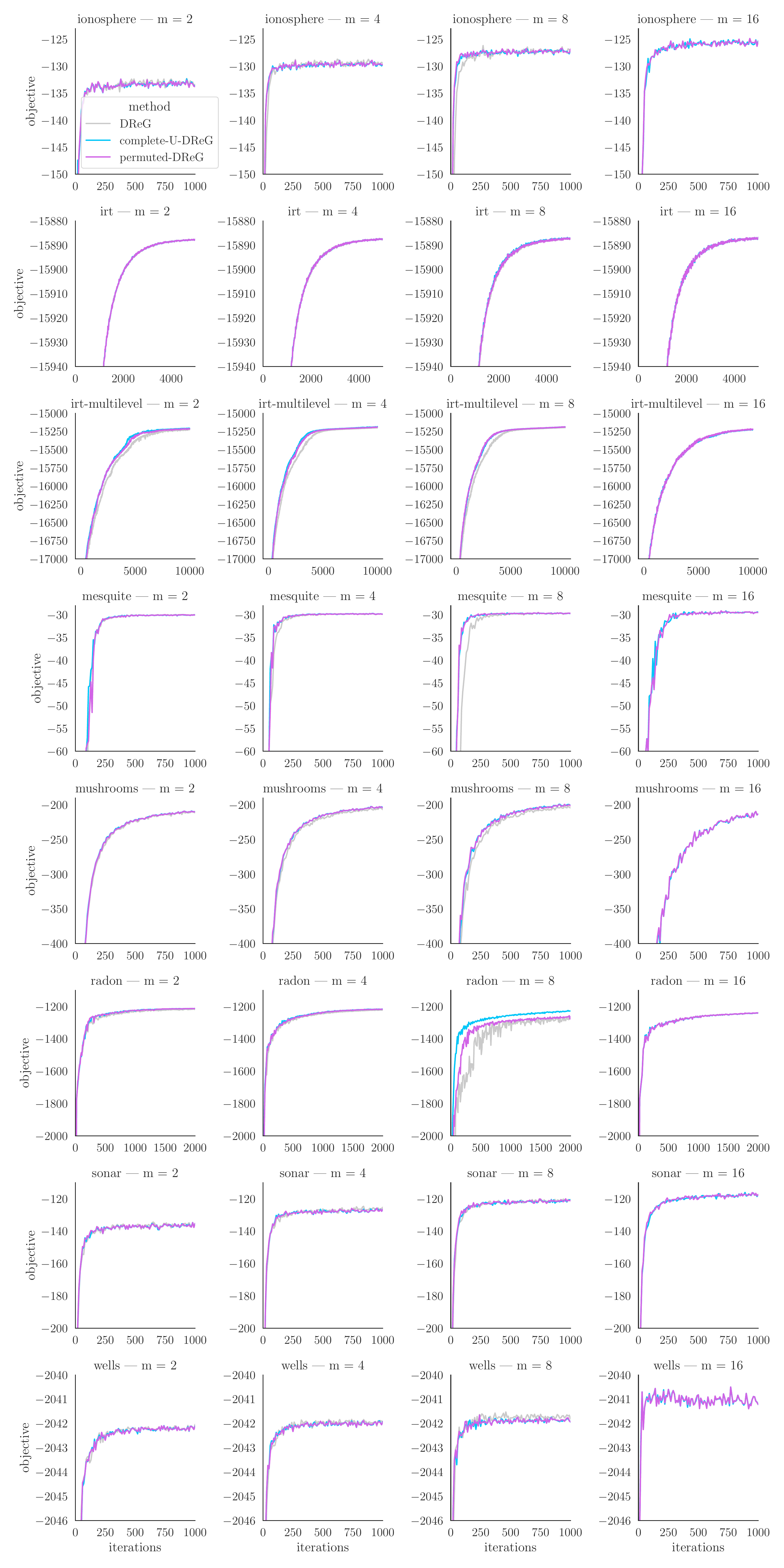}
  \caption{Median envelope for models when using a diagonal Gaussian as approximating distribution using the  \textcolor{complete-dreg}{complete-U DReG}, \textcolor{permuted-dreg}{permuted DReG} and the \textcolor{dreg}{standard DReG} gradient estimators.}
\end{figure}

\begin{figure}
  \centering
  \includegraphics[width=0.45\textwidth, valign=T]{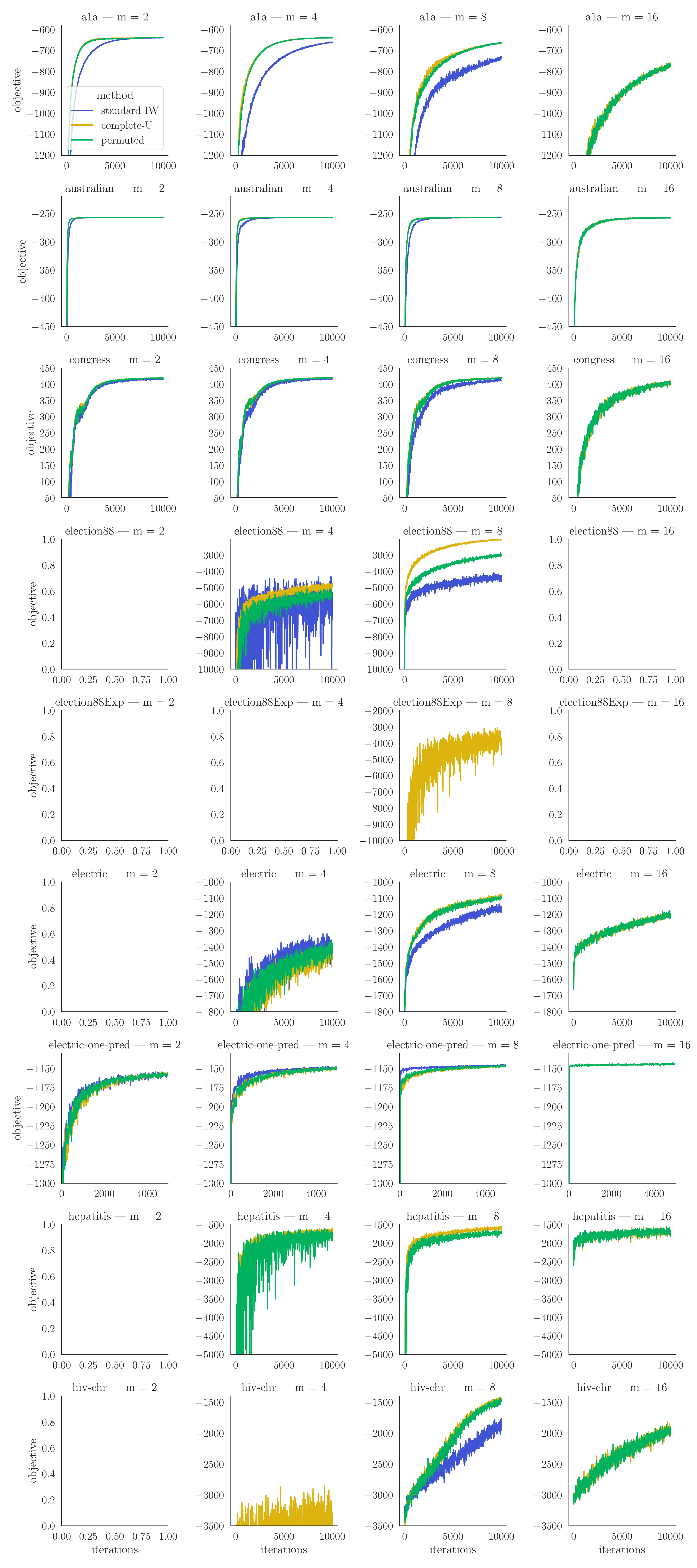}
  \includegraphics[width=0.45\textwidth, valign=T]{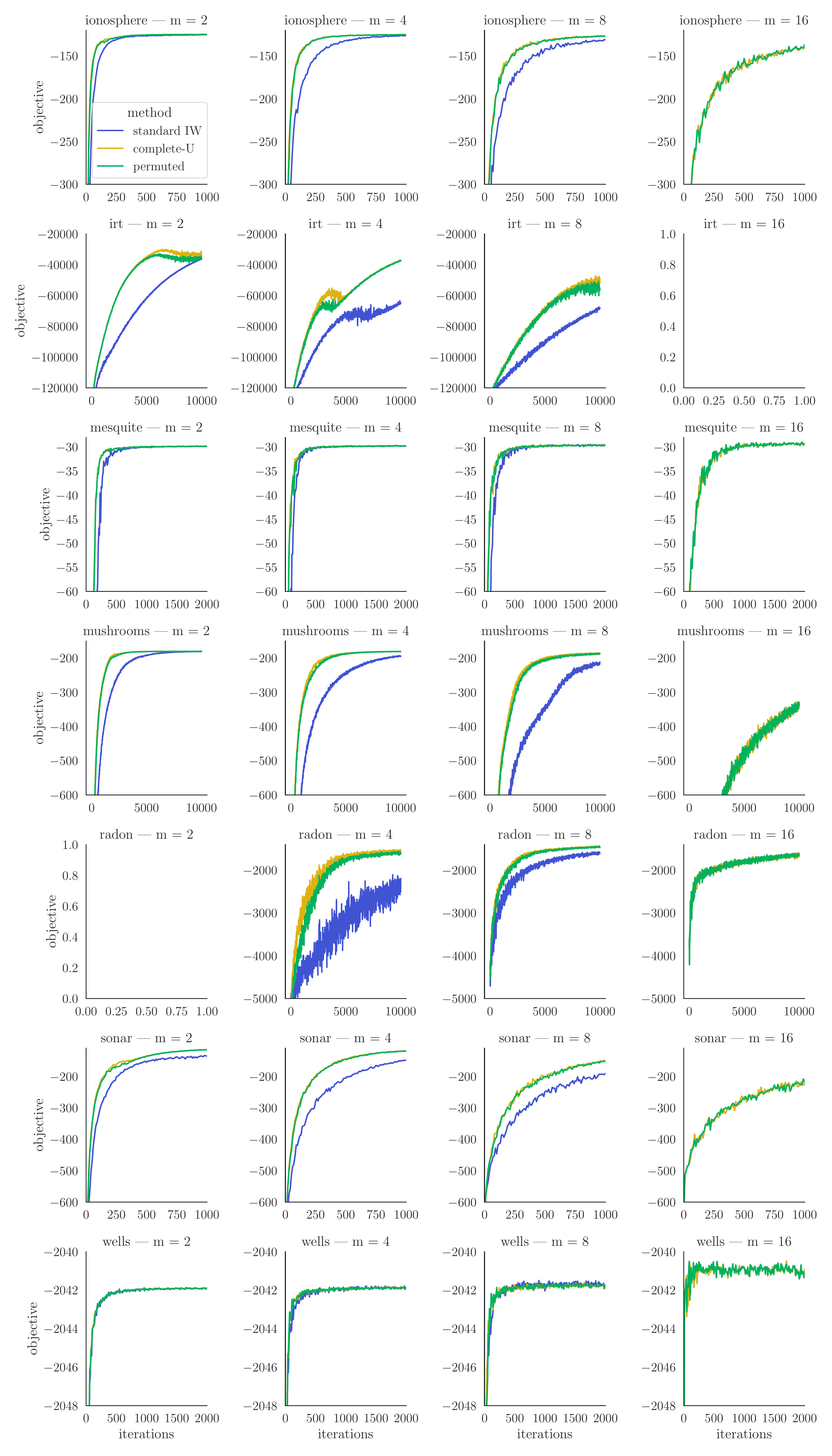}
  \caption{Median envelope for models when using a Gaussian distribution with full-rank covariance as approximating distribution for the estimators \textcolor{complete-U}{complete-U}, \textcolor{permuted}{permuted} and the \textcolor{standard-IW}{standard IW}.}
\end{figure}

\begin{figure}
  \centering
  \includegraphics[width=0.45\textwidth, valign=T]{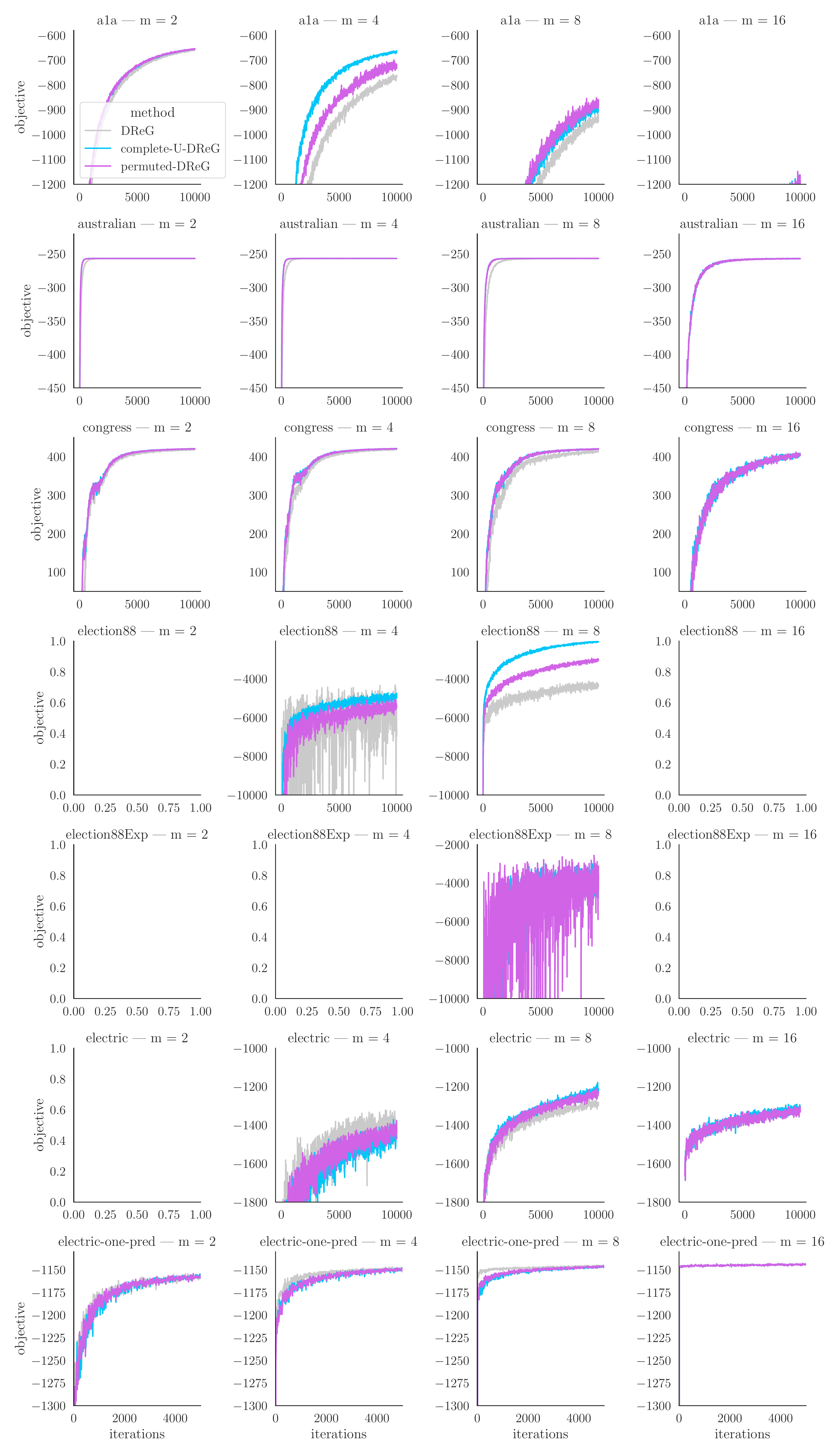}
  \includegraphics[width=0.45\textwidth, valign=T]{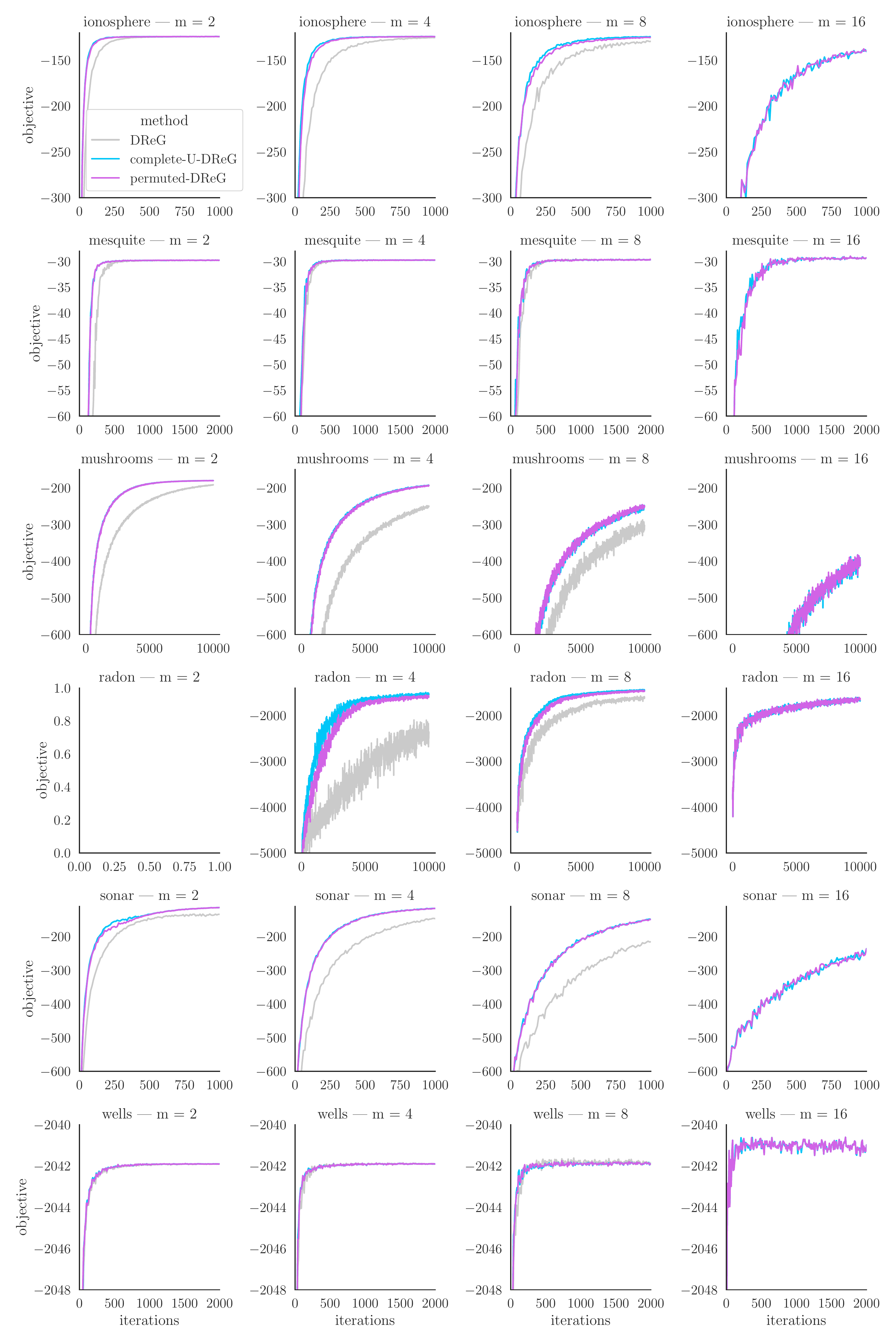}
  \caption{Median envelope for models when using a Gaussian distribution with full-rank covariance as approximating distribution using the  \textcolor{complete-dreg}{complete-U DReG}, \textcolor{permuted-dreg}{permuted DReG} and the \textcolor{dreg}{standard DReG} gradient estimators.}
\end{figure}

\FloatBarrier

  \begin{table}
    \centering
    \caption{Median objective averaged over the last 200 iterations when using the estimators \textcolor{complete-U}{complete-U}, \textcolor{permuted}{permuted} and the \textcolor{standard-IW}{standard IW}. 
    It can be seen that for at least 10 models out of 17, using either the diagonal Gaussian or the full rank covariance Gaussian approximation, the best objective is achieved with an intermediate value of $m$, and it is at least 1 nat larger than the objective with $m=16$.
    These models are: \texttt{congress}, \texttt{election88}, \texttt{election88Exp}, \texttt{electric}, \texttt{electric-one-pred}, \texttt{hepatitis}, \texttt{hiv-chr}, \texttt{irt-multilevel}, \texttt{mushrooms} and \texttt{radon}.
    Optimizations using $m=1$ are not shown.}
    \label{table:max-objective-stan}
\scriptsize
\begin{tabular}{|l|l|rrrr|rrrr|}
  \hline
        \multirow{3}{*}{model} & \multirow{3}{*}{method} & \multicolumn{4}{c|}{Diagonal Gaussian} & \multicolumn{4}{c|}{Full Rank Covariance Gaussian} \\
        \cline{3-10}
        & & \multicolumn{8}{c|}{m} \\
        \cline{3-10}
        & {} &               2  &       4  &       8  &       16 &              2  &       4  &       8  &      16 \\
  \hline
  a1a & standard IW &           -652.8 &   -649.7 &   -648.0 &   -647.6 &          -639.0 &   -660.7 &   -738.1 &  -772.1 \\
        & complete-U &           -652.6 &   -648.9 &   \textbf{-646.7} &   -647.6 &          \textbf{-637.8} &   -639.0 &   -663.9 &  -772.1 \\
        & permuted &           -652.6 &   -649.0 &   -647.0 &   -647.7 &          -637.8 &   -639.2 &   -664.1 &  -771.2 \\
  australian & standard IW &           -264.4 &   -261.2 &   -259.1 &   -258.0 &          \textbf{-256.8} &   -256.8 &   -256.8 &  -257.2 \\
        & complete-U &           -264.8 &   -261.5 &   -259.1 &   -258.0 &          -256.8 &   -256.8 &   -256.8 &  -257.2 \\
        & permuted &           -264.7 &   -261.4 &   -259.0 &   \textbf{-257.9} &          -256.8 &   -256.8 &   -256.8 &  -257.1 \\
  congress & standard IW &            417.1 &    419.5 &    419.5 &    417.6 &           416.9 &    417.2 &    412.4 &   403.9 \\
        & complete-U &            418.8 &    420.3 &    420.3 &    417.6 &           419.4 &    \textbf{420.3} &    419.2 &   403.9 \\
        & permuted &            418.5 &    420.2 &    \textbf{420.4} &    417.1 &           419.4 &    420.2 &    418.9 &   402.9 \\
  election88 & standard IW &          -1529.5 &  -1523.3 &  -1525.0 &  -1535.6 &             NaN &  -5964.2 &  -4383.2 &     NaN \\
        & complete-U &          -1529.7 &  -1521.5 &  \textbf{-1519.8} &  -1535.6 &             NaN &  -4943.1 &  \textbf{-2046.2} &     NaN \\
        & permuted &          -1529.4 &  -1520.6 &  -1520.5 &  -1535.5 &             NaN &  -5443.7 &  -3000.9 &     NaN \\
  election88Exp & standard IW &          -1755.7 &  -1570.8 &  -1502.7 &  \textbf{-1461.2} &             NaN &      NaN &      NaN &     NaN \\
        & complete-U &          -1760.0 &  -1496.5 &  -1467.0 &  -1461.2 &             NaN &      NaN &  \textbf{-3748.2} &     NaN \\
        & permuted &          -1766.6 &  -1512.3 &  -1482.7 &  -1461.9 &             NaN &      NaN &      NaN &     NaN \\
  electric & standard IW &           -830.5 &   -827.7 &   -826.1 &   -830.9 &             NaN &  -1421.1 &  -1166.2 & -1207.2 \\
        & complete-U &           -830.6 &   -827.0 &   \textbf{-823.0} &   -830.9 &             NaN &  -1459.2 &  \textbf{-1090.3} & -1207.2 \\
        & permuted &           -830.6 &   -827.1 &   -823.5 &   -830.9 &             NaN &  -1413.6 &  -1098.1 & -1203.4 \\
  electric-one-pred & standard IW &          -1148.6 &  -1147.5 &  -1146.4 &  \textbf{-1144.6} &         -1153.0 &  -1145.8 &  -1141.5 & -1141.2 \\
        & complete-U &          -1148.3 &  -1145.2 &  -1146.8 &  -1144.6 &         -1150.7 &  -1144.1 &  -1140.3 & -1141.2 \\
        & permuted &          -1148.3 &  -1146.6 &  -1146.5 &  -1144.6 &         -1151.0 &  -1144.5 &  \textbf{-1140.0} & -1141.2 \\
  hepatitis & standard IW &           -561.3 &   -774.9 &   -775.4 &   -774.4 &             NaN &      NaN &      NaN & -1693.7 \\
        & complete-U &           \textbf{-561.2} &   -564.3 &   -773.1 &   -774.4 &             NaN &  -1664.4 &  \textbf{-1592.8} & -1693.7 \\
        & permuted &           -561.2 &   -773.8 &   -775.2 &   -774.4 &             NaN &  -1779.6 &  -1715.7 & -1682.1 \\
  hiv-chr & standard IW &           -606.4 &   -604.4 &   -607.5 &   -603.8 &             NaN &      NaN &  -1879.9 & -1945.0 \\
        & complete-U &           -606.2 &   -604.0 &   \textbf{-602.6} &   -603.8 &             NaN &  -3395.8 &  \textbf{-1450.6} & -1945.0 \\
        & permuted &           -606.2 &   -604.0 &   -603.1 &   -603.7 &             NaN &      NaN &  -1486.9 & -1960.9 \\
  ionosphere & standard IW &           -133.1 &   -129.2 &   -127.2 &   \textbf{-125.6} &          \textbf{-125.3} &   -126.6 &   -132.5 &  -142.3 \\
        & complete-U &           -133.2 &   -129.6 &   -127.3 &   -125.6 &          -124.9 &   -125.2 &   -127.2 &  -142.3 \\
        & permuted &           -133.3 &   -129.6 &   -127.3 &   -125.7 &          -124.9 &   -125.2 &   -127.3 &  -142.2 \\
  irt & standard IW &         -15887.5 & -15887.1 & -15886.8 & -15886.7 &        -36563 & -64934 & -68447 &     NaN \\
        & complete-U &         -15887.4 & -15887.0 & -15886.8 & -15886.7 &        \textbf{-33230} & -37383 & -50316 &     NaN \\
        & permuted &         -15887.4 & -15887.0 & -15886.8 & \textbf{-15886.6} &        -35620 & -37547 & -54763 &     NaN \\
  irt-multilevel & standard IW &         -15204.7 & -15194.1 & -15191.3 & -15196.8 &             NaN &      NaN &      NaN &     NaN \\
        & complete-U &         -15198.7 & -15164.0 & \textbf{-15185.8} & -15196.8 &             NaN &      NaN &      NaN &     NaN \\
        & permuted &         -15200.3 & -15173.0 & -15186.2 & -15197.0 &             NaN &      NaN &      NaN &     NaN \\
  mesquite & standard IW &            -29.9 &    -29.7 &    -29.6 &    -29.3 &           -29.8 &    -29.7 &    -29.6 &   \textbf{-29.2} \\
        & complete-U &            -29.9 &    -29.8 &    -29.6 &    -29.3 &           -29.8 &    -29.7 &    -29.6 &   -29.2 \\
        & permuted &            -29.9 &    -29.7 &    -29.6 &    \textbf{-29.2} &           -29.8 &    -29.7 &    -29.6 &   -29.2 \\
  mushrooms & standard IW &           -211.6 &   -206.5 &   -204.3 &   -215.5 &          -180.8 &   -194.2 &   -215.8 &  -339.0 \\
        & complete-U &           -210.6 &   -204.3 &   \textbf{-200.8} &   -215.5 &          \textbf{-180.2} &   -180.7 &   -185.6 &  -339.0 \\
        & permuted &           -210.8 &   -204.5 &   -201.4 &   -215.4 &          -180.2 &   -180.7 &   -187.6 &  -337.3 \\
  radon & standard IW &          -1210.5 &  -1210.4 &  -1213.3 &  -1210.4 &             NaN &  -2422.9 &  -1595.8 & -1636.5 \\
        & complete-U &          -1210.5 &  -1210.2 &  \textbf{-1210.2} &  -1210.4 &             NaN &  -1548.0 &  \textbf{-1445.9} & -1636.5 \\
        & permuted &          -1210.5 &  -1210.2 &  -1211.8 &  -1210.4 &             NaN &  -1600.6 &  -1454.7 & -1645.2 \\
  sonar & standard IW &           -136.2 &   -126.3 &   -121.3 &   \textbf{-117.9} &          -138.0 &   -154.9 &   -200.9 &  -226.7 \\
        & complete-U &           -136.5 &   -127.4 &   -121.5 &   -117.9 &          \textbf{-116.6} &   -120.9 &   -156.3 &  -226.7 \\
        & permuted &           -136.5 &   -127.3 &   -121.5 &   -117.9 &          -116.7 &   -121.5 &   -158.2 &  -228.5 \\
  wells & standard IW &          -2042.1 &  -2041.9 &  -2041.7 &  \textbf{-2041.2} &         -2041.9 &  -2041.8 &  -2041.7 & \textbf{-2041.1} \\
        & complete-U &          -2042.2 &  -2042.0 &  -2041.7 &  -2041.2 &         -2041.9 &  -2041.9 &  -2041.8 & -2041.1 \\
        & permuted &          -2042.2 &  -2041.9 &  -2041.7 &  -2041.2 &         -2041.9 &  -2041.8 &  -2041.8 & -2041.1 \\
  \hline
  \end{tabular}
\end{table}

\begin{table}
    \centering
    \caption{Median objective averaged over the last 200 iterations when using the  \textcolor{complete-dreg}{complete-U DReG}, \textcolor{permuted-dreg}{permuted DReG} and the \textcolor{dreg}{standard DReG} gradient estimators.
    It can be seen that for at least 8 models out of 17, using either the diagonal Gaussian or the full rank covariance Gaussian approximation, the best objective is achieved with an intermediate value of $m$, and it is at least 1 nat larger than the objective with $m=16$.
    These models are: \texttt{congress}, \texttt{election88}, \texttt{election88Exp}, \texttt{electric}, \texttt{electric-one-pred}, \texttt{irt-multilevel}, \texttt{mushrooms} and \texttt{radon}.
    Optimizations using $m=1$ are not shown.}
    \label{table:max-objective-stan-dreg}
\scriptsize
\begin{tabular}{|l|l|rrrr|rrrr|}
  \hline
        \multirow{3}{*}{model} & \multirow{3}{*}{method} & \multicolumn{4}{c|}{Diagonal Gaussian} & \multicolumn{4}{c|}{Full Rank Covariance Gaussian} \\
        \cline{3-10}
        & & \multicolumn{8}{c|}{m} \\
        \cline{3-10}
        & {} &               2  &       4  &       8  &       16 &              2  &       4  &       8  &      16 \\
  \hline
  a1a & DReG &           -652.7 &   -649.9 &   -648.0 &   -647.0 &          -659.7 &  -770.3 &  -936.6 & -1205.4 \\
        & comp.-DReG &           -652.5 &   -648.6 &   -646.5 &   -647.0 &          -655.7 &  -667.4 &  -894.2 & -1205.4 \\
        & perm.-DReG &           -652.5 &   -648.7 &   \textbf{-646.4} &   -647.1 &          \textbf{-655.3} &  -725.2 &  -874.7 & -1209.8 \\
  australian & DReG &           -264.4 &   -261.2 &   -259.0 &   \textbf{-257.8} &          -256.7 &  -256.7 &  -256.7 &  -256.9 \\
        & comp.-DReG &           -264.7 &   -261.6 &   -259.0 &   -257.8 &          -256.7 &  -256.7 &  \textbf{-256.6} &  -256.9 \\
        & perm.-DReG &           -264.7 &   -261.5 &   -258.9 &   -257.8 &          -256.7 &  -256.7 &  -256.6 &  -256.9 \\
  congress & DReG &            417.9 &    419.7 &    419.9 &    418.2 &           418.5 &   418.9 &   413.2 &   404.5 \\
        & comp.-DReG &            419.6 &    420.5 &    420.7 &    418.2 &           420.5 &   \textbf{420.8} &   419.8 &   404.5 \\
        & perm.-DReG &            419.4 &    420.5 &    \textbf{420.7} &    417.9 &           420.4 &   420.7 &   419.8 &   404.6 \\
  election88 & DReG &          -1529.2 &  -1522.3 &  -1524.2 &  -1534.9 &             NaN & -5964.4 & -4349.2 &     NaN \\
        & comp.-DReG &          -1529.1 &  -1520.7 &  \textbf{-1518.4} &  -1534.9 &             NaN & -4950.7 & \textbf{-2079.0} &     NaN \\
        & perm.-DReG &          -1529.1 &  -1520.7 &  -1518.4 &  -1534.3 &             NaN & -5439.8 & -3041.2 &     NaN \\
  election88Exp & DReG &          -1755.8 &  -1571.9 &  -1502.0 &  -1461.9 &             NaN &     NaN &     NaN &     NaN \\
        & comp.-DReG &          -1733.2 &  -1495.9 &  -1468.8 &  -1461.9 &             NaN &     NaN & \textbf{-3664.0} &     NaN \\
        & perm.-DReG &          -1766.8 &  -1512.3 &  -1483.3 &  \textbf{-1460.1} &             NaN &     NaN & -3947.5 &     NaN \\
  electric & DReG &           -830.0 &   -827.2 &   -824.8 &   -826.2 &             NaN & -1417.4 & -1291.3 & -1314.0 \\
        & comp.-DReG &           -830.2 &   -826.1 &   \textbf{-822.0} &   -826.2 &             NaN & -1459.2 & \textbf{-1219.0} & -1314.0 \\
        & perm.-DReG &           -829.9 &   -826.3 &   -822.6 &   -826.3 &             NaN & -1427.2 & -1239.1 & -1326.1 \\
  electric-one-pred & DReG &          -1148.8 &  -1147.5 &  -1146.4 &  \textbf{-1144.6} &         -1153.0 & -1145.8 & -1141.4 & -1141.2 \\
        & comp.-DReG &          -1148.5 &  -1146.0 &  -1146.8 &  -1144.6 &         -1150.7 & -1144.1 & -1140.3 & -1141.2 \\
        & perm.-DReG &          -1148.3 &  -1146.7 &  -1146.5 &  -1144.6 &         -1151.0 & -1144.5 & \textbf{-1140.0} & -1141.2 \\
  hepatitis & DReG &           -561.3 &   -776.3 &   -775.1 &   -774.0 &             NaN &     NaN &     NaN &     NaN \\
        & comp.-DReG &           \textbf{-561.1} &   -772.0 &   -772.6 &   -774.0 &             NaN &     NaN &     NaN &     NaN \\
        & perm.-DReG &           -561.3 &   -773.5 &   -774.8 &   -774.0 &             NaN &     NaN &     NaN &     NaN \\
  hiv-chr & DReG &           -606.2 &   -604.1 &   -605.4 &   -602.7 &             NaN &     NaN &     NaN &     NaN \\
        & comp.-DReG &           -606.1 &   -603.7 &   \textbf{-602.2} &   -602.7 &             NaN &     NaN &     NaN &     NaN \\
        & perm.-DReG &           -606.1 &   -603.6 &   -602.9 &   -602.7 &             NaN &     NaN &     NaN &     NaN \\
  ionosphere & DReG &           -133.1 &   -129.3 &   -127.1 &   \textbf{-125.6} &          -124.3 &  -125.8 &  -130.7 &  -142.1 \\
        & comp.-DReG &           -133.2 &   -129.6 &   -127.2 &   -125.6 &          \textbf{-124.2} &  -124.2 &  -124.7 &  -142.1 \\
        & perm.-DReG &           -133.3 &   -129.6 &   -127.2 &   -125.6 &          -124.2 &  -124.3 &  -125.7 &  -142.0 \\
  irt & DReG &         -15887.3 & -15886.9 & -15886.6 & \textbf{-15886.3} &             NaN &     NaN &     NaN &     NaN \\
        & comp.-DReG &         -15887.3 & -15886.9 & -15886.5 & -15886.3 &             NaN &     NaN &     NaN &     NaN \\
        & perm.-DReG &         -15887.3 & -15886.9 & -15886.5 & -15886.3 &             NaN &     NaN &     NaN &     NaN \\
  irt-multilevel & DReG &         -15226.1 & -15199.4 & -15195.8 & -15224.4 &             NaN &     NaN &     NaN &     NaN \\
        & comp.-DReG &         -15206.9 & -15188.6 & \textbf{-15188.0} & -15224.4 &             NaN &     NaN &     NaN &     NaN \\
        & perm.-DReG &         -15214.0 & -15191.6 & -15188.4 & -15222.2 &             NaN &     NaN &     NaN &     NaN \\
  mesquite & DReG &            -29.9 &    -29.8 &    -29.6 &    \textbf{-29.4} &           -29.8 &   -29.7 &   -29.7 &   \textbf{-29.3} \\
        & comp.-DReG &            -29.9 &    -29.8 &    -29.6 &    -29.4 &           -29.8 &   -29.7 &   -29.7 &   -29.3 \\
        & perm.-DReG &            -29.9 &    -29.8 &    -29.6 &    -29.4 &           -29.8 &   -29.7 &   -29.7 &   -29.3 \\
  mushrooms & DReG &           -211.6 &   -206.6 &   -204.7 &   -215.9 &          -192.2 &  -251.2 &  -305.6 &  -405.3 \\
        & comp.-DReG &           -210.6 &   -204.3 &   \textbf{-201.1} &   -215.9 &          \textbf{-180.3} &  -193.6 &  -253.5 &  -405.3 \\
        & perm.-DReG &           -210.7 &   -204.4 &   -201.5 &   -215.4 &          -180.4 &  -194.3 &  -250.9 &  -400.8 \\
  radon & DReG &          -1210.5 &  -1210.3 &  -1219.9 &  -1210.3 &             NaN & -2410.8 & -1624.9 & -1650.9 \\
        & comp.-DReG &          -1210.4 &  \textbf{-1210.1} &  -1210.2 &  -1210.3 &             NaN & -1538.0 & \textbf{-1445.7} & -1650.9 \\
        & perm.-DReG &          -1210.4 &  -1210.2 &  -1212.5 &  -1210.2 &             NaN & -1593.3 & -1466.2 & -1642.4 \\
  sonar & DReG &           -136.2 &   -126.2 &   -121.1 &   \textbf{-117.6} &          -135.4 &  -152.5 &  -226.3 &  -259.6 \\
        & comp.-DReG &           -136.5 &   -127.2 &   -121.5 &   -117.6 &          \textbf{-115.2} &  -118.4 &  -155.6 &  -259.6 \\
        & perm.-DReG &           -136.5 &   -127.3 &   -121.3 &   -117.6 &          -115.3 &  -118.9 &  -156.4 &  -260.5 \\
  wells & DReG &          -2042.2 &  -2041.9 &  -2041.8 &  \textbf{-2041.2} &         -2041.9 & -2041.9 & -2041.8 & \textbf{-2041.2} \\
        & comp.-DReG &          -2042.2 &  -2042.0 &  -2041.9 &  -2041.2 &         -2041.9 & -2041.9 & -2041.9 & -2041.2 \\
        & perm.-DReG &          -2042.2 &  -2042.0 &  -2041.8 &  -2041.2 &         -2041.9 & -2041.9 & -2041.9 & -2041.2 \\
  \hline
  \end{tabular}
\end{table}

\end{document}